\documentclass{article} %
\usepackage{iclr2027_conference,times}
\usepackage{graphicx}
\usepackage{overpic}
\usepackage{xcolor}

\newcommand{\authormark}[1]{\raisebox{.35ex}{\includegraphics[height=1.8ex]{figures/icons/#1.pdf}}}
\newcommand{\DKmark}{\authormark{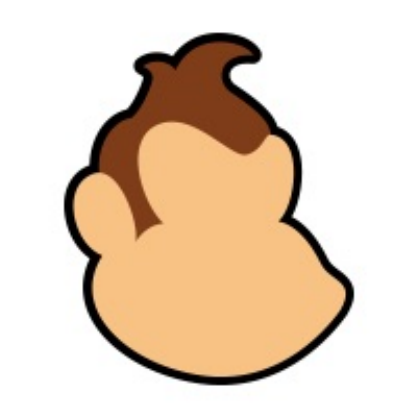}}
\newcommand{\Lmark}{\authormark{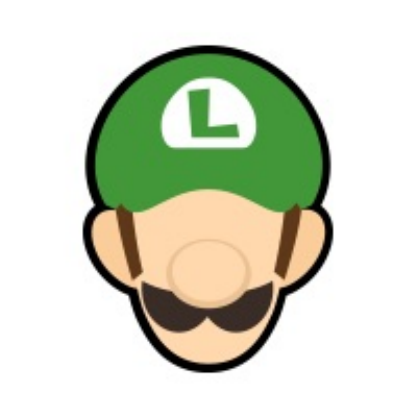}}
\newcommand{\Mmark}{\authormark{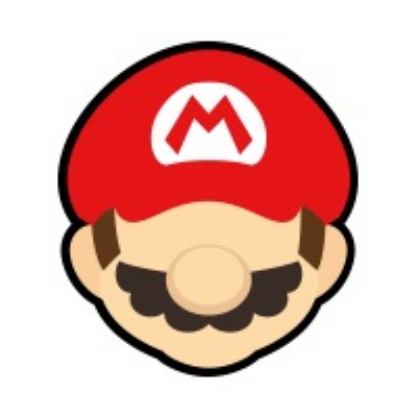}}
\newcommand{\equalcontributionmarks}{%
    \DKmark\,\Lmark\,\Mmark
}

\usepackage{amsmath,amsfonts,bm}

\def\eqref#1{equation~\ref{#1}}

\def\1{\bm{1}}

\DeclareMathAlphabet{\mathsfit}{\encodingdefault}{\sfdefault}{m}{sl}
\SetMathAlphabet{\mathsfit}{bold}{\encodingdefault}{\sfdefault}{bx}{n}

\newcommand{\R}{\mathbb{R}}

\DeclareMathOperator*{\argmax}{arg\,max}
\DeclareMathOperator*{\argmin}{arg\,min}

\DeclareMathOperator{\Tr}{Tr}

\usepackage{amsmath}
\usepackage{xspace}
\usepackage{dsfont}

\providecommand{\Ind}{\mathds{1}}
\providecommand{\norm}[1]{\left\lVert #1 \right\rVert}
\providecommand{\abs}[1]{\left\lvert #1 \right\rvert}
\providecommand{\supp}{\operatorname{supp}}
\providecommand{\Dir}{\operatorname{Dir}}
\providecommand{\diag}{\operatorname{diag}}
\providecommand{\ip}[2]{\left\langle #1, #2 \right\rangle}

\providecommand{\sg}{\operatorname{sg}}

\DeclareRobustCommand{\saeparams}{\boldsymbol{\theta}}

\DeclareRobustCommand{\SAE}{\textsc{sae}\xspace}
\DeclareRobustCommand{\CLS}{\textsc{cls}\xspace}
\DeclareRobustCommand{\SAEs}{\textsc{sae}s\xspace}
\DeclareRobustCommand{\SpatialSAE}{\textsc{Spatial-sae}\xspace}
\DeclareRobustCommand{\SpatialSAEs}{\textsc{Spatial-sae}s\xspace}

\DeclareRobustCommand{\LRH}{\textsc{lrh}\xspace}
\DeclareRobustCommand{\MFLRH}{\textsc{mf-lrh}\xspace}

\DeclareRobustCommand{\MAP}{\textsc{map}\xspace}
\DeclareRobustCommand{\MRF}{\textsc{mrf}\xspace}

\DeclareRobustCommand{\DINO}{\textsc{DINOv2}\xspace}
\DeclareRobustCommand{\ADETwentyK}{\textsc{ADE20K}\xspace}
\DeclareRobustCommand{\COCOStuff}{\textsc{COCO-Stuff}\xspace}
\DeclareRobustCommand{\RDP}{\textsc{RDP}\xspace}
\DeclareRobustCommand{\MRFs}{\textsc{mrf}s\xspace}

\usepackage{amsthm}
\usepackage{marginnote}
\usepackage{etoolbox}
\usepackage{enumitem}
\usepackage{wrapfig}
\usepackage{booktabs}
\usepackage{colortbl}
\usepackage{tabularx}
\usepackage{multirow}
\usepackage{caption}
\usepackage{placeins}
\usepackage{algorithm}
\usepackage{algpseudocode}
\usepackage[skins]{tcolorbox}
\floatstyle{plain}
\newfloat{rubric}{tbp}{lor}
\floatname{rubric}{Rubric}

\definecolor{linkcolor}{HTML}{ca8a04} %
\PassOptionsToPackage{hyphens}{url}
\usepackage[pagebackref,breaklinks,colorlinks,allcolors=linkcolor]{hyperref}
\usepackage{cleveref}
\crefname{rubric}{Rubric}{Rubrics}
\Crefname{rubric}{Rubric}{Rubrics}
\usepackage{url}
\usepackage{etoc}

\newcommand{\intuition}[2]{%
\begin{tcolorbox}[
    enhanced,
    colback=white,
    colframe=black,
    boxrule=0.8pt,
    arc=2mm,
    left=1.1em,
    right=1.1em,
    top=0.65em,
    bottom=0.65em,
    before skip=0.85em,
    after skip=0.95em
]
\noindent\textbf{#1.}\quad #2
\end{tcolorbox}%
}

\theoremstyle{plain}
\newtheorem{theorem}{Theorem}[section]
\newtheorem{proposition}[theorem]{Proposition}
\newtheorem{lemma}[theorem]{Lemma}
\newtheorem{corollary}[theorem]{Corollary}

\theoremstyle{definition}
\newtheorem{definition}[theorem]{Definition}

\theoremstyle{remark}
\newtheorem{remark}[theorem]{Remark}

\crefname{proposition}{Proposition}{Propositions}
\Crefname{proposition}{Proposition}{Propositions}
\crefname{lemma}{Lemma}{Lemmas}
\Crefname{lemma}{Lemma}{Lemmas}
\crefname{corollary}{Corollary}{Corollaries}
\Crefname{corollary}{Corollary}{Corollaries}
\crefname{definition}{Definition}{Definitions}
\Crefname{definition}{Definition}{Definitions}
\crefname{assumption}{Assumption}{Assumptions}
\Crefname{assumption}{Assumption}{Assumptions}
\crefname{remark}{Remark}{Remarks}
\Crefname{remark}{Remark}{Remarks}
\crefname{fact}{Fact}{Facts}
\Crefname{fact}{Fact}{Facts}
\AtBeginEnvironment{proposition}{\crefalias{theorem}{proposition}}
\AtBeginEnvironment{lemma}{\crefalias{theorem}{lemma}}
\AtBeginEnvironment{corollary}{\crefalias{theorem}{corollary}}
\AtBeginEnvironment{definition}{\crefalias{theorem}{definition}}
\AtBeginEnvironment{assumption}{\crefalias{theorem}{assumption}}
\AtBeginEnvironment{remark}{\crefalias{theorem}{remark}}
\AtBeginEnvironment{fact}{\crefalias{theorem}{fact}}

\title{
\begin{center}
    The Independence Prior of SAEs \\ Fragments Visual Concepts
\end{center}
}

\author{
\parbox{\linewidth}{\centering 
        Tommaso Mencattini$^{1,2,}$\DKmark
        \quad
            Giorgos Nikolaou$^{1,2,}$\Lmark \quad
            Donato Crisostomi\Mmark  \\[1ex]
            Thomas Fel$^{3}$ \quad
            Francesco Montagna$^{1}$ \quad
            Emanuele Rodolà$^{4,5}$ \quad
            Francesco Locatello$^{1}$
} \\[3ex]
{\parbox{\linewidth}{%
        \centering
        $^1$ISTA \qquad
        $^2$EPFL \qquad
        $^3$Goodfire \qquad
        $^4$Sapienza University of Rome \qquad
        $^5$Paradigma
      }
}
}

\iclrfinalcopy %
\begin{document}
\etocdepthtag.toc{main}

\maketitle

\ificlrfinal
\begingroup
\renewcommand{\thefootnote}{}
\footnotetext{%
    \begin{tabular}[t]{@{}l@{}}
        \equalcontributionmarks\enspace
        Equal contribution; author order settled via the now SOTA Mario Kart method.\\
        \phantom{\DKmark\,\Lmark\,}\Mmark\enspace Independent researcher; work done while at Sapienza University of Rome.
    \end{tabular}%
}
\endgroup
\fi

\begin{abstract}
Sparse Autoencoders (\SAEs) decompose model activations into sparse combinations of interpretable dictionary atoms. Although \SAEs are grounded in the Linear Representation Hypothesis (\LRH), their objective smuggles in an additional prior: concepts across patches are treated as independent, an assumption clearly violated by natural images and by the activations they induce. We therefore specialize \LRH to vision through the Markov-Field Linear Representation Hypothesis (\MFLRH), which adds the missing spatial dependencies to the \LRH assumptions. We thus propose \SpatialSAE as an amortized MAP estimator under the \MFLRH. \SpatialSAE consistently outperforms standard \SAEs in concept recovery and interpretability. Across four variants, it achieves a $96\%$ average win rate on synthetic concept recovery and improves interpretability on \DINO activations, at a reconstruction cost concentrated in high spatial frequencies.
\end{abstract}

\section{Introduction}
\label{sec:introduction}

Sparse autoencoders (\SAEs) have become the standard tool for extracting
human-interpretable concepts from the activations of foundation models
\citep{bricken2023monosemanticity,cunningham2024saes,gemmascope,qwenscope}. Applied to vision
models, an \SAE learns a dictionary of features, each meant to correspond
to a visual concept
\citep{fel2025rabbit_hull,stevens2025interpretable}. In practice, however,
the features recovered from images' latent representations are often \emph{spatially
fragmented}: a single concept is split across several dictionary atoms, its
activation map has holes inside the object it should cover, and neighboring
patches of the same region are explained by different atoms
(\Cref{fig:wild-qual}) \citep{chanin2024absorption,ayonrinde2024mdl,chanin2025hedging,bhalla2026sparse}. Fragmented features are harder to interpret and
harder to trust, since the atom that nominally detects a concept fails to
fire on parts of it. %

\begin{figure}[t]
    \centering
    \includegraphics[width=0.9\linewidth]{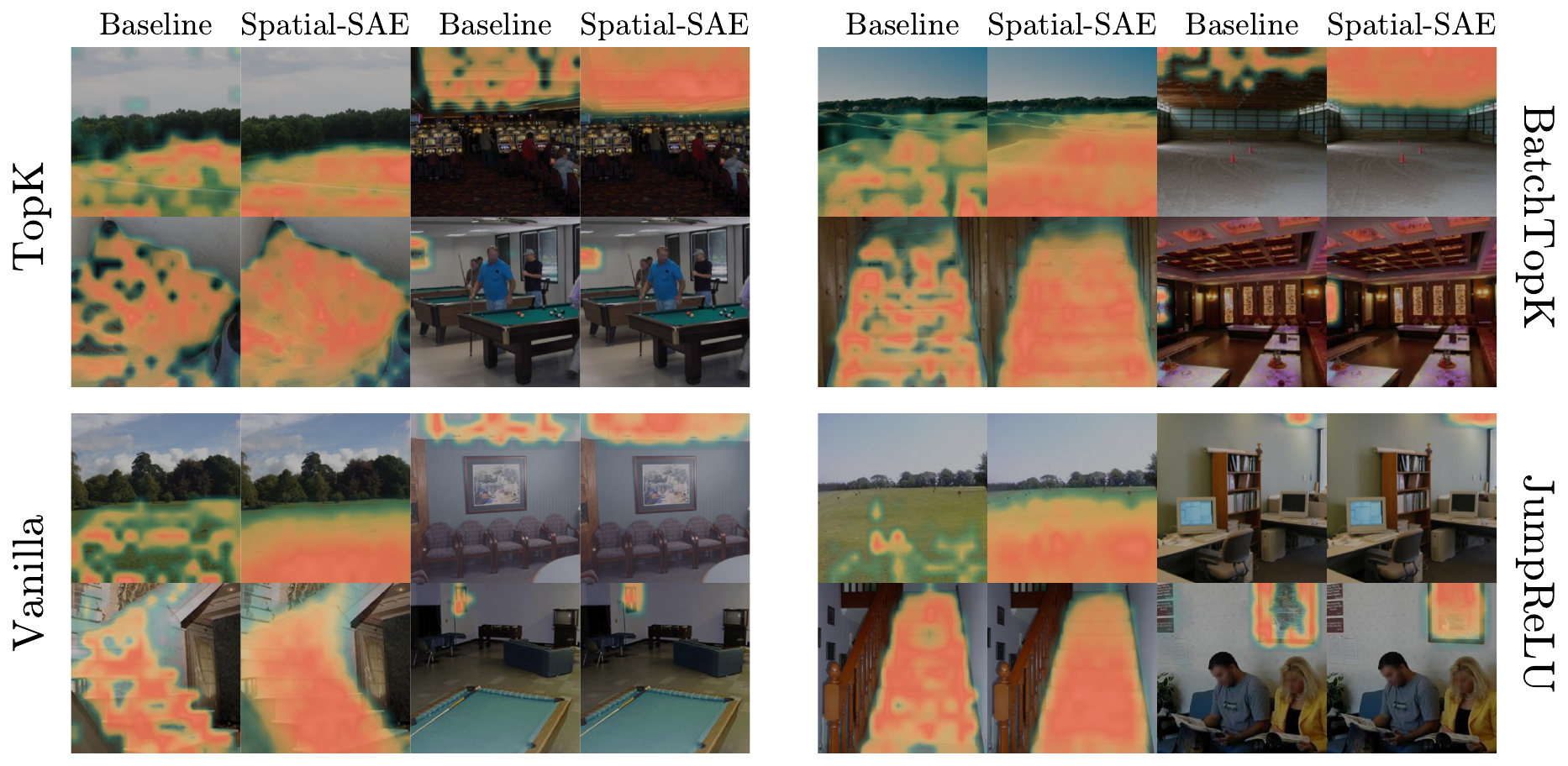}
    \par\medskip
    \includegraphics[width=0.85\linewidth]{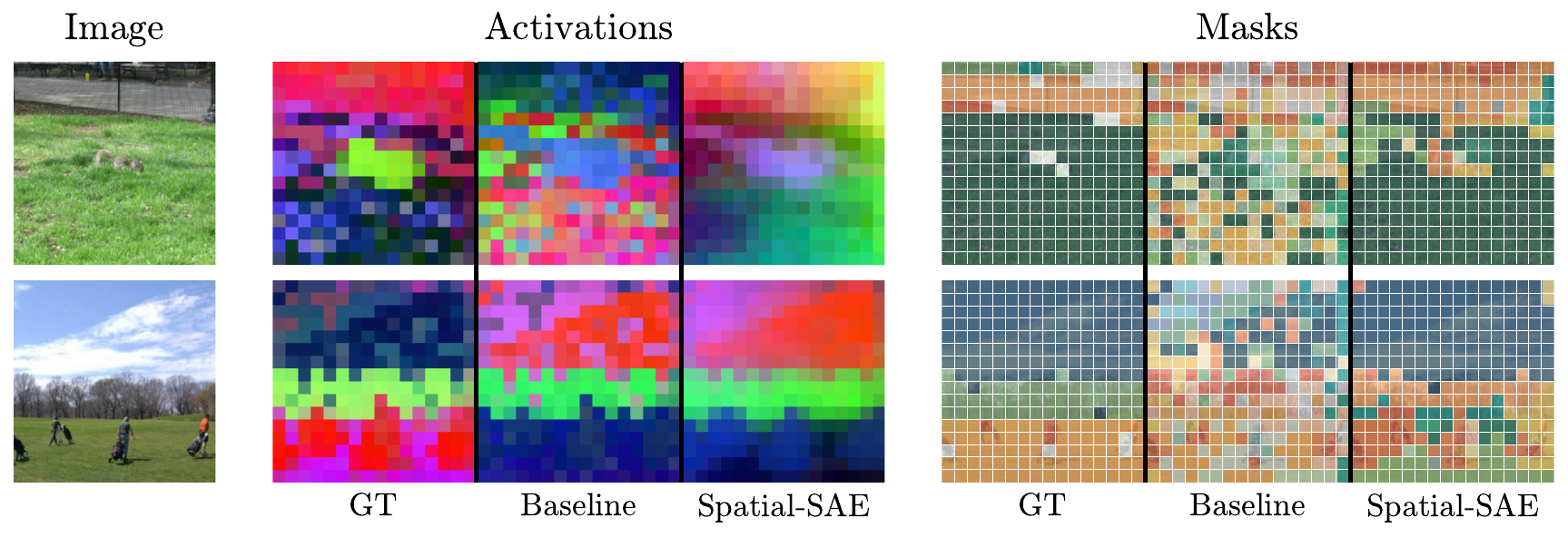}
    \caption{\textbf{Spatial organization at three levels.}
    \textbf{Top:} Baseline and \SpatialSAE feature maps across four sparsifiers,
    illustrating feature support for grass, ceiling, staircase, and painting
    (reading left to right, then top to bottom).
    \textbf{Bottom:} PCA-RGB views of original and reconstructed activations,
    alongside ground-truth segments and strongest-atom assignments.
    Together, these views assess feature support, spatial atom organization,
    and reconstruction structure.}
    \label{fig:wild-qual}
\end{figure}

Two properties of the \SAE objective make fragmentation likely. The first is the rate--distortion--polysemanticity (\RDP) tradeoff~\citep{mencattini2026rate}. At a fixed sparsity budget, the dictionary that best reconstructs
activations can be the one whose atoms are polysemantic, so pieces of a
concept may reconstruct better than the concept itself. The second, which we address in this paper, is an implicit
prior. \SAEs are justified by the Linear Representation Hypothesis (\LRH),
under which each activation is a sparse linear combination of dictionary
directions \citep{elhage2022superposition,park2024lrh,fel2025rabbit_hull}, and \SAE training
amortizes maximum a posteriori (\MAP) inference under this model
\citep{lubana2025priors_time}. 
\LRH, however, describes one
activation at a time; the \SAE objective adds an inductive bias via a prior according to which neighboring patches select their
concepts \emph{independently} \citep{lubana2025priors_time}. This assumption
does not hold for vision-model activations. Objects, textures, and regions
extend across many adjacent patches, and we show that this dependence carries
over from the image to the activations of a frozen vision transformer (\Cref{fig:patch-locality}). An objective that
cannot express these dependencies has no spatial criterion for preferring a feature that covers a region over
one that covers scattered pieces of it, and existing remedies
\citep{fel2025archetypal,saemp}
leave this assumption in place.

We replace the independence prior with one that matches vision-model
activations. The \textbf{Markov-Field Linear Representation Hypothesis
(\MFLRH)} keeps every assumption of \LRH and adds a single spatial one: the
code of a patch is the sum of a \emph{smooth} component, shared with its
neighbors on the patch grid, and a sparse \emph{innovation}\footnote{We borrow the term \emph{innovation} from the state-space
and Kalman-filtering literature, where it denotes the part of an observation
that cannot be predicted from past observations \citep{kailath1968innovations,kalman1960new}.
} component that
carries genuinely local evidence such as boundaries and small objects.

Under \MFLRH, \MAP inference is a graph-regularized
sparse-coding problem, and amortizing it yields \textbf{\SpatialSAE}, whose
encoder estimates the smooth code of a patch from its neighbors and
explains the remainder locally. The spatial prior enters
only through the training loss, so \SpatialSAE is a drop-in extension of
existing \SAE families, where fragmented explanations, previously free,
now carry a cost. On synthetic data with known dictionaries, concept recovery
improves in $288$ out of $300$ settings; on frozen \DINO activations, features
become more concept-aligned and region-shaped for all four sparsifiers we
test, at a reconstruction cost that is concentrated in the high spatial
frequencies where concepts do not live. In practice, \Cref{fig:wild-qual} shows the outcome: tokenwise \SAEs learn
incomplete, incoherent, and uninterpretable features whose supports are fragmented across the
image; \SpatialSAE learns concept-aligned features that are interpretable
at a glance.

\noindent\textbf{Contributions.}
\textbf{(i)} We show that tokenwise \SAEs complete \LRH with a
patch-independence prior, and that vision-model activations violate it
(\Cref{sec:spatial-structure}).
\textbf{(ii)} We introduce \MFLRH, a spatial specialization of \LRH, and
derive \SpatialSAE as amortized \MAP inference under it (\Cref{sec:spatialsae}).
\textbf{(iii)} We show that \SpatialSAE recovers the ground-truth concepts of synthetic data more reliably than standard \SAEs, and learns more concept-aligned features on real \DINO activations (\Cref{sec:wild}).

\section{Background}
\label{sec:background}

\paragraph{Notation.}
Matrices are denoted by uppercase bold symbols $(\mathbf X)$, vectors by lowercase bold
symbols $(\mathbf x)$, scalars by unbolded letters ($x$), and sets by uppercase calligraphic symbols ($\mathcal X$).

\paragraph{Linear Representation Hypothesis.}
The Linear Representation Hypothesis (\LRH) models foundation model activations as sparse
linear combinations of directions in an overcomplete, approximately
non-redundant dictionary
\citep{elhage2022superposition, park2024lrh, LRHorigin, fel2025rabbit_hull}.
Following \citet{saemp,fel2025rabbit_hull}, one can define it formally: 

\begin{definition}[Linear Representation Hypothesis]
\label{def: lrh}
The activation $\mathbf{h}\in\R^d$ satisfies \LRH if there exists a dictionary $\mathbf{D}\in\R^{c\times d}$ and a sparse coefficient vector $\mathbf{z}\in\R^c$ such that
$\mathbf{h}\approx\mathbf{z}\mathbf{D}$ and
\begin{equation}
\label{eq:background-lrh}
\left\{
\begin{array}{lll}
\textnormal{(i) Overcompleteness:}
& c\gg d, \\[0.2em]
\textnormal{(ii) Quasi-orthogonality:}
& \displaystyle
  \mu(\mathbf{D})
  :=
  \max_{a\neq b}
  \abs{\mathbf{d}_a\mathbf{d}_b^\top}
  \leq \varepsilon,
& \norm{\mathbf{d}_a}_2=1
  \quad \forall a, \\[0.2em]
\textnormal{(iii) Sparsity:}
& \abs{\supp(\mathbf{z})}
  \leq k\ll c.
\end{array}
\right.
\end{equation}
\end{definition}
Applied row-wise, this gives
$\mathbf{H}\approx\mathbf{Z}\mathbf{D}$.
\looseness=-1Concept extraction under \LRH is therefore an inverse problem: recover the
dictionary directions $\mathbf{D}$ and the sparse activation matrix
$\mathbf{Z}$ \citep{hindupur2025projecting_assumptions}.

\paragraph{Sparse coding and SAEs.}
Sparse coding addresses this inverse problem by jointly optimizing the
dictionary and one sparse code per activation
\citep{olshausen1996,olshausen1997,elad2010}:
\begin{equation}
\label{eq:background-sparse-coding}
(\mathbf{Z}^{\star},\mathbf{D}^{\star})
\in
\argmin_{\mathbf{Z},\mathbf{D}}
\frac{1}{n}
\sum_{i=1}^{n}
\left[
    \norm{\mathbf{h}_i-\mathbf{z}_i\mathbf{D}}_2^2
    +
    \lambda\mathcal{R}(\mathbf{z}_i)
\right].
\end{equation}
where $\mathcal R$ promotes sparsity, for example $\mathcal{R}(\mathbf{z}) = \|\mathbf{z}\|_1$. Solving Equation~\ref{eq:background-sparse-coding}
requires a separate optimization for every activation. An \SAE amortizes this
cost by training a shared parametric encoder $f_{\saeparams}$ to predict $\mathbf{z}_i = f_{\saeparams}(\mathbf{h}_i)$
in a forward pass \citep{bricken2023monosemanticity, cunningham2024saes}. The encoder replaces repeated optimization with a forward pass, but may return
a suboptimal sparse code; its approximation cost is measured by the
amortization gap \citep{oneill2024compute_optimal}.

\section{Spatial Structure in Vision-Model Activations}
\label{sec:spatial-structure}

The \LRH specifies how each activation is composed, but not how the codes of
different patches should relate. In \Cref{sec:patch-independence}, we identify
the patch-independence prior implicit in tokenwise \SAEs, examine its
consequences for dictionary learning, and contrast it with the spatial
dependencies observed in vision-model activations. In \Cref{sec:mf-lrh},
we introduce \MFLRH, specializing \LRH with a spatial prior over
complementary smooth and innovation fields.

\subsection{\SAEs Add a Patch-Independence Prior}
\label{sec:patch-independence}
The \LRH calls for sparsity and overcompleteness, but the \SAE objective
also encodes an implicit prior of independence between tokens.
\citet{lubana2025priors_time} make this prior explicit through a
probabilistic interpretation of \SAE training. They show that, under
a Gaussian noise model, minimizing the \SAE objective is
equivalent to finding the \textit{maximum a posteriori (\MAP) estimate of the
latent codes} under a sparsity-favoring prior,
$p(\mathbf{Z}) \propto \exp(-\lambda \mathcal{R}(\mathbf{Z}))$ (e.g., $\mathcal R$ could be the $\ell_1$ norm).
Crucially, the sparsity penalty in
Equation~\ref{eq:background-sparse-coding} applies separately to each
token: $\mathcal{R}(\mathbf{Z}) = \sum_i \mathcal{R}(\mathbf{z}_i)$.
The corresponding prior therefore factorizes as
$p(\mathbf{Z}) = \prod_{i=1}^{p} p(\mathbf{z}_i)$,
treating different tokens' latent codes as independent
(Proposition~\ref{prop:app-spatial-independence}).
Thus, beyond encouraging accurate reconstruction and sparse
representations, the objective \emph{imposes a prior that does not account for dependencies across tokens}.
\citet{lubana2025priors_time} argue that this assumption is detrimental for language models, since an i.i.d.\ prior over tokens ignores their temporal structure.

\noindent Similarly, we argue that this prior is mismatched to
dictionary learning on vision models. Although activations may already encode context,
the tokenwise \SAE objective contains no explicit interaction between
neighboring patch codes. For a fixed encoder and dictionary, permuting the
rows of $\mathbf{H}$ before encoding simply permutes $\mathbf{Z}$:
each activation selects the same atoms and incurs
the same cost. The objective therefore assigns no additional cost based on
whether a patch's explanation fits with its neighborhood. Thus, the
independence prior supplies no criterion for preferring explanations that fit
together spatially, making it a mismatched inductive prior for obtaining
spatially consistent explanations of vision activations.

\begin{proposition}[Permutation invariance of the tokenwise objective]
\label{prop:perm-invariance}
Let $\mathbf{P}\in\R^{p\times p}$ be any permutation matrix. For the
row-additive tokenwise \SAE objective defined above,
$\mathcal{L}_{\SAE}
(\mathbf{P}\mathbf{Z},\mathbf{D} \, ; \, \mathbf{P}\mathbf{H})
=
\mathcal{L}_{\SAE}
(\mathbf{Z},\mathbf{D};\mathbf{H})$.
Consequently, a tokenwise \SAE objective does not distinguish a coherent patch
layout from an arbitrarily permuted layout, provided the activations and codes
are permuted together.
\end{proposition}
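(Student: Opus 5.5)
The plan is to write the tokenwise objective explicitly as a sum over rows and observe that a permutation only reorders the summands. Concretely, I will take
\[
\mathcal{L}_{\SAE}(\mathbf{Z},\mathbf{D};\mathbf{H})
=\frac{1}{p}\sum_{i=1}^{p}\ell(\mathbf{z}_i,\mathbf{h}_i;\mathbf{D}),
\qquad
\ell(\mathbf{z},\mathbf{h};\mathbf{D})=\norm{\mathbf{h}-\mathbf{z}\mathbf{D}}_2^2+\lambda\mathcal{R}(\mathbf{z}),
\]
which is the row-additive form of Equation~\ref{eq:background-sparse-coding} with $\mathcal{R}(\mathbf{Z})=\sum_i\mathcal{R}(\mathbf{z}_i)$. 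The property I use is that each summand depends only on the pair $(\mathbf{z}_i,\mathbf{h}_i)$ and on $\mathbf{D}$, which is shared by all rows. The argument also covers the amortized case $\mathbf{z}_i=f_{\saeparams}(\mathbf{h}_i)$. There, a tokenwise encoder commutes with row permutations, $f_{\saeparams}(\mathbf{P}\mathbf{H})=\mathbf{P}f_{\saeparams}(\mathbf{H})$, because it acts on each row separately.

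\textbf{Step 1: how a permutation acts on rows.} Write $\mathbf{P}$ as the matrix of a bijection $\sigma$ of $\{1,\dots,p\}$, so that row $i$ of $\mathbf{P}\mathbf{X}$ equals $\mathbf{x}_{\sigma(i)}$ for any matrix $\mathbf{X}$ with $p$ rows. Applying this to both arguments, row $i$ of $\mathbf{P}\mathbf{Z}$ is $\mathbf{z}_{\sigma(i)}$ and row $i$ of $\mathbf{P}\mathbf{H}$ is $\mathbf{h}_{\sigma(i)}$. Moreover, row $i$ of $\mathbf{P}\mathbf{H}-\mathbf{P}\mathbf{Z}\mathbf{D}=\mathbf{P}(\mathbf{H}-\mathbf{Z}\mathbf{D})$ is $\mathbf{h}_{\sigma(i)}-\mathbf{z}_{\sigma(i)}\mathbf{D}$. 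This uses only associativity of matrix multiplication: $\mathbf{D}$ multiplies on the right and $\mathbf{P}$ on the left.

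\textbf{Step 2: reindex the sum.} Substituting Step 1 into the objective gives
\[
\mathcal{L}_{\SAE}(\mathbf{P}\mathbf{Z},\mathbf{D};\mathbf{P}\mathbf{H})=\frac{1}{p}\sum_{i=1}^{p}\ell(\mathbf{z}_{\sigma(i)},\mathbf{h}_{\sigma(i)};\mathbf{D}).
\]
Since $\sigma$ is a bijection, the change of index $j=\sigma(i)$ turns this into the original sum, which proves the equality. The "consequently" clause follows immediately. A coherent layout $(\mathbf{Z},\mathbf{H})$ and any jointly permuted layout $(\mathbf{P}\mathbf{Z},\mathbf{P}\mathbf{H})$ receive identical loss, so the objective carries no information about spatial arrangement. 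One can add that the minimizers are permutation-equivariant: if $\mathbf{Z}^\star$ is optimal for $\mathbf{H}$, then $\mathbf{P}\mathbf{Z}^\star$ is optimal for $\mathbf{P}\mathbf{H}$, because the map $\mathbf{Z}\mapsto\mathbf{P}\mathbf{Z}$ is a bijection on codes.

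\textbf{Main obstacle.} There is no real technical difficulty. The one point that needs care is stating exactly which objectives qualify as "row-additive". The argument requires that the penalty decompose as $\sum_i\mathcal{R}(\mathbf{z}_i)$, and that any batch-level constraint be symmetric in the rows. Examples of such symmetric constraints are a batch-TopK budget or a global sparsity count, which depend only on the multiset of rows. I would include a remark that these symmetric variants satisfy the same invariance, while penalties coupling neighbouring rows, such as the graph term introduced later under \MFLRH, break it.
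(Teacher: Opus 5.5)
Your proof is correct and takes essentially the paper's approach. The paper treats the two terms separately: it uses $\norm{\mathbf{P}(\mathbf{H}-\mathbf{Z}\mathbf{D})}_F^2=\norm{\mathbf{H}-\mathbf{Z}\mathbf{D}}_F^2$ for reconstruction and notes that the row-additive penalty is unchanged by reordering rows, which your single reindexing via the bijection $\sigma$ covers at once (your remarks on tokenwise encoders, symmetric batch constraints, and equivariant minimizers are correct but not needed).
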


The proof is given in Appendix~\ref{app:vanilla-map}.
To see why this invariance may matter, consider how recognizing a tree
helps us interpret a nearby green region as foliage rather than grass:
the interpretations of the parts support one another
\citep{wagemans2012gestalt,biederman1972scenes,palmer1975context}.
Just as we interpret image regions, an \SAE interprets their activation
vectors through dictionary features. If these activations retain spatial
dependencies, the features explaining neighboring activations should
inform those selected for the current one. Although each activation
already carries context, the objective does not coordinate
these choices. We next show that these dependencies persist in vision-model activations, motivating replacing the mismatched independence prior implied by \SAEs.

\begin{figure*}[t]
    \centering
    \begin{overpic}[width=\textwidth]{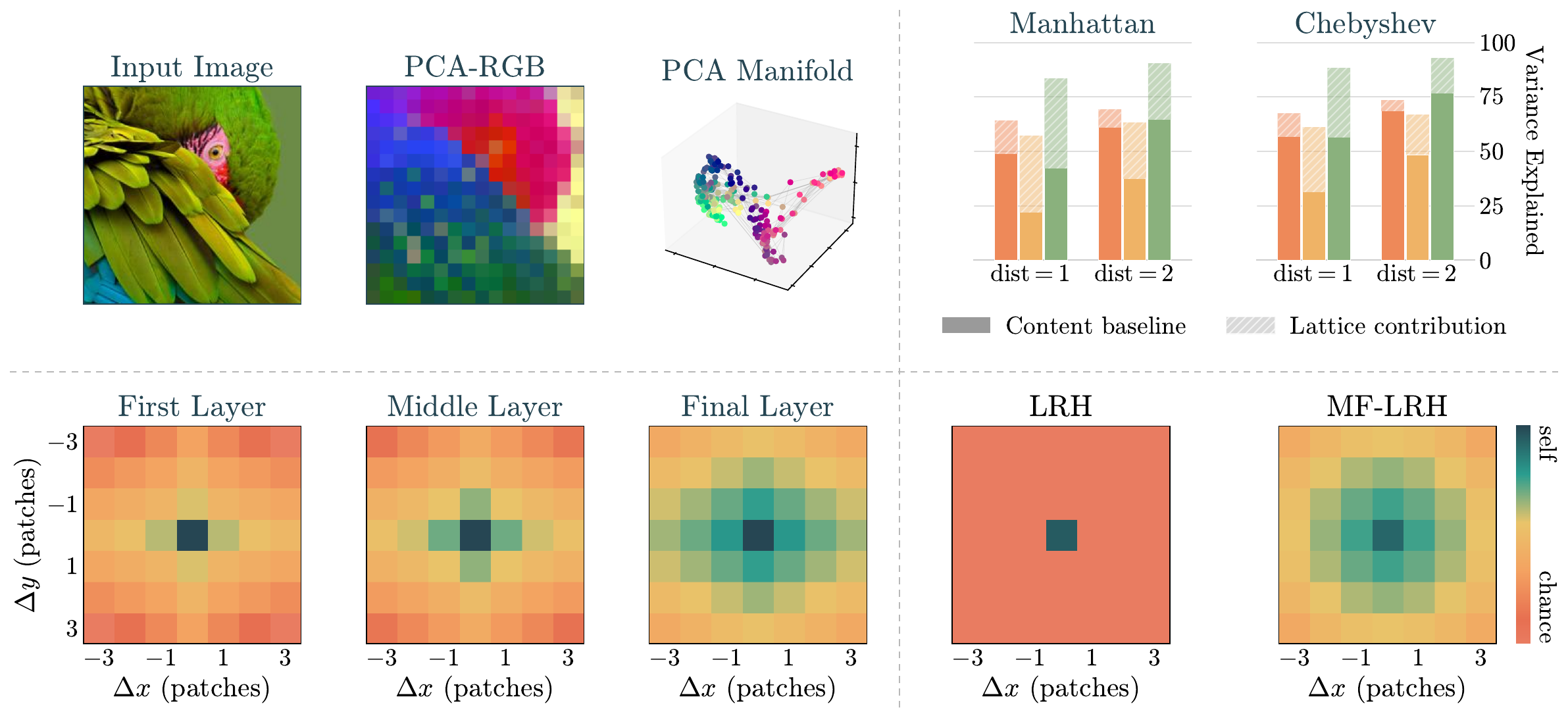}
        \put(0.7,43){\scriptsize (a)}
        \put(58,43){\scriptsize (b)}
        \put(0.7,19.6){\scriptsize (c)}
        \put(58,19.6){\scriptsize (d)}
    \end{overpic}
    \caption{
    \textbf{Patch activations retain spatial structure.}
    (a) PCA-RGB and 3D PCA views of \DINO activations show spatial
    organization. (b) Variance explained by
    Manhattan and Chebyshev neighborhoods. Solid bars are within-image shuffled
    controls; hatching is the additional contribution of the true layout.
    (c) Measured offset-similarity kernels across depth.
    (d) Measured offset-similarity kernels from an \LRH model and the
    \MFLRH model used in our synthetic experiments.
    }
    \label{fig:patch-locality}
\end{figure*}

\paragraph{Patch activations are spatially dependent.} \label{par:patch}
Figure~\ref{fig:patch-locality} provides three complementary pieces of
evidence against independence: 
\textbf{(a) The visualizations reveal recognizable spatial patterns}.
Coloring patches by their activations reveals coherent image regions.
Even after averaging activations across images at each patch position,
their PCA projection still resembles the image grid.
\textbf{(b)} True neighbors explain more of a patch's activation than equally sized neighborhoods of randomly chosen patches from the same image. The shuffled neighborhoods measure what shared image content alone explains, and the \emph{lattice contribution} is the additional variance explained only when neighbors are taken from the true grid layout. This gap shows that spatial proximity provides information beyond shared image content.
\textbf{(c) Nearby patches remain more similar than distant ones}
from the first to the final layer, showing that this spatial structure
persists throughout the network. These observations agree with prior work showing that self-supervised ViT
features organize around object regions \citep{caron2021dino} and that ViTs
trained by gradient descent provably learn to group spatially related patches
through their positional attention \citep{jelassi2022spatial}.
Thus, neighboring patch activations provide predictive evidence about one
another that a factorized prior cannot exploit. Next, we introduce
an extension of \LRH{} that incorporates this dependence while preserving a
separate channel for genuinely local variation.

\subsection{Markov-Field Linear Representation Hypothesis}
\label{sec:mf-lrh}
To replace the independence prior, we need a model of how patch codes depend on each other. In images, this dependence is mostly local: an object or region typically spans several adjacent patches, so a concept present at one patch is likely present at its neighbors.
Markov random fields (\MRFs) have long formalized such local interactions in
classical computer vision \citep{prince2012computer_vision}, and they combine naturally with sparsity: an \MRF prior can favor sparse supports that cluster spatially over equally sparse but scattered ones \citep{cevher2008sparse_mrf}.

\begin{figure}[t]
    \centering
    \includegraphics[width=\textwidth]{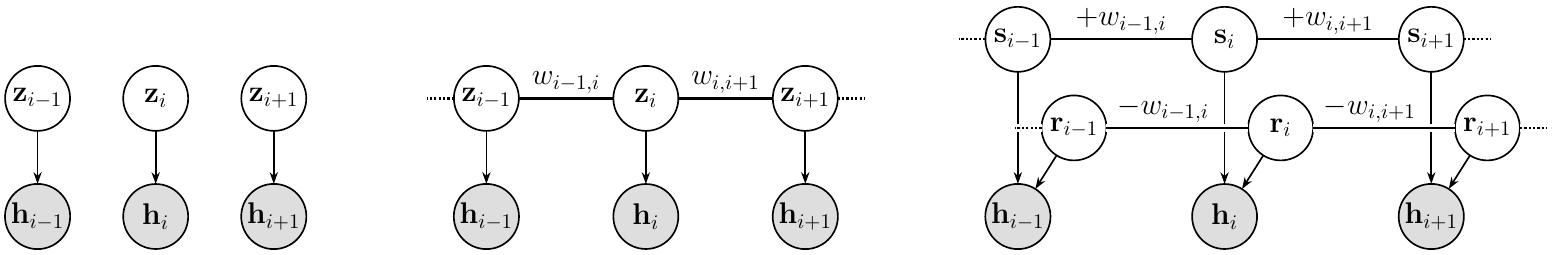}
    \caption{\textbf{From independent codes to attractive--repulsive coupling.}
    \textit{Left:} \LRH{}+\SAE{} treats patch codes $\mathbf z_i$
    independently.
    \textit{Middle:} attractive coupling of the full codes promotes spatial
    persistence but can oversmooth local departures.
    \textit{Right:} \MFLRH decomposes each code as
    $\mathbf z_i=\mathbf s_i+\mathbf r_i$, coupling neighboring smooth codes
    attractively and neighboring innovation codes repulsively.}
    \label{fig:mflrh-graph}
\end{figure}

We therefore introduce the \textbf{Markov-Field Linear Representation Hypothesis (\MFLRH)}, which splits each patch's code into two complementary fields
(\Cref{fig:mflrh-graph}).
Natural images have stronger low-frequency components
\citep{field1987relations}, motivating a spatially coupled
\textbf{smooth field} $\mathbf{S}$ for content shared across patches.
Fine details, however, need not persist spatially, motivating an
\textbf{innovation field} $\mathbf{R}$ for local departures.
We combine them additively as $\mathbf{Z}=\mathbf{S}+\mathbf{R}$,
both for simplicity and to mirror the structure of the residual stream.

\begin{definition}[Markov-Field Linear Representation Hypothesis]
\label{def:mf-lrh}
Let $\mathcal{G}=(\mathcal{V},\mathcal{E},\mathbf{W})$ be a patch graph over the
$p$ image patches with symmetric nonnegative weights
$w_{ij}$, and let $\mathbf{L} \in \mathbb{R}^{p \times p}$ be its graph
Laplacian.
A patch-activation matrix $\mathbf{H}\in\R^{p\times d}$ satisfies \MFLRH if
there exist a dictionary
$\mathbf{D}=(\mathbf{d}_1,\ldots,\mathbf{d}_c)^\top \in\R^{c\times d}$,
latent fields $\mathbf{S},\mathbf{R}\in\R^{p\times c}$, and
$\mathbf{Z}=\mathbf{S}+\mathbf{R}$ such that
(i) the \LRH conditions (\Cref{def: lrh}) hold,
(ii) $\mathbf{H}\approx(\mathbf{S}+\mathbf{R})\mathbf{D}$, and
(iii) $\mathbf{S}$ and $\mathbf{R}$ are respectively attractive and
repulsive pairwise \MRFs on $\mathcal G$. 
For inference, we relax hard sparsity and use the following continuous prior:
\begin{equation}
\label{eq:mflrh-prior}
\resizebox{\linewidth}{!}{$\displaystyle
p(\mathbf{S},\mathbf{R}\mid\mathcal{G})
\propto \exp \Bigg\{
-\lambda_S\norm{\mathbf{S}}_1
-\lambda_R\norm{\mathbf{R}}_1
-\frac{\mu}{2}\Big[
    \kappa_{\mathcal{G}}\|\mathbf{R}\|_{\mathrm{F}}^2
    +\sum_{\{i,j\}\in\mathcal{E}}w_{ij}
    \left(
        \norm{\mathbf{s}_i-\mathbf{s}_j}_2^2
        -\norm{\mathbf{r}_i-\mathbf{r}_j}_2^2
    \right)
\Big]\Bigg\},
$}
\end{equation}
where $\kappa_{\mathcal G}=\lambda_{\max}(\mathbf L)$,
$\lambda_S,\lambda_R>0$, and $\mu\geq0$.
\end{definition}

Conditions (i)--(ii) retain the sparse linear representation of \LRH while
decomposing its code into smooth and innovation fields. Condition (iii)
replaces patch independence with complementary local dependencies:
neighboring smooth codes favor agreement, whereas neighboring innovation
codes favor contrast (\Cref{prop:mflrh-interaction-signs}).
In \Cref{eq:mflrh-prior}, the $\ell_1$ terms promote sparsity,
while the edge terms realize these attractive--repulsive interactions,
and the magnitude term $\kappa_{\mathcal G}\|\mathbf{R}\|_{\mathrm F}^2$
keeps the repulsive quadratic bounded below
(\Cref{cor:app-stabilized-repulsion}). This reflects the intended role of
$\mathbf R$ as the local variation left outside the spatially persistent
component, consistent with our observations in \Cref{fig:patch-locality}b.

At $\mu=0$, the spatial prior disappears and the model factorizes across
patches; ordinary sparse coding is recovered by additionally taking
$\mathbf{S}\equiv\mathbf{0}$ and $\mathbf{R}=\mathbf{Z}$.
Thus, \MFLRH retains \LRH while replacing its implicit patch-independence
completion with an attractive--repulsive Markov prior.

\begin{figure}[t]
    \centering
    \includegraphics[width=\linewidth]{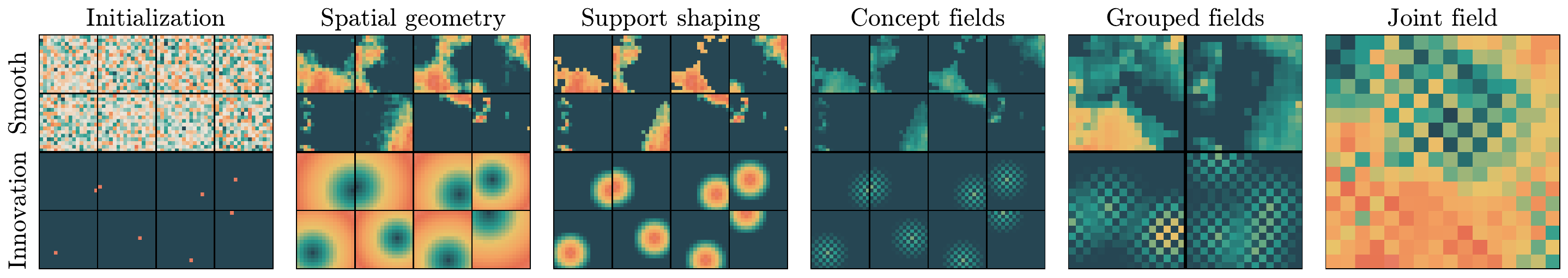}
    \caption{\textbf{Generating synthetic data.} From left to right:
    initialization (Gaussian noise / sampled center), spatial geometry (graph
    diffusion and rectification / distance from center), support shaping
    (quantile thresholding / compact radial envelope), concept construction
    (peak normalization / bipartite masking), grouped aggregation, and the
    final joint field $\mathbf{S} + \mathbf{R}$. In each parenthetical pair,
    the smooth-stream operation is listed first and the innovation-stream
    operation second.}
    \label{fig:synth-smooth}
\end{figure}

\section{Spatial Sparse Autoencoders}
\label{sec:spatialsae}

Having motivated \MFLRH as a model of spatial dependencies between
patch codes, we now develop a method that incorporates this prior.
In \Cref{sec:spatialsae-map}, we extend the Bayesian interpretation
of tokenwise \SAEs to derive \SpatialSAE from \MAP inference under \MFLRH.
In \Cref{sec:ssae-encoder}, we describe the encoder architecture of \SpatialSAE that
makes this objective usable in practice, when the precise decomposition of
$\mathbf Z$ into $(\mathbf S, \mathbf R)$ is not known.

\subsection{\SpatialSAE Is Amortized \MAP Inference Under \MFLRH}
\label{sec:spatialsae-map}

To turn \MFLRH into a practical method, we first need an objective for
inferring its latent codes. We derive it through the Bayesian approach
of \Cref{sec:spatial-structure}, keeping the observation model unchanged
and adopting the spatial prior of \Cref{def:mf-lrh}. The resulting \MAP
objective combines reconstruction and sparsity with two additional terms
that encourage spatial agreement in the smooth field and spatial
disagreement in the innovation field.

\begin{theorem}[Spatial sparse coding is \MAP inference for \MFLRH]
\label{thm:spatial-map}
Let $\mathbf{H}=(\mathbf{S}+\mathbf{R})\mathbf{D}+\boldsymbol{\Psi}$ with
$\boldsymbol{\Psi}_i\sim\mathcal{N}(0,\sigma^2\mathbf{I}_d)$ independent
across patches, and $(\mathbf{S},\mathbf{R})$ following the \MFLRH prior of
\Cref{def:mf-lrh}. For fixed $\mathbf{D}$ and an externally specified
graph $\mathcal{G}$ with Laplacian $\mathbf{L}$, let
$\widetilde{\mathbf{L}}=\kappa_{\mathcal{G}}\mathbf{I}_p-\mathbf{L}$.
Absorbing $\sigma^2$ into the weights, a \MAP estimate of
$(\mathbf{S},\mathbf{R})$ solves the spatial sparse-coding problem:
\begin{equation}
\label{eq:spatial-map}
\resizebox{\linewidth}{!}{$\displaystyle
    (\mathbf{S}^\star,\mathbf{R}^\star)
    \in\argmin_{\mathbf{S},\mathbf{R}}\;
    \tfrac{1}{2}\norm{\mathbf{H}-(\mathbf{S}+\mathbf{R})\mathbf{D}}_F^2
    + \lambda_S\norm{\mathbf{S}}_1
    + \lambda_R\norm{\mathbf{R}}_1
    + \tfrac{\mu}{2}\left[
        \Tr(\mathbf{S}^\top\mathbf{L}\mathbf{S})
        + \Tr(\mathbf{R}^\top\widetilde{\mathbf{L}}\mathbf{R})
    \right].
$}
\end{equation}
\end{theorem}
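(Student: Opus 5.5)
The plan is to write the \MAP problem as minimization of the negative log-posterior and then match each term of the prior in \Cref{eq:mflrh-prior} to a term of \Cref{eq:spatial-map}. For fixed $\mathbf D$ and $\mathcal G$, Bayes' rule gives $p(\mathbf S,\mathbf R\mid \mathbf H)\propto p(\mathbf H\mid \mathbf S,\mathbf R)\,p(\mathbf S,\mathbf R\mid\mathcal G)$, and the evidence does not depend on $(\mathbf S,\mathbf R)$. A \MAP estimate is therefore any minimizer of $-\log p(\mathbf H\mid\mathbf S,\mathbf R)-\log p(\mathbf S,\mathbf R\mid\mathcal G)$, up to additive constants.

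\textbf{Likelihood term.} Since the noise $\boldsymbol\Psi_i$ is Gaussian, independent across patches, with covariance $\sigma^2\mathbf I_d$, the likelihood factorizes over rows. The negative log-likelihood is then
\[
\tfrac{1}{2\sigma^2}\sum_i\|\mathbf h_i-(\mathbf s_i+\mathbf r_i)\mathbf D\|_2^2+\text{const}=\tfrac{1}{2\sigma^2}\|\mathbf H-(\mathbf S+\mathbf R)\mathbf D\|_F^2+\text{const}.
\]
This is the same argument as the tokenwise case (Proposition~\ref{prop:app-spatial-independence}). Multiplying the whole objective by $\sigma^2>0$ does not change the argmin, so we rescale $\lambda_S,\lambda_R,\mu$ by $\sigma^2$. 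This is the sense of ``absorbing $\sigma^2$ into the weights.''

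\textbf{Prior term.} The negative log-prior is, up to its normalizer, the bracketed exponent in \Cref{eq:mflrh-prior}. The $\ell_1$ terms carry over directly. The remaining step is the standard Laplacian identity: for any $\mathbf X\in\R^{p\times c}$,
\[
\sum_{\{i,j\}\in\mathcal E}w_{ij}\|\mathbf x_i-\mathbf x_j\|_2^2=\Tr(\mathbf X^\top\mathbf L\mathbf X).
\]
To verify it, write $\mathbf L=\diag(\mathbf W\1)-\mathbf W$ and expand column by column as $\sum_a\mathbf x_{:a}^\top\mathbf L\mathbf x_{:a}$, each edge being counted once. Applying the identity to $\mathbf S$ gives $\Tr(\mathbf S^\top\mathbf L\mathbf S)$. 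Applying it to $\mathbf R$ and combining with the magnitude term gives
\[
\kappa_{\mathcal G}\|\mathbf R\|_F^2-\Tr(\mathbf R^\top\mathbf L\mathbf R)=\Tr\big(\mathbf R^\top(\kappa_{\mathcal G}\mathbf I_p-\mathbf L)\mathbf R\big)=\Tr(\mathbf R^\top\widetilde{\mathbf L}\mathbf R).
\]
Summing the likelihood and prior terms reproduces \Cref{eq:spatial-map} exactly.

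\textbf{Main obstacle.} The delicate point is well-posedness: the prior must be a proper (normalizable) density, and the \MAP objective must admit a minimizer, so that ``$\in\argmin$'' is meaningful. Here I would invoke \Cref{cor:app-stabilized-repulsion}, or argue directly, that $\widetilde{\mathbf L}\succeq 0$ because $\kappa_{\mathcal G}=\lambda_{\max}(\mathbf L)$; likewise $\mathbf L\succeq0$. Hence every quadratic term is nonnegative, and the exponent is bounded above by $-\lambda_S\|\mathbf S\|_1-\lambda_R\|\mathbf R\|_1$. This gives integrability, since $\int e^{-\lambda\|\cdot\|_1}<\infty$. The same bound makes the objective coercive, because the $\ell_1$ terms grow without bound while everything else is $\geq0$. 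The objective is also continuous, so a minimizer exists. Convexity of the objective, which follows from positive semidefiniteness of all quadratics, is not needed, but it shows the problem is a convex program. Finally, if $\mu=0$ the argument reduces to the factorized case, consistent with the remark after \Cref{def:mf-lrh}.
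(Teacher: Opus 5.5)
Your proposal is correct and follows the paper's proof. Bayes' rule gives the negative log-posterior as the Gaussian reconstruction term plus the prior energy, the trace identities of Lemma~\ref{lem:app-laplacian-trace} turn the edge sums into $\Tr(\mathbf S^\top\mathbf L\mathbf S)$ and $\Tr(\mathbf R^\top\widetilde{\mathbf L}\mathbf R)$, and multiplying by $\sigma^2$ absorbs the noise variance into the weights. Your well-posedness paragraph repeats the normalizability argument the paper gives separately in Proposition~\ref{prop:mflrh-interaction-signs} and adds the existence of a minimizer via coercivity, which the paper leaves implicit; the only slip is a citation, since the row-wise Gaussian likelihood computation is Proposition~\ref{prop:app-sparse-coding-map}, not Proposition~\ref{prop:app-spatial-independence}.
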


\vspace{-0.8em}
The proof is in Appendix~\ref{app:continuous-mf-lrh-to-spatial-sae}.
The trace identities in \Cref{lem:app-laplacian-trace} convert the prior's
attractive--repulsive interactions
into the two quadratic terms in \Cref{eq:spatial-map}.
The local dependencies and innovation magnitude penalty encoded by the prior
yield complementary spectral regularization on the patch graph.
The smooth term penalizes variation across graph edges, suppressing high-frequency 
components and promoting agreement between neighboring codes. Conversely, the 
innovation term uses the positive-semidefinite operator $\widetilde{\mathbf L}$ to 
suppress low-frequency components and promote spatial contrast.
This objective connects to graph-regularized sparse coding
\citep{yankelevsky2020theoretical}; related sparsity--smoothness tradeoffs
appear in compressed manifold modes \citep{neumann2014compressed}. Finally, we can define the \SpatialSAE:

\begin{definition}[Spatial Sparse Autoencoder]
\label{def:spatial-sae}
Let $\mathcal{X} = \left\{ \mathbf{H}^{(n)} \in \mathbb{R}^{p \times d} \right\}_{n=1}^N$
be a dataset of patch activations from $N$ images
and $\mathbf{L} \in \mathbb{R}^{p \times p}$ be a fixed patch-graph Laplacian and $\widetilde{\mathbf{L}}=\kappa_{\mathcal{G}}\mathbf{I}_p-\mathbf{L}$.
A \SpatialSAE pairs an encoder $F_{\saeparams}$ predicting both fields,
$(\mathbf{S}_{\saeparams},\mathbf{R}_{\saeparams})=F_{\saeparams}(\mathbf{H} \, ; \, \mathbf{L})$, with a
decoder dictionary $\mathbf{D}\in\R^{c\times d}$ reconstructing
$\widehat{\mathbf{H}}=(\mathbf{S}_{\saeparams}+\mathbf{R}_{\saeparams})\mathbf{D}$, and is
trained by:
\begin{equation}
\label{eq:spatialsae-objective}
\resizebox{\linewidth}{!}{$\displaystyle
    \min_{\saeparams,\mathbf{D}}\;\frac{1}{\abs{\mathcal{X}}}
    \sum_{\mathbf{H}\in\mathcal{X}}\Bigl[
      \tfrac{1}{2}\norm{
        \mathbf{H}-(\mathbf{S}_{\saeparams}+\mathbf{R}_{\saeparams})\mathbf{D}
      }_F^2
    + \lambda_S\norm{\mathbf{S}_{\saeparams}}_1
    + \lambda_R\norm{\mathbf{R}_{\saeparams}}_1
    + \tfrac{\mu}{2}\Bigl(
        \Tr\bigl(\mathbf{S}_{\saeparams}^\top\mathbf{L}\mathbf{S}_{\saeparams}\bigr)
        + \Tr\bigl(\mathbf{R}_{\saeparams}^\top\widetilde{\mathbf{L}}\mathbf{R}_{\saeparams}\bigr)
      \Bigr)\Bigr].
$}
\end{equation}
\end{definition}

\vspace{-0.8em}
Like ordinary sparse coding, Equation~\ref{eq:spatial-map} requires
solving an optimization problem for each image. \SpatialSAE replaces
this with a learned encoder, amortizing \MAP inference under \MFLRH
just as a vanilla \SAE amortizes ordinary sparse coding. Furthermore,
the spatial prior enters through the training loss, making \SpatialSAE a
\textit{plug-and-play extension} compatible with the main \SAE families 
(details in Appendix~\ref{app:real-impl}).
Lastly, we prove that \textit{the graph terms break the permutation invariance
of tokenwise objectives} (\Cref{prop:perm-invariance}) while preserving invariance to patch indexing.
With the graph fixed, the remaining permutation symmetries are those
that preserve weighted adjacency, as formalized in
Proposition~\ref{prop:app-layout-not-order}.
The \MFLRH prior thus encodes the spatial dependencies documented in
\Cref{sec:spatial-structure} but ignored by the patch-independent \SAE prior:
the cost of explaining a patch now depends on how its neighbors are
explained. The graph terms encourage agreement between neighboring
smooth codes and disagreement between neighboring innovation codes.
We next describe how the encoder estimates the smooth and innovation
fields from real-world activations.

\subsection{Estimating the Smooth and Innovation Fields}
\label{sec:ssae-encoder}

\begin{wrapfigure}[12]{r}{0.455\textwidth}
  \vspace{-1.1em}
  \setlength{\columnwidth}{\linewidth}
  \captionsetup{font=normalfont,labelfont=bf,aboveskip=1pt,belowskip=1pt}
  \hrule height 0.8pt
  \vspace{2pt}
  \captionof{algorithm}{\SpatialSAE's forward pass.}
  \label{alg:ssae-forward}
  \hrule height 0.8pt
  \vspace{2pt} 
  \begin{algorithmic}[1]
      \Require Patch activations $\mathbf{H} \in \mathbb{R}^{p \times d}$.
      \For{each patch $i$}
          \State $\mathbf{N} \gets \Call{Stack}{\{\mathbf{H}_{j} : (i,j) \in \mathcal{E}\}}$
          \State $\mathbf{q} \gets \overline{\mathbf{N}} \mathbf{E}_S, \quad \mathbf{KV} \gets \mathbf{N}\mathbf{E}_S$
          \State $\mathbf{S}_i \gets
              \Call{Sparsify}{\mathrm{Att}(\mathbf{q},\mathbf{KV},\mathbf{KV})}$
          \State $\mathbf{R}_i \gets
              \Call{Sparsify}{(\mathbf{H}_i - \mathbf{S}_i \mathbf{D}_S) \mathbf{E}_R}$
          \State $\widehat{\mathbf{H}}_i \gets
              \mathbf{S}_i \mathbf{D}_S + \mathbf{R}_i \mathbf{D}_R$
      \EndFor
      \State \Return $\mathbf{S}$, $\mathbf{R}$, and $\widehat{\mathbf{H}}$
  \end{algorithmic}
  \vspace{2pt}
  \hrule
  \vspace{-0.5\baselineskip}
\end{wrapfigure}
In real-world models, we observe only the activations $\mathbf{H}$.
We therefore need an expressive encoder that infers the smooth and innovation fields
$(\mathbf{S},\mathbf{R})$ from $\mathbf{H}$ alone.
Our approach follows a
simple idea: use neighboring patches to estimate the shared content, then
explain what remains locally. We encode the neighbors' mean activation as an
attention query, summarizing the local context. Attention uses this query to
weight and combine the neighboring activations into the smooth code
$\mathbf{S}_i$, without seeing the target patch. A second encoder estimates
the innovation code $\mathbf{R}_i$ from the residual of the smooth
reconstruction. This builds on the spatial predictability observed in
\Cref{fig:patch-locality}; \Cref{alg:ssae-forward} summarizes the procedure.
In practice, we partition the dictionary between the smooth and
innovation streams, each supported only on its assigned atoms.
\Cref{alg:ssae-forward} uses the corresponding active code blocks and
sub-dictionaries $\mathbf{D}_S$ and $\mathbf{D}_R$.
We compute the residual as $\mathbf{H}_i-\sg(\mathbf{S}_i\mathbf{D}_S)$,
where the stop-gradient prevents the innovation stream from backpropagating
through the smooth reconstruction. 

\section{Experiments}
\label{sec:wild}

We now test whether a spatial prior improves the features obtained by SAEs. Before presenting the experiments, we need to be clear
about what should be evaluated, because the frequently used criterion,
reconstruction \citep{fel2025rabbit_hull,saemp}, is not the right one \citep{chanin2024absorption,chanin2025hedging}. An \SAE communicates each activation
through a sparse bottleneck, and ideally the resulting code is sparse,
faithful, and interpretable at once; standard objectives,
however, optimize only the first two and never reward monosemanticity
directly. \citet{mencattini2026rate} formalize this tension as the
\textbf{rate--distortion--polysemanticity (\RDP) tradeoff}: at a fixed
sparsity budget, better reconstruction can be achieved with more
polysemantic dictionaries, and thus, it does not
necessarily mean a better dictionary. Since our goal is to recover
interpretable concepts rather than minimize reconstruction alone,
we treat interpretability as the primary metric and report reconstruction
as a secondary tradeoff. Under \RDP, a loss in
reconstruction when improving interpretability is therefore expected.

We do so in two settings. In \Cref{sec:spatialsae-synthetic}, we use synthetic
data with known dictionaries and codes to measure polysemanticity
\citep{mencattini2026rate} and concept recovery under spatial structure. Crucially, data is not trivially sampled from the \MFLRH prior itself, but from a spatially structured generator intended to mimic the smooth and local variation we observed in vision activations (\Cref{fig:patch-locality}). In \Cref{sec:wild-eval}, we test whether the same inductive bias helps on frozen \DINO activations. In both settings and for all four sparsifiers we test, \SpatialSAE learns more interpretable dictionaries and more coherent
feature maps, at a modest reconstruction cost.

\subsection{Concept Recovery on Synthetic Data}
\label{sec:spatialsae-synthetic}

To measure concept recovery directly, we generate data whose concepts are
known and compare \SpatialSAE with a standard \SAE on them.

\paragraph{Synthetic spatial benchmark.}
We add spatial structure to existing sparse linear benchmarks
\citep{chanin2026synthsaebench} by constructing two nonnegative latent streams,
$\mathbf S$ and $\mathbf R$, on a $16\times16$ grid, both associated with a
shared dictionary $\mathbf D$. Each stream is formed by normalizing and
aggregating concept-specific subfields, but with different spatial geometry:
smooth subfields are obtained by diffusing, rectifying, and thresholding
Gaussian noise, whereas innovation subfields are compact radial envelopes
centered at sampled locations and restricted to alternating grid sites
(see \Cref{fig:synth-smooth} for an overview).
Our data-generating process deliberately does not sample exactly from the
\MFLRH prior, since in that setting an advantage for the corresponding \MAP
estimator would be unsurprising. Instead, we test whether the prior's inductive
bias remains useful under misspecification, using data that retain the
qualitative spatial structure observed in real concept fields: concepts tend
to persist across neighboring patches, while localized innovations capture
patch-specific deviations. Appendix~\ref{app:synthetic-dgp} gives the full
generating process and its parameters.

\begin{figure}[t]
    \centering
    \begin{overpic}[width=\linewidth]{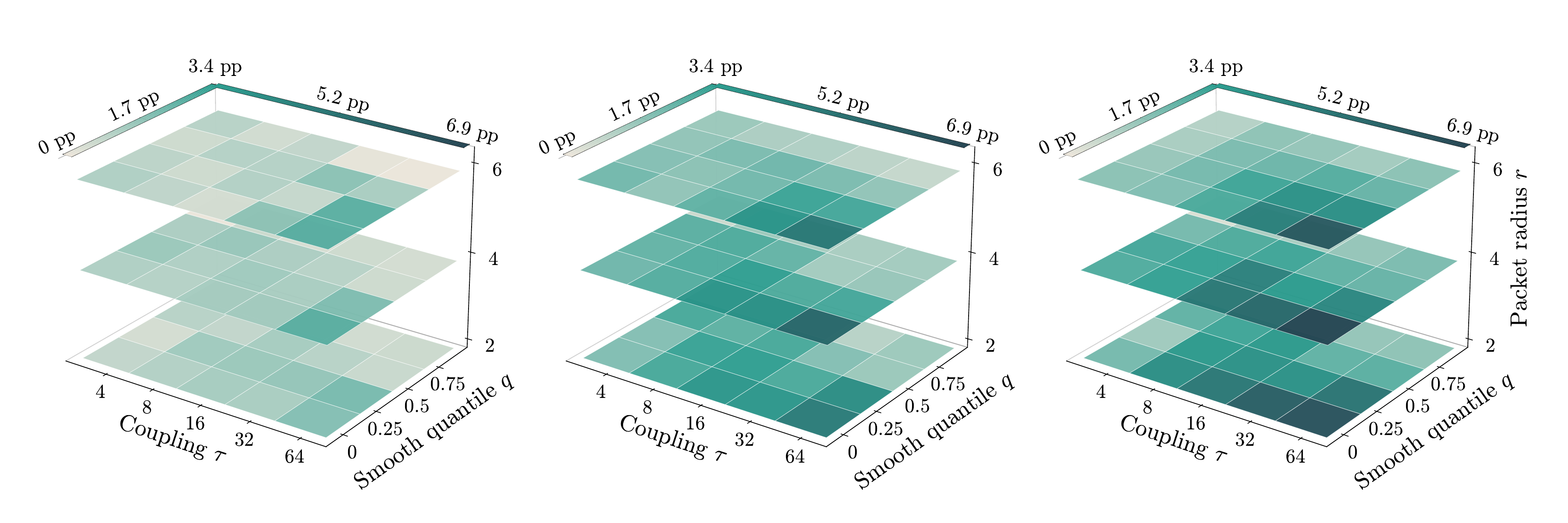}
        \put(0.7,28){\scriptsize (a)}
        \put(34.0,28){\scriptsize (b)}
        \put(67.3,28){\scriptsize (c)}
    \end{overpic}
    \caption{\textbf{Isolating the prior's effect on synthetic data.}
    We compare spectral regularization at (a) $\mu=3\times10^{-4}$,
    (b) $\mu=10^{-3}$, and (c) $\mu=3\times10^{-3}$ with the same two-stream
    model at $\mu=0$. Color intensity shows F1 gain; the prior helps across \textbf{all}
    smoothness and innovation settings.}
    \label{fig:synth-prior-f1-gain}
\end{figure}
\paragraph{Main experimental setting.}
Since the dictionary and codes are known, we can directly evaluate
concept recovery. We measure \textsc{MCC}, support F1, and held-out $R^2$
following \citet{chanin2026synthsaebench}, and polysemanticity
(\textsc{Poly.}) following \citet{mencattini2026rate}. Our main experiment
studies how \SpatialSAE behaves as we vary the DGP parameters governing the
smoothness and innovation structure of the generated concept fields. We
perform a grid sweep over these parameters, yielding $300$ matched runs per
architecture, and report the win rate of \SpatialSAE over the corresponding baseline in
\Cref{tab:synth-wins}. Appendix~\ref{app:synthetic-experiments} reports
additional sweeps over the generating-dictionary geometry and sparsity budget.
It also ablates the two components of \SpatialSAE, the smooth--innovation
decomposition and spectral regularization, to quantify the contribution of
each.

\begin{wraptable}[10]{r}{0.45\textwidth}
    \vspace{-1em}
    \centering
    \footnotesize
    \setlength{\tabcolsep}{3pt}
    \begin{tabular*}{\linewidth}{@{\extracolsep{\fill}}ccccc@{}}
        \toprule
        Sparsifier & \textsc{MCC} & \textsc{Poly.} & F1 & $R^2$ \\
        \midrule
        TopK      & 100\%  & 100\%  & 100\%  & 0\% \\
        BatchTopK & 100\%  & 100\%  & 100\%  & 0\% \\
        ReLU      & 85.7\% & 87.7\% & 85.0\% & 0\% \\
        JumpReLU  & 98.7\% & 96.0\% & 100\%  & 100\% \\
        \bottomrule
    \end{tabular*}
    \captionsetup{font=footnotesize,skip=4pt}
    \caption{
    Win rate of \SpatialSAE against baseline \SAEs on matched pairs
    (same generator configuration and seed);
    \SpatialSAE wins the three identification metrics, conceding
    reconstruction.
    }
    \label{tab:synth-wins}
\end{wraptable}
\paragraph{Results.}
For ReLU, TopK, and BatchTopK, we observe the
rate--distortion--polysemanticity tradeoff predicted by
\citet{mencattini2026rate}: \SpatialSAE improves concept recovery and reduces
polysemanticity while conceding reconstruction. It wins every \textsc{MCC},
F1, and polysemanticity comparison for TopK and BatchTopK, and wins $85.7\%$,
$85.0\%$, and $87.7\%$ of the corresponding ReLU comparisons, respectively,
while never winning on $R^2$. JumpReLU is the notable exception: \SpatialSAE
wins every F1 and $R^2$ comparison, and $98.7\%$ of \textsc{MCC} and
$96.0\%$ of polysemanticity comparisons. Thus, the spatial prior need not
inherently trade reconstruction for concept recovery; with a compatible
sparsifier, it can move the model to a better region of the \RDP frontier.
These gains are not explained by the two-stream decomposition alone; relative
to an otherwise identical unregularized two-stream model, the spatial prior
strictly increases F1 across the same parameter sweep and, for an appropriately tuned $\mu$, \SpatialSAE improves up to $6.9$ percentage points
(\Cref{fig:synth-prior-f1-gain}).

\subsection{Concept Discovery in Vision-Model Activations}
\label{sec:wild-eval}

\Cref{sec:spatialsae-synthetic} tests \SpatialSAE on synthetic data
containing both smooth and local concepts. We next turn to real vision-model
activations, which need not follow \MFLRH exactly and for which neither the
ground-truth dictionary nor the latent codes are observed.

\paragraph{Setup and evaluation.}
We compare \SpatialSAE against standard \SAEs on frozen \DINO ViT-B/14
activations from \ADETwentyK across four sparsifiers with matched dictionary
widths and sparsity budgets. We use dense annotations to measure
semantic and spatial alignment, and a blinded autointerpretability protocol to
assess feature interpretability. For each class, we identify its best
detector and normalize its F1 by that of the corresponding linear probe
(\emph{Normalized F1}); we measure spatial alignment by comparing within- and
across-segment code similarity at matched distances (\emph{Region Coherence});
and we assess feature interpretability through blinded LLM ratings of activation
support (\emph{Map Quality}) and consistency across images
(\emph{Monosemanticity}). Complete training
details, general evaluation
protocols, and additional \COCOStuff results appear
in Appendix~\ref{sec:experiment-details}.

\begin{figure}[t]
    \centering
    \includegraphics[width=\linewidth]{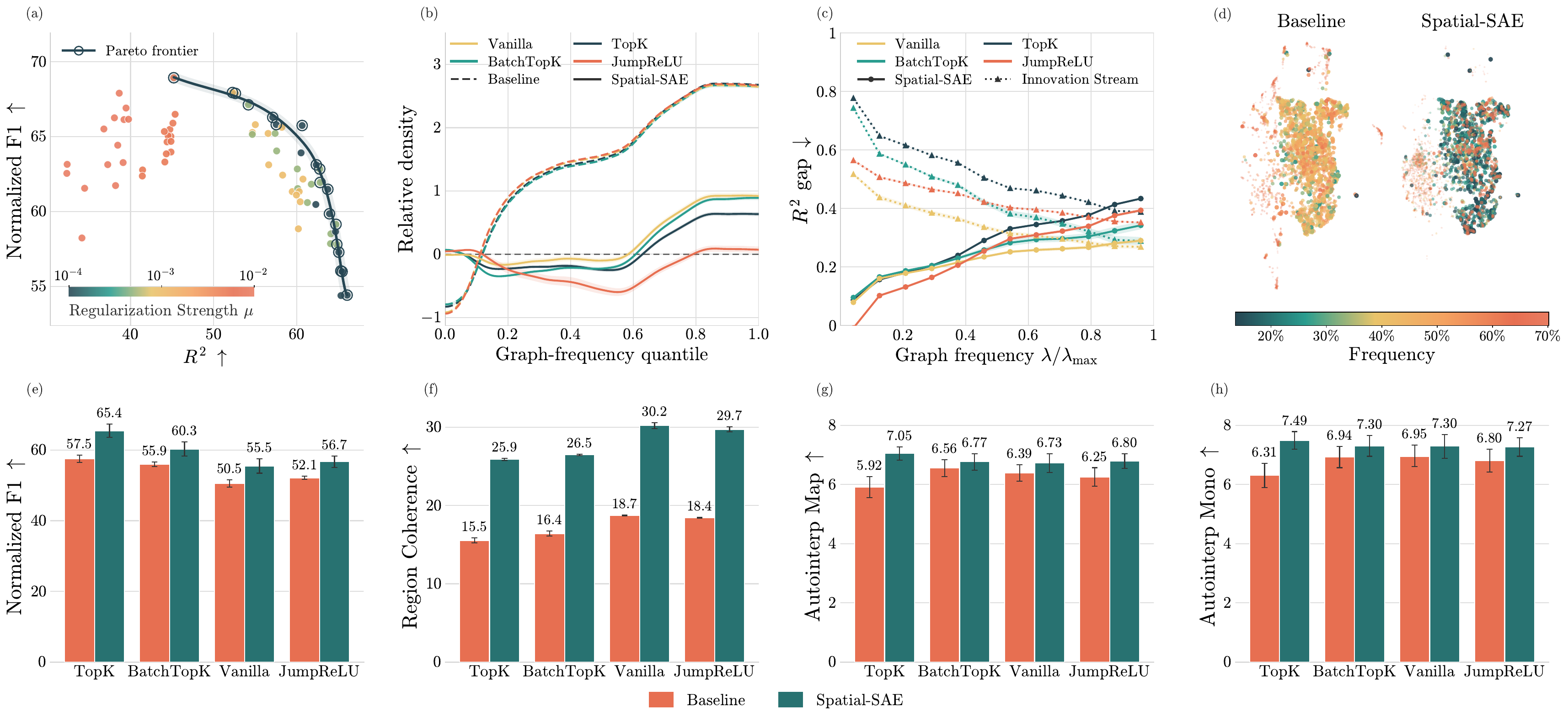}
    \caption{\textbf{Concept alignment and spectral organization.}
    (a) Normalized F1--$R^2$ frontier over \SpatialSAE
    configurations.
    (b) Code spectrum relative to concept masks: zero denotes
    agreement with the masks, dashed curves the baseline, and solid curves
    \SpatialSAE.
    (c) The baseline's $R^2$ advantage over \SpatialSAE (solid) and its
    innovation stream alone (dotted), by frequency.
    (d) TopK decoder atoms in a joint \textsc{umap}, colored by the
    graph frequency of their activation maps.
    (e--h) Normalized F1, Region Coherence, and automated map-quality
    and monosemanticity ratings. Error bars show one stddev over
    5 seeds in (e--f) and $95\%$ class-bootstrap intervals from
    one seed in (g--h).
    }
    \label{fig:wild-main}
\end{figure}

\paragraph{Concept alignment and spatial organization.}
\SpatialSAE improves all interpretability measures for all
sparsifiers (\Cref{fig:wild-main}e--h). For TopK, Normalized F1 increases from
$57.5$ to $65.4$ and Region Coherence from $15.5$ to $25.9$, while Map Quality
and Monosemanticity improve for every sparsifier. These gains trade off against
reconstruction, exactly as \RDP predicts; varying the regularization strength
traces a clear Normalized-F1--$R^2$ frontier rather than producing an
isolated operating point (\Cref{fig:wild-main}a). The spectral analysis then connects
this tradeoff to fragmentation: low graph frequencies encode concepts that
persist across neighboring patches, yet every baseline underrepresents them,
instead increasingly overrepresenting higher frequencies. Conversely, \SpatialSAE brings the code
spectrum substantially closer to the concept mask spectrum
(\Cref{fig:wild-main}b). The reconstruction tradeoff is concentrated in the
same high-frequency regime: the baseline's $R^2$ advantage over the combined \SpatialSAE code
grows from $0.11$ at low frequencies to $0.37$ at high frequencies, where its
spectral excess is largest. Meanwhile, its advantage over the innovation stream
alone narrows, consistent with that stream specializing in local, high-frequency
variation (\Cref{fig:wild-main}c). At the feature level,
\SpatialSAE contains distinct low- and high-frequency populations, whereas the
baseline is concentrated at higher frequencies and lacks a comparable
low-frequency population (\Cref{fig:wild-main}d). The qualitative results in
\Cref{fig:wild-qual} show the same structure: \SpatialSAE better preserves
regional organization in PCA-RGB reconstructions, produces more stable
strongest-atom assignments within segments, and learns class-best maps with
significantly more continuous support.

\section{Conclusion}
\label{sec:conclusion}
We showed that tokenwise \SAEs implicitly assume independence between patch codes, an assumption that vision-model activations violate. To replace it, we introduced \MFLRH, which specializes the \LRH by coupling smooth codes across neighboring patches while retaining sparse innovations for local details, and derived \SpatialSAE as amortized \MAP inference under this hypothesis. On synthetic data, \SpatialSAE improves concept recovery and reduces polysemanticity; on real vision-model activations, where these quantities cannot be measured without a ground-truth dictionary, it improves every proxy we evaluate. 
\SpatialSAE therefore moves along the \RDP frontier toward interpretability, trading reconstruction for more interpretable features and coherent activation maps.%

\newpage
\subsection*{AI Use Statement}
In this work, we used generative AI tools to implement methods (training, evaluation, and data-generation code), to design and provide feedback on experiments, and to assist in the writing of proofs. A language model is also part of our evaluation methodology: Claude Haiku 4.5 serves as the blinded rater in the automated interpretability protocol (\Cref{sec:wild-eval} and Appendix~\ref{app:real-autointerp}), and the full rater instructions are included in the supplementary material. We have not used generative AI tools to develop the theoretical framework, formulate mathematical claims, propose hypotheses, or interpret results, and no data were generated, translated, or cleaned by generative AI (the synthetic datasets are produced by our own generation code). Additionally, we used generative AI tools for brainstorming, to create and edit software code, create scientific figures, draft parts of the paper, and edit the paper for readability. We have reviewed all AI-assisted work. All AI-generated code was checked and tested for correctness by the authors, every AI-assisted proof was verified step by step by the authors, every experimental result was verified by the authors, and all AI-assisted text was reviewed and revised by the authors against the underlying derivations and results. We take responsibility for the final content of this work, including text, claims or artifacts produced with the aid of generative AI.

\subsection*{Ethics Statement}
This work does not involve human subjects: no human annotators or participants were recruited, and all interpretability ratings were produced by a language model under a blinded protocol (Appendix~\ref{app:real-autointerp}). All experiments use publicly available models and datasets (DINOv2, ADE20K, and COCO-Stuff), used under their respective licenses and terms of use for research. Some of these images contain people; we do not attempt to identify individuals, and we release no new images or annotations. The backbone and datasets may encode demographic and cultural biases, and dictionary features learned from their activations can inherit them; our method aims to make such internal structure easier to inspect, not to correct it. We foresee no direct harmful applications beyond those of interpretability research on vision models in general, and we believe more faithful concept extraction primarily benefits the auditing and understanding of deployed models.

\subsection*{Reproducibility Statement}
We will publicly release the code for training and evaluating \SpatialSAE and all baselines, the synthetic data generator and datasets, and the trained models upon publication. The implementation extends the public \texttt{overcomplete} library, and the software versions and compute used are listed in Appendix~\ref{sec:experiment-details}. On the theoretical side, all assumptions are stated in the main text and the complete proofs of \Cref{prop:perm-invariance} and \Cref{thm:spatial-map}, along with the supporting results, are given in Appendices~\ref{app:vanilla-map}--\ref{app:continuous-mf-lrh-to-spatial-sae}, with the required background on patch graphs and Markov random fields in Appendices~\ref{app:notation}--\ref{app:undirected-graphical-models}. For the synthetic benchmark, Appendix~\ref{app:synthetic-dgp} specifies the data-generating process, the dictionary and stream construction, the sweeps, and the training and evaluation protocol. For natural images, we use only the publicly available \DINO backbone and the \ADETwentyK and \COCOStuff datasets; Appendix~\ref{sec:experiment-details} describes activation extraction and preprocessing, the patch graph, all architectures and hyperparameters (\Cref{app:real-impl}), the evaluation metrics (\Cref{app:metrics}), and the model-selection and seeding protocol (\Cref{app:real-ablations}). The automated interpretability protocol, including panel construction, rater assignment, and aggregation, is described in Appendix~\ref{app:real-autointerp}, and the full rater instructions are included in the supplementary material.

\bibliography{iclr2027_conference}

@article{yankelevsky2020theoretical,
title = {Theoretical guarantees for graph sparse coding},
journal = {Applied and Computational Harmonic Analysis},
volume = {49},
number = {2},
pages = {698-725},
year = {2020},
issn = {1063-5203},
doi = {10.1016/j.acha.2019.03.003},
url = {https://www.sciencedirect.com/science/article/pii/S106352031830232X},
author = {Yael Yankelevsky and Michael Elad}
}

@article{neumann2014compressed,
author = {Neumann, T. and Varanasi, K. and Theobalt, C. and Magnor, M. and Wacker, M.},
title = {Compressed Manifold Modes for Mesh Processing},
journal = {Computer Graphics Forum},
volume = {33},
number = {5},
pages = {35-44},
doi = {10.1111/cgf.12429},
url = {https://onlinelibrary.wiley.com/doi/abs/10.1111/cgf.12429},
eprint = {https://onlinelibrary.wiley.com/doi/pdf/10.1111/cgf.12429},
year = {2014}
}

@misc{bricken2023monosemanticity,
  author = {Trenton Bricken and others},
  title = {Towards Monosemanticity: Decomposing Language Models with Dictionary Learning},
  howpublished = {Transformer Circuits Thread},
  year = {2023},
  url = {https://transformer-circuits.pub/2023/monosemantic-features/index.html}
}

@inproceedings{caron2021dino,
  author = {Mathilde Caron and Hugo Touvron and Ishan Misra and Herv{\'e} J{\'e}gou and Julien Mairal and Piotr Bojanowski and Armand Joulin},
  title = {Emerging Properties in Self-Supervised Vision Transformers},
  booktitle = {Proceedings of the IEEE/CVF International Conference on Computer Vision (ICCV)},
  year = {2021},
pages     = {9630--9640},
  doi       = {10.1109/ICCV48922.2021.00951}
}

@inproceedings{cunningham2024saes,
  author = {Robert Huben and Hoagy Cunningham and Logan Riggs Smith and Aidan Ewart and Lee Sharkey},
  title = {Sparse Autoencoders Find Highly Interpretable Features in Language Models},
  booktitle = {International Conference on Learning Representations (ICLR)},
  year = {2024},
  url = {https://openreview.net/forum?id=F76bwRSLeK}
}

@book{elad2010,
  author = {Michael Elad},
  title = {Sparse and Redundant Representations: From Theory to Applications in Signal and Image Processing},
  publisher = {Springer},
  year = {2010},
  doi = {10.1007/978-1-4419-7011-4}
}

@inproceedings{fel2025archetypal,
  author = {Thomas Fel and Ekdeep Singh Lubana and Jacob Prince and Matthew Kowal and Victor Boutin and Isabel Papadimitriou and Binxu Wang and Martin Wattenberg and Demba Ba and Talia Konkle},
  title = {Archetypal {SAE}: Adaptive and Stable Dictionary Learning for Concept Extraction in Large Vision Models},
  booktitle = {International Conference on Machine Learning (ICML)},
  year = {2025},
series    = {Proceedings of Machine Learning Research},
  volume    = {267},
  pages     = {16543--16572},
  publisher = {PMLR}
}

@inproceedings{jelassi2022spatial,
  author = {Samy Jelassi and Michael E. Sander and Yuanzhi Li},
  title = {Vision Transformers Provably Learn Spatial Structure},
  booktitle = {Advances in Neural Information Processing Systems (NeurIPS)},
  volume = {35},
  year = {2022},
pages     = {37822--37836},
  doi       = {10.52202/068431-2741}
}

@article{oquab2023dinov2,
  author = {Maxime Oquab and others},
  title = {{DINOv2}: Learning Robust Visual Features without Supervision},
  journal = {Transactions on Machine Learning Research},
  year = {2024},
  url = {https://openreview.net/forum?id=a68SUt6zFt}
}

@inproceedings{darcet2024registers,
  author = {Timoth\'{e}e Darcet and Maxime Oquab and Julien Mairal and Piotr Bojanowski},
  title = {Vision Transformers Need Registers},
  booktitle = {International Conference on Learning Representations (ICLR)},
  year = {2024},
  url = {https://openreview.net/forum?id=2dnO3LLiJ1}
}

@article{olshausen1996,
  author = {Bruno A. Olshausen and David J. Field},
  title = {Emergence of Simple-Cell Receptive Field Properties by Learning a Sparse Code for Natural Images},
  journal = {Nature},
  volume = {381},
  number = {6583},
  pages = {607--609},
  year = {1996},
  doi = {10.1038/381607a0}
}

@article{olshausen1997,
  author = {Bruno A. Olshausen and David J. Field},
  title = {Sparse Coding with an Overcomplete Basis Set: A Strategy Employed by {V1}?},
  journal = {Vision Research},
  volume = {37},
  number = {23},
  pages = {3311--3325},
  year = {1997}
}

@inproceedings{lubana2025priors_time,
  title     = {Priors in Time: Missing Inductive Biases for Language Model Interpretability},
  author    = {Ekdeep Singh Lubana and Can Rager and Sai Sumedh R. Hindupur and Val{\'e}rie Costa and Greta Tuckute and Oam Patel and Sonia Krishna Murthy and Thomas Fel and Daniel Wurgaft and Eric J. Bigelow and Johnny Lin and Demba Ba and Martin Wattenberg and Fernanda Viegas and Melanie Weber and Aaron Mueller},
  booktitle = {International Conference on Learning Representations (ICLR)},
  year      = {2026},
  url       = {https://arxiv.org/abs/2511.01836},
  pages     = {42231--42290}
}

@inproceedings{fel2025rabbit_hull,
  title     = {Into the Rabbit Hull: From Task-Relevant Concepts in {DINO} to {Minkowski} Geometry},
  author    = {Thomas Fel and Binxu Wang and Michael A. Lepori and Matthew Kowal and Andrew Lee and Randall Balestriero and Sonia Joseph and Ekdeep S. Lubana and Talia Konkle and Demba Ba and Martin Wattenberg},
  booktitle = {International Conference on Learning Representations (ICLR)},
  year      = {2026},
  url       = {https://arxiv.org/abs/2510.08638}
}

@inproceedings{hindupur2025projecting_assumptions,
  title     = {Projecting Assumptions: The Duality Between Sparse Autoencoders and Concept Geometry},
  author    = {Sai Sumedh R. Hindupur and Ekdeep Singh Lubana and Thomas Fel and Demba Ba},
  booktitle = {Advances in Neural Information Processing Systems (NeurIPS)},
  volume    = {38},
  year      = {2025},
  url       = {https://arxiv.org/abs/2503.01822},
pages     = {11649--11699},
  doi       = {10.52202/085713-0391}
}

@inproceedings{oneill2024compute_optimal,
  title     = {Compute Optimal Inference and Provable Amortisation Gap in Sparse Autoencoders},
  author    = {Charles O'Neill and Alim Gumran and David Klindt},
  booktitle = {Proceedings of the 42nd International Conference on Machine Learning (ICML)},
  series    = {Proceedings of Machine Learning Research},
  volume    = {267},
  pages     = {46877--46896},
  publisher = {PMLR},
  year      = {2025},
  url       = {https://arxiv.org/abs/2411.13117}
}

@inproceedings{cevher2008sparse_mrf,
  title     = {Sparse Signal Recovery Using {Markov} Random Fields},
  author    = {Volkan Cevher and Marco F. Duarte and Chinmay Hegde and Richard G. Baraniuk},
  booktitle = {Advances in Neural Information Processing Systems},
  volume    = {21},
  year      = {2008},
  url       = {https://papers.neurips.cc/paper/3487-sparse-signal-recovery-using-markov-random-fields}
}

@inproceedings{belkin2001laplacian_eigenmaps,
  title     = {{Laplacian} Eigenmaps and Spectral Techniques for Embedding and Clustering},
  author    = {Mikhail Belkin and Partha Niyogi},
  booktitle = {Advances in Neural Information Processing Systems},
  volume    = {14},
  pages     = {585--591},
  year      = {2001},
  url       = {https://papers.nips.cc/paper/1961-laplacian-eigenmaps-and-spectral-techniques-for-embedding-and-clustering}
}

@book{prince2012computer_vision,
  title     = {Computer Vision: Models, Learning, and Inference},
  author    = {Simon J. D. Prince},
  publisher = {Cambridge University Press},
  address   = {Cambridge},
  year      = {2012},
  isbn      = {9781107011793},
  doi       = {10.1017/CBO9780511996504}
}

@inproceedings{margossian2023amortized,
  title     = {Amortized Variational Inference: When and Why?},
  author    = {Margossian, Charles C. and Blei, David M.},
  booktitle = {Proceedings of the 40th Conference on Uncertainty in Artificial Intelligence (UAI)},
  series    = {Proceedings of Machine Learning Research},
  volume    = {244},
  pages     = {2434--2449},
  publisher = {PMLR},
  year      = {2024}
}

@article{mencattini2026rate,
  title={The Rate-Distortion-Polysemanticity Tradeoff in {SAEs}},
  author={Mencattini, Tommaso and Montagna, Francesco and Locatello, Francesco},
  journal={arXiv preprint arXiv:2605.14694},
  year={2026}
}

@inproceedings{chanin2024absorption,
  title     = {A is for Absorption: Studying Feature Splitting and Absorption in Sparse Autoencoders},
  author    = {Chanin, David and Wilken-Smith, James and Dulka, Tom{\'a}{\v{s}} and Bhatnagar, Hardik and Golechha, Satvik and Bloom, Joseph},
  booktitle = {Advances in Neural Information Processing Systems (NeurIPS)},
  volume    = {38},
  year      = {2025},
  url       = {https://arxiv.org/abs/2409.14507},
pages     = {82318--82355},
  doi       = {10.52202/085713-2755}
}

@article{chanin2025hedging,
  title={Feature Hedging: Correlated Features Break Narrow Sparse Autoencoders},
  author={Chanin, David and Dulka, Tom{\'a}{\v{s}} and Garriga-Alonso, Adri{\`a}},
  journal={arXiv preprint arXiv:2505.11756},
  year={2025}
}

@article{ayonrinde2024mdl,
  title={Interpretability as Compression: Reconsidering {SAE} Explanations of Neural Activations with {MDL-SAEs}},
  author={Ayonrinde, Kola and Pearce, Michael T. and Sharkey, Lee},
  journal={arXiv preprint arXiv:2410.11179},
  year={2024}
}

@article{chanin2026synthsaebench,
  title={{SynthSAEBench}: Evaluating Sparse Autoencoders on Scalable Realistic Synthetic Data},
  author={Chanin, David and Garriga-Alonso, Adri{\`a}},
  journal={arXiv preprint arXiv:2602.14687},
  year={2026}
}

@misc{elhage2022superposition,
  title={Toy Models of Superposition},
  author={Elhage, Nelson and Hume, Tristan and Olsson, Catherine and Schiefer, Nicholas and Henighan, Tom and Kravec, Shauna and Hatfield-Dodds, Zac and Lasenby, Robert and Drain, Dawn and Chen, Carol and Grosse, Roger and McCandlish, Sam and Kaplan, Jared and Amodei, Dario and Wattenberg, Martin and Olah, Christopher},
  year={2022},
  howpublished={Transformer Circuits Thread},
  note={\url{https://transformer-circuits.pub/2022/toy_model/index.html}, arXiv:2209.10652}
}

@inproceedings{park2024lrh,
  title={The Linear Representation Hypothesis and the Geometry of Large Language Models},
  author={Park, Kiho and Choe, Yo Joong and Veitch, Victor},
  booktitle={Proceedings of the 41st International Conference on Machine Learning},
  year={2024},
series    = {Proceedings of Machine Learning Research},
  volume    = {235},
  pages     = {39643--39666},
  publisher = {PMLR}
}

@inproceedings{LRHorigin,
  title     = {On the Origins of Linear Representations in Large Language Models},
  author    = {Yibo Jiang and Goutham Rajendran and Pradeep Ravikumar and Bryon Aragam and Victor Veitch},
  booktitle = {Proceedings of the 41st International Conference on Machine Learning (ICML)},
  series    = {Proceedings of Machine Learning Research},
  volume    = {235},
  publisher = {PMLR},
  year      = {2024},
  url       = {https://arxiv.org/abs/2403.03867},
pages     = {21879--21911}
}

@inproceedings{saemp,
  title     = {From Flat to Hierarchical: Extracting Sparse Representations with Matching Pursuit},
  author    = {Val\'erie Costa and Thomas Fel and Ekdeep Singh Lubana and Bahareh Tolooshams and Demba Ba},
  booktitle = {Advances in Neural Information Processing Systems (NeurIPS)},
  volume    = {38},
  year      = {2025},
  url       = {https://arxiv.org/abs/2506.03093},
pages     = {34381--34424},
  doi       = {10.52202/085713-1154}
}

@article{field1987relations,
  title={Relations between the statistics of natural images and the response properties of cortical cells},
  author={Field, David J.},
  journal={Journal of the Optical Society of America A},
  volume={4},
  number={12},
  pages={2379--2394},
  year={1987},
  publisher={Optical Society of America}
}

@misc{stevens2025interpretable,
  title         = {Interpretable and Testable Vision Features via Sparse Autoencoders},
  author        = {Stevens, Samuel and Chao, Wei-Lun and Berger-Wolf, Tanya and Su, Yu},
  year          = {2025},
  eprint        = {2502.06755},
  archivePrefix = {arXiv},
  primaryClass  = {cs.CV},
  url           = {https://arxiv.org/abs/2502.06755}
}

@misc{anthropic2025haiku45,
  author = {{Anthropic}},
  title  = {Introducing {Claude Haiku 4.5}},
  year   = {2025},
  month  = oct,
  url    = {https://www.anthropic.com/news/claude-haiku-4-5}
}

@article{wagemans2012gestalt,
  author  = {Wagemans, Johan and Elder, James H. and Kubovy, Michael
             and Palmer, Stephen E. and Peterson, Mary A.
             and Singh, Manish and von der Heydt, R{\"u}diger},
  title   = {A century of {Gestalt} psychology in visual perception:
             {I. Perceptual} grouping and figure--ground organization},
  journal = {Psychological Bulletin},
  year    = {2012},
  volume  = {138},
  number  = {6},
  pages   = {1172--1217},
  doi     = {10.1037/a0029333}
}

@article{biederman1972scenes,
  author  = {Biederman, Irving},
  title   = {Perceiving real-world scenes},
  journal = {Science},
  year    = {1972},
  volume  = {177},
  number  = {4043},
  pages   = {77--80},
  doi     = {10.1126/science.177.4043.77}
}

@article{palmer1975context,
  author  = {Palmer, Stephen E.},
  title   = {The effects of contextual scenes on the identification of objects},
  journal = {Memory \& Cognition},
  year    = {1975},
  volume  = {3},
  number  = {5},
  pages   = {519--526},
  doi     = {10.3758/BF03197524}
}

@misc{qwenscope,
      title={{Qwen-Scope}: Turning Sparse Features into Development Tools for Large Language Models}, 
      author={Boyi Deng and Xu Wang and Yaoning Wang and Yu Wan and Yubo Ma and Baosong Yang and Haoran Wei and Jialong Tang and Huan Lin and Ruize Gao and Tianhao Li and Qian Cao and Xuancheng Ren and Xiaodong Deng and An Yang and Fei Huang and Dayiheng Liu and Jingren Zhou},
      year={2026},
      eprint={2605.11887},
      archivePrefix={arXiv},
      primaryClass={cs.CL},
      url={https://arxiv.org/abs/2605.11887}, 
}

@inproceedings{gemmascope,
  title     = {{Gemma Scope}: Open Sparse Autoencoders Everywhere All At Once on {Gemma} 2},
  author    = {Tom Lieberum and Senthooran Rajamanoharan and Arthur Conmy and Lewis Smith and Nicolas Sonnerat and Vikrant Varma and J{\'a}nos Kram{\'a}r and Anca Dragan and Rohin Shah and Neel Nanda},
  booktitle = {Proceedings of the 7th BlackboxNLP Workshop: Analyzing and Interpreting Neural Networks for NLP},
  publisher = {Association for Computational Linguistics},
  pages     = {278--300},
  year      = {2024},
  doi       = {10.18653/v1/2024.blackboxnlp-1.19}
}

@inproceedings{bhalla2026sparse,
  title={Do Sparse Autoencoders Capture Concept Manifolds?},
  author={Bhalla, Usha and Fel, Thomas and Rager, Can and Feucht, Sheridan and Haklay, Tal and Wurgaft, Daniel and Boppana, Siddharth and Kowal, Matthew and Merullo, Jack and Geiger, Atticus and Lubana, Ekdeep Singh},
  booktitle={Conference on Language Modeling (COLM)},
  year={2026},
  url={https://arxiv.org/abs/2604.28119}
}

@article{kalman1960new,
  title={A new approach to linear filtering and prediction problems},
  author={Kalman, Rudolf Emil},
  journal={Journal of Basic Engineering},
  volume={82},
  number={1},
  pages={35--45},
  year={1960},
  publisher={American Society of Mechanical Engineers}
}

@article{bach2019submodular,
  author  = {Bach, Francis},
  title   = {Submodular Functions: From Discrete to Continuous Domains},
  journal = {Mathematical Programming},
  volume  = {175},
  pages   = {419--459},
  year    = {2019},
  doi     = {10.1007/s10107-018-1248-6},
  url     = {https://arxiv.org/abs/1511.00394}
}

@inproceedings{weller2015revisiting,
  author    = {Weller, Adrian},
  title     = {Revisiting the Limits of {MAP} Inference by {MWSS} on Perfect Graphs},
  booktitle = {Proceedings of the Eighteenth International Conference on Artificial Intelligence and Statistics},
  series    = {Proceedings of Machine Learning Research},
  volume    = {38},
  pages     = {1061--1069},
  year      = {2015},
  publisher = {PMLR},
  url       = {https://proceedings.mlr.press/v38/weller15.html}
}

@inproceedings{gotovos2015sampling,
  author    = {Gotovos, Alkis and Hassani, S. Hamed and Krause, Andreas},
  title     = {Sampling from Probabilistic Submodular Models},
  booktitle = {Advances in Neural Information Processing Systems},
  volume    = {28},
  year      = {2015},
  url       = {https://papers.nips.cc/paper/5744-sampling-from-probabilistic-submodular-models}
}

@article{zhou2019semantic,
  title={Semantic understanding of scenes through the {ADE20K} dataset},
  author={Zhou, Bolei and Zhao, Hang and Puig, Xavier and Xiao, Tete and Fidler, Sanja and Barriuso, Adela and Torralba, Antonio},
  journal={International Journal of Computer Vision},
  volume={127},
  number={3},
  pages={302--321},
  year={2019},
  doi={10.1007/s11263-018-1140-0},
  publisher={Springer}
}

@inproceedings{cocostuff,
  author    = {Caesar, Holger and Uijlings, Jasper and Ferrari, Vittorio},
  title     = {{COCO-Stuff}: Thing and Stuff Classes in Context},
  booktitle = {Proceedings of the IEEE/CVF Conference on Computer Vision and Pattern Recognition (CVPR)},
  pages     = {1209--1218},
  year      = {2018},
  doi       = {10.1109/CVPR.2018.00132}
}

@article{kailath1968innovations,
  title   = {An Innovations Approach to Least-Squares Estimation---{Part I}: Linear Filtering in Additive White Noise},
  author  = {Kailath, Thomas},
  journal = {IEEE Transactions on Automatic Control},
  volume  = {13},
  number  = {6},
  pages   = {646--655},
  year    = {1968},
  doi     = {10.1109/TAC.1968.1099025}
}

@article{shuman2013emerging,
  title   = {The Emerging Field of Signal Processing on Graphs: Extending High-Dimensional Data Analysis to Networks and Other Irregular Domains},
  author  = {Shuman, David I. and Narang, Sunil K. and Frossard, Pascal and Ortega, Antonio and Vandergheynst, Pierre},
  journal = {IEEE Signal Processing Magazine},
  volume  = {30},
  number  = {3},
  pages   = {83--98},
  year    = {2013},
  doi     = {10.1109/MSP.2012.2235192}
}
\bibliographystyle{iclr2027_conference}

\vfill\pagebreak
\appendix

\pdfbookmark[0]{Appendix}{appendix}
\section*{Appendix Overview}
\etocdepthtag.toc{appendix}
\etocsettagdepth{main}{none}
\etocsettagdepth{appendix}{2}
\etocsettocstyle{\vspace{-1em}}{}
\tableofcontents

\vfill\pagebreak
\vfill\pagebreak
\section{Preliminaries}

This appendix develops the mathematical language used to express spatial
dependence in \MFLRH. In \Cref{app:notation}, we represent image patches as
vertices of a graph and introduce the Laplacian and Dirichlet energies that
distinguish spatial agreement from spatial contrast. In
\Cref{app:undirected-graphical-models}, we turn these graph interactions into
probabilistic priors by reviewing undirected graphical models, pairwise Markov
random fields, and \MAP estimation. Together, these preliminaries support the
theoretical development of \SpatialSAE in \Cref{sec:theory-of-sae}.

\subsection{Patch Graphs, Laplacian, and Dirichlet Energy}
\label{app:notation}

This subsection introduces the graph notation used to express spatial locality. 
The purpose is simple: patch activations in a vision model are not indexed by an 
arbitrary set, but by positions in an image. To make this spatial structure 
available to the prior, we represent the patch layout as a weighted graph. The
graph specifies which patches interact. These interactions can favor agreement
or contrast between neighboring codes.

\intuition{Why a graph}
{A graph encodes which patches are neighbors. Once patches are vertices and
adjacency is encoded by edge weights, we can measure how much a latent concept
field changes across edges. Penalizing these differences encourages smoothness;
favoring them, while controlling code magnitude, encourages spatial contrast.}

We first define the object that carries spatial information. Generally, the edge weights may depend on patch adjacency, 
attention, feature affinity, or a mixture of spatial and representation-space 
similarity. The formal arguments below only require an undirected graph with 
symmetric nonnegative weights.

\begin{definition}[Patch graph and graph Laplacian]
\label{def:app-patch-graph}
Let the image patches be equipped with an undirected weighted graph
\[
    \mathcal{G}=(\mathcal{V},\mathcal{E},\mathbf{W}),
\]
where $\mathcal{V}=\{1,\ldots,p\}$ is the set of patch indices and
$\mathcal{E}$ consists of unordered pairs of distinct vertices. The weight
matrix $\mathbf{W}=(w_{ij})\in\R^{p\times p}$ satisfies
$w_{ij}=w_{ji}\ge 0$, with $w_{ii}=0$ and $w_{ij}=0$ whenever
$\{i,j\}\notin\mathcal{E}$. Each undirected edge is counted once in sums
and products over $\mathcal{E}$. The degree matrix is
\[
    \boldsymbol{\Delta}=\diag(\Delta_{11},\ldots,\Delta_{pp}),
    \qquad
    \Delta_{ii}=\sum_{j=1}^p w_{ij}.
\]
The graph Laplacian associated with $\mathcal{G}$ is
\[
    \mathbf{L}=\boldsymbol{\Delta}-\mathbf{W}.
\]
\end{definition}

The Laplacian measures differences between neighboring patch codes through the
graph Dirichlet energy. Large edge weights give the corresponding differences
more influence. This measure of spatial variation applies to any latent
field; its role in the objective depends on whether variation is penalized or
favored.

\begin{definition}[Graph Dirichlet energy]
\label{def:app-dirichlet-energy}
For a matrix $\mathbf{X}\in\R^{p\times c}$, whose row $\mathbf{x}_i$ denotes the
latent code at patch $i$, the graph Dirichlet energy of $\mathbf{X}$ on
$\mathcal{G}$ is
\[
    \Dir(\mathbf{X})
    :=
    \Tr(\mathbf{X}^\top \mathbf{L}\mathbf{X}).
\]
\end{definition}

This definition is compact, but its meaning is not obvious from the trace form 
alone. The quantity $\Dir(\mathbf{X})$ measures how much the concept field
varies over the graph: it is small when neighboring patches have similar
codes, and large when the code changes abruptly across high-weight edges. In this
sense, it is the graph analogue of the classical continuous Dirichlet energy
\[
    \int_{\Omega} \norm{\nabla s(x)}_2^2\,dx,
\]
which measures the spatial variation of a function $s$ over a continuous domain 
$\Omega$. The next lemma expresses both the Laplacian quadratic and its shifted
counterpart in terms of squared differences between neighboring patch codes.
These identities describe attractive and repulsive interactions on the graph.

\begin{lemma}[Laplacian trace identities]
\label{lem:app-laplacian-trace}
Let $\mathcal{G}=(\mathcal{V},\mathcal{E},\mathbf{W})$ be a patch graph as in
\Cref{def:app-patch-graph}, and let
$\mathbf{L}=\boldsymbol{\Delta}-\mathbf{W}$ be its graph Laplacian. Then, for 
every $\mathbf{X}\in\R^{p\times c}$ and $\beta\in\R$,
\[
\begin{aligned}
    \Tr(\mathbf{X}^\top \mathbf{L}\mathbf{X})
    &=\sum_{\{i,j\}\in\mathcal{E}}
    w_{ij}\norm{\mathbf{x}_i-\mathbf{x}_j}_2^2, \\
    \Tr\bigl(\mathbf{X}^\top(\beta\mathbf{I}_p-\mathbf{L})\mathbf{X}\bigr)
    &=\beta\norm{\mathbf{X}}_F^2
    -\sum_{\{i,j\}\in\mathcal{E}}
    w_{ij}\norm{\mathbf{x}_i-\mathbf{x}_j}_2^2.
\end{aligned}
\]
\end{lemma}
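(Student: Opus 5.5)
The plan is to prove the first identity column by column and then obtain the second from it by linearity of the trace.

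Write $\mathbf{x}^{(a)}\in\R^p$ for the $a$-th column of $\mathbf{X}$. Then
\[
\Tr(\mathbf{X}^\top\mathbf{L}\mathbf{X})=\sum_{a=1}^c \mathbf{x}^{(a)\top}\mathbf{L}\,\mathbf{x}^{(a)},
\]
so it suffices to establish the scalar identity
\[
\mathbf{v}^\top\mathbf{L}\mathbf{v}=\sum_{\{i,j\}\in\mathcal{E}} w_{ij}(v_i-v_j)^2
\qquad\text{for every } \mathbf{v}\in\R^p.
\]
Summing this over the columns gives the first claim, because $\sum_a (x_{ia}-x_{ja})^2=\norm{\mathbf{x}_i-\mathbf{x}_j}_2^2$.

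For the scalar identity, expand $\mathbf{L}=\boldsymbol{\Delta}-\mathbf{W}$:
\[
\mathbf{v}^\top\mathbf{L}\mathbf{v}=\sum_i \Delta_{ii}v_i^2-\sum_{i,j}w_{ij}v_iv_j
=\sum_{i,j}w_{ij}v_i^2-\sum_{i,j}w_{ij}v_iv_j .
\]
Using the symmetry $w_{ij}=w_{ji}$, split the first sum as $\tfrac12\sum_{i,j}w_{ij}(v_i^2+v_j^2)$. This yields
\[
\mathbf{v}^\top\mathbf{L}\mathbf{v}=\tfrac12\sum_{i,j}w_{ij}(v_i-v_j)^2 .
\]

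The only step needing care is converting this ordered double sum into a sum over unordered edges counted once, as in \Cref{def:app-patch-graph}. Three facts handle it.
\begin{itemize}
\item The diagonal terms vanish, since $w_{ii}=0$ (and $v_i-v_i=0$ anyway).
\item Pairs with $\{i,j\}\notin\mathcal{E}$ contribute nothing, since $w_{ij}=0$ there.
\item Each unordered edge $\{i,j\}\in\mathcal{E}$ appears exactly twice, as $(i,j)$ and $(j,i)$, with equal summands by symmetry.
\end{itemize}
The factor $\tfrac12$ therefore cancels exactly, giving the scalar identity.

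For the second identity, use linearity of the trace:
\[
\Tr\bigl(\mathbf{X}^\top(\beta\mathbf{I}_p-\mathbf{L})\mathbf{X}\bigr)=\beta\Tr(\mathbf{X}^\top\mathbf{X})-\Tr(\mathbf{X}^\top\mathbf{L}\mathbf{X}).
\]
Then apply $\Tr(\mathbf{X}^\top\mathbf{X})=\norm{\mathbf{X}}_F^2$ together with the first identity. No assumption on $\beta$ is needed, since nothing beyond linearity is used.
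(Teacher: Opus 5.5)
Your proof is correct and follows essentially the same route as the paper: expand $\mathbf{L}=\boldsymbol{\Delta}-\mathbf{W}$, symmetrize using $w_{ij}=w_{ji}$ to obtain $\tfrac12\sum_{i,j}w_{ij}\norm{\mathbf{x}_i-\mathbf{x}_j}_2^2$, collapse the ordered double sum to the edge sum, and get the second identity by linearity of the trace. The only difference is cosmetic. You reduce to a scalar quadratic form per column, whereas the paper works directly with the row inner products $\ip{\mathbf{x}_i}{\mathbf{x}_j}$. Your explicit handling of the diagonal and non-edge terms is slightly more careful than the paper's one-line remark.
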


\begin{proof}
Using the definition of the graph Laplacian,
\[
    \Tr(\mathbf{X}^\top \mathbf{L}\mathbf{X})
    =
    \Tr(\mathbf{X}^\top \boldsymbol{\Delta}\mathbf{X})
    -
    \Tr(\mathbf{X}^\top \mathbf{W}\mathbf{X}).
\]
The first term expands as
\[
    \Tr(\mathbf{X}^\top \boldsymbol{\Delta}\mathbf{X})
    =
    \sum_{i=1}^p
    \Delta_{ii}\norm{\mathbf{x}_i}_2^2,
\]
and the second term expands as
\[
    \Tr(\mathbf{X}^\top \mathbf{W}\mathbf{X})
    =
    \sum_{i,j=1}^p
    w_{ij}\ip{\mathbf{x}_i}{\mathbf{x}_j}.
\]
Since $\Delta_{ii}=\sum_j w_{ij}$ and $\mathbf{W}$ is symmetric, we can rewrite 
the difference as
\[
\begin{aligned}
    \Tr(\mathbf{X}^\top \mathbf{L}\mathbf{X})
    &=
    \sum_i \Delta_{ii}\norm{\mathbf{x}_i}_2^2
    -
    \sum_{i,j}w_{ij}\ip{\mathbf{x}_i}{\mathbf{x}_j} \\
    &=
    \frac12
    \sum_{i,j}
    w_{ij}
    \left(
        \norm{\mathbf{x}_i}_2^2
        +
        \norm{\mathbf{x}_j}_2^2
        -
        2\ip{\mathbf{x}_i}{\mathbf{x}_j}
    \right) \\
    &=
    \frac12
    \sum_{i,j}
    w_{ij}
    \norm{\mathbf{x}_i-\mathbf{x}_j}_2^2 \\
    &=\sum_{\{i,j\}\in\mathcal{E}}
    w_{ij}\norm{\mathbf{x}_i-\mathbf{x}_j}_2^2.
\end{aligned}
\]
Each undirected edge contributes twice to the ordered sum, giving the edge-sum
form of the first identity. For the second identity, use
\[
    \Tr\bigl(\mathbf{X}^\top(\beta\mathbf{I}_p-\mathbf{L})\mathbf{X}\bigr)
    =\beta\norm{\mathbf{X}}_F^2
      -\Tr(\mathbf{X}^\top\mathbf{L}\mathbf{X})
\]
and substitute the first identity.
\end{proof}

\paragraph{Proof intuition.}
The degree and adjacency terms regroup into weighted squared differences across
edges. The shifted quadratic reverses these edge contributions and adds a term
proportional to the squared field magnitude.

Minimizing $\Dir(\mathbf{X})$ encourages
neighboring patches to have similar concept activations. This is the graph
smoothness penalty used in spectral graph methods and graph signal processing
\citep{belkin2001laplacian_eigenmaps,shuman2013emerging}.
Conversely, at a fixed magnitude, larger $\Dir(\mathbf{X})$ favors differences between
neighboring codes. A negative Dirichlet term alone, however, would be unbounded
below on any graph with a positive-weight edge, since increasing the magnitude
of a varying field would decrease the energy without limit.

\begin{corollary}[Stabilized repulsion]
\label{cor:app-stabilized-repulsion}
Under the assumptions of \Cref{lem:app-laplacian-trace}, if
$\beta\ge\lambda_{\max}(\mathbf{L})$, then for every
$\mathbf{X}\in\R^{p\times c}$,
\[
    \Tr\bigl(\mathbf{X}^\top(\beta\mathbf{I}_p-\mathbf{L})\mathbf{X}\bigr)
    =\beta\norm{\mathbf{X}}_F^2-\Dir(\mathbf{X})
    \ge0.
\]
\end{corollary}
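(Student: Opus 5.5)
The equality needs no new work: it is the second identity of \Cref{lem:app-laplacian-trace} evaluated at the given $\beta$, combined with \Cref{def:app-dirichlet-energy}, since $\Dir(\mathbf{X})=\Tr(\mathbf{X}^\top\mathbf{L}\mathbf{X})$. What remains is nonnegativity. I would prove it by showing that $\beta\mathbf{I}_p-\mathbf{L}$ is positive semidefinite and then applying this column by column.

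First I would establish the spectral facts about $\mathbf{L}$. It is real symmetric because $\mathbf{W}$ is symmetric and $\boldsymbol{\Delta}$ is diagonal. The first trace identity with $c=1$ gives $\mathbf{x}^\top\mathbf{L}\mathbf{x}=\sum_{\{i,j\}\in\mathcal{E}}w_{ij}(x_i-x_j)^2\ge0$, so $\mathbf{L}$ is also positive semidefinite. The spectral theorem then gives $\mathbf{L}=\mathbf{U}\boldsymbol{\Lambda}\mathbf{U}^\top$ with $\mathbf{U}$ orthogonal and real eigenvalues $0\le\lambda_1\le\dots\le\lambda_p=\lambda_{\max}(\mathbf{L})$.

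Next, $\beta\mathbf{I}_p-\mathbf{L}=\mathbf{U}(\beta\mathbf{I}_p-\boldsymbol{\Lambda})\mathbf{U}^\top$ has eigenvalues $\beta-\lambda_k$. Each of these is nonnegative because $\beta\ge\lambda_{\max}(\mathbf{L})$. Hence $\mathbf{v}^\top(\beta\mathbf{I}_p-\mathbf{L})\mathbf{v}\ge0$ for every $\mathbf{v}\in\R^p$. Equivalently, the Rayleigh quotient bound $\mathbf{v}^\top\mathbf{L}\mathbf{v}\le\lambda_{\max}(\mathbf{L})\norm{\mathbf{v}}_2^2\le\beta\norm{\mathbf{v}}_2^2$ holds.

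Finally, write $\mathbf{X}=(\mathbf{x}^{(1)},\dots,\mathbf{x}^{(c)})$ by columns. The trace decomposes as
\[
\Tr\bigl(\mathbf{X}^\top(\beta\mathbf{I}_p-\mathbf{L})\mathbf{X}\bigr)=\sum_{a=1}^c \mathbf{x}^{(a)\top}(\beta\mathbf{I}_p-\mathbf{L})\mathbf{x}^{(a)}.
\]
Every summand is nonnegative by the previous step, so the sum is too.

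There is no real obstacle here. The only point needing care is that $\lambda_{\max}(\mathbf{L})$ is well defined and real, which follows from the symmetry of $\mathbf{L}$.
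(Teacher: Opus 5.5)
Your proof is correct and matches the paper's argument essentially step for step. Both take the equality from the second trace identity, get $\beta\mathbf{I}_p-\mathbf{L}\succeq0$ from its eigenvalues $\beta-\lambda_k\ge0$, and then sum the nonnegative column quadratic forms; your extra check that $\mathbf{L}\succeq0$ is harmless but not needed, since symmetry alone suffices.
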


\begin{proof}
The equality follows from \Cref{lem:app-laplacian-trace}. For nonnegativity, set
$\mathbf{A}=\beta\mathbf{I}_p-\mathbf{L}$. Its eigenvalues are
$\beta-\lambda_k\ge0$ under the assumed bound on $\beta$, so
$\mathbf{A}\succeq0$. Write $\mathbf{X}=[\mathbf{v}_1,\ldots,\mathbf{v}_c]$
column-wise, where $\mathbf{v}_a\in\R^p$. Then
\[
    \Tr(\mathbf{X}^\top\mathbf{A}\mathbf{X})
    =\sum_{a=1}^c(\mathbf{X}^\top\mathbf{A}\mathbf{X})_{aa}
    =\sum_{a=1}^c\mathbf{v}_a^\top\mathbf{A}\mathbf{v}_a
    \ge0,
\]
because each summand is nonnegative when $\mathbf{A}\succeq0$.
\end{proof}

For $\beta\ge\lambda_{\max}(\mathbf{L})$, the negative edge term favors
contrast, while the magnitude term keeps the shifted quadratic nonnegative.
The two operators give complementary spectral preferences: $\mathbf{L}$
penalizes a graph mode with eigenvalue $\lambda_k$ in proportion to $\lambda_k$,
whereas $\beta\mathbf{I}_p-\mathbf{L}$ penalizes it in proportion to
$\beta-\lambda_k$. Thus, at a fixed field magnitude, the Laplacian quadratic
favors low graph frequencies and the shifted quadratic favors high graph
frequencies.

The graph therefore supports two complementary spatial preferences. The next
subsection explains how attractive and repulsive interactions are expressed
through potentials in an undirected graphical model.

\subsection{Undirected Graphical Models and Markov Random Fields}
\label{app:undirected-graphical-models}

This subsection introduces the probabilistic language needed to turn spatial
locality into a prior.  The graph preliminaries above tell us which patches are
neighbors.  We now explain how a graph can also describe dependencies between
random variables attached to those patches.  This is the role of undirected
graphical models. We introduce local dependence,
potential functions, pairwise Markov random fields, and the \MAP principle that
turns a probabilistic model into an optimization objective.

\intuition{Why graphical models}
{A patch graph says which patches are adjacent.  An undirected graphical model
uses this adjacency structure to describe which latent variables interact
directly. For the quadratic difference interactions used here, attraction favors
neighboring agreement, while repulsion favors contrast. Both preferences
can be expressed on the same patch adjacency structure.}

Following the classical computer-vision view of graphical models
\citep{prince2012computer_vision}, the point of the graph is not merely to draw
connections between variables.  The graph is a modeling assumption: it says that
the joint distribution should be built from local interactions rather than from
arbitrary global dependencies.  This is particularly natural for images, where
pixels or patches live on a spatial grid.

\begin{definition}[Undirected graphical model]
\label{def:app-undirected-graphical-model}
An undirected graphical model consists of an undirected graph
$\mathcal{G}=(\mathcal{V},\mathcal{E})$ and a joint distribution over its
vertex-indexed variables $\{X_i\}_{i\in\mathcal{V}}$ satisfying the Markov
property in \Cref{def:app-markov-random-field}. The graph specifies which variables
may interact directly: an edge $\{i,j\}\in\mathcal{E}$ means that $X_i$ and $X_j$ may
have a direct probabilistic interaction.
\end{definition}

In this appendix, the undirected graphical model of interest is a Markov random
field.  The Markov property makes the locality assumption explicit: once the
neighbors of a variable are known, the rest of the graph does not directly
change the conditional distribution of that variable.

\begin{definition}[Markov random field]
\label{def:app-markov-random-field}
Let $\mathcal{G}=(\mathcal{V},\mathcal{E})$ be an undirected graph, and let
$\{X_i\}_{i\in\mathcal{V}}$ be random variables indexed by its vertices.  The
collection $\{X_i\}_{i\in\mathcal{V}}$ is a Markov random field on
$\mathcal{G}$ if, for every $i\in\mathcal{V}$, the following equality holds
almost surely:
\[
    p(x_i \mid x_{\mathcal{V}\setminus\{i\}})
    =
    p(x_i \mid x_{\mathcal{N}(i)}),
\]
where
\[
    \mathcal{N}(i)
    =
    \{j\in\mathcal{V}:\{i,j\}\in\mathcal{E}\}
\]
is the set of neighbors of $i$.
\end{definition}

The definition says that the prior is local.  A patch code may interact directly
with neighboring patch codes, but it does not need a separate direct interaction
with every other patch in the image.  This is the same modeling principle used in
classical grid-based vision models. Locality specifies which variables interact;
the interaction potentials determine whether they favor spatial agreement or
contrast.

To specify which local configurations are likely or unlikely, graphical models
use potential functions.  There are two equivalent conventions.  One can use
positive compatibility potentials, whose larger values make a configuration more
probable.  Or one can use energy potentials, whose larger values make a
configuration less probable through a Gibbs density.  In this appendix we use the
energy convention because it connects directly to \MAP minimization.

\begin{definition}[Pairwise Markov random fields and potentials]
\label{def:app-potential-functions}
\label{def:app-pairwise-mrf-energy}
Let $\mathcal{G}=(\mathcal{V},\mathcal{E})$ be an undirected graph with variables
$\{X_i\}_{i\in\mathcal{V}}$.

A \emph{compatibility potential} is a strictly positive function.  Unary
compatibility potentials $\psi_i(x_i)>0$ score individual states, and pairwise
compatibility potentials $\psi_{ij}(x_i,x_j)>0$ score joint states of neighboring
variables. In sums and products over $\mathcal{E}$, each undirected edge is
counted once. In compatibility form, a pairwise Markov random field has density
\[
    p(\mathbf{x})
    =
    \frac{1}{Z}
    \prod_{i\in\mathcal{V}}\psi_i(x_i)
    \prod_{\{i,j\}\in\mathcal{E}}\psi_{ij}(x_i,x_j).
\]
Larger compatibility values increase the probability of the corresponding
configuration.

Equivalently, define unary and pairwise \emph{energy potentials}
\[
    U_i(x_i)=-\log\psi_i(x_i),
    \qquad
    P_{ij}(x_i,x_j)=-\log\psi_{ij}(x_i,x_j).
\]
Then
\[
    p(\mathbf{x})
    =
    \frac{1}{Z}
    \exp\!\left(-E(\mathbf{x})\right),
    \qquad
    E(\mathbf{x})
    =
    \sum_{i\in\mathcal{V}}U_i(x_i)
    +
    \sum_{\{i,j\}\in\mathcal{E}}P_{ij}(x_i,x_j).
\]
In energy form, lower potential values are preferred by the prior, while higher
potential values decrease probability. Individual energy potentials need not be
nonnegative: a repulsive potential can decrease as neighboring codes differ.
The complete energy must define a finite, positive normalizing constant
$0<Z<\infty$ for the density to be well-defined.
\end{definition}

Since \MAP inference minimizes negative log probability, the energy convention
directly yields optimization objectives for both attractive and repulsive
interactions.

Unary potentials encode preferences about individual variables, such as
sparsity or magnitude. Pairwise potentials determine how neighboring variables
interact. To define attraction and repulsion for continuous variables, we first
recall submodularity on products of ordered domains
\citep[Sections~2.1--2.2]{bach2019submodular}.

\begin{definition}[Submodularity and supermodularity]
\label{def:app-continuous-submodularity}
Let $\mathcal{X}=\prod_{k=1}^{n} I_k$, where each $I_k\subseteq\R$ is an
interval. For $\mathbf{x},\mathbf{y}\in\mathcal{X}$, let
$\mathbf{x}\wedge\mathbf{y}$ and $\mathbf{x}\vee\mathbf{y}$ denote their
coordinatewise minimum and maximum. A function $f:\mathcal{X}\to\R$ is
\emph{submodular} if
\[
    f(\mathbf{x})+f(\mathbf{y})
    \geq
    f(\mathbf{x}\wedge\mathbf{y})+f(\mathbf{x}\vee\mathbf{y})
    \qquad
    \text{for all }\mathbf{x},\mathbf{y}\in\mathcal{X}.
\]
It is \emph{supermodular} if the inequality is reversed, and \emph{modular} if
equality holds for all $\mathbf{x},\mathbf{y}$. A positive function
$\psi:\mathcal{X}\to(0,\infty)$ is \emph{log-submodular} or
\emph{log-supermodular} when $\log\psi$ has the corresponding property.
\end{definition}

The following definition adapts the binary attraction and repulsion conventions
of \citet[Section~3.1.1]{weller2015revisiting} and
\citet{gotovos2015sampling} to continuous variables, using the same relation
between energy and log-compatibility.

\begin{definition}[Attractive and repulsive pairwise \MRFs]
\label{def:app-attractive-repulsive-mrf}
Consider a continuous pairwise \MRF on
$\mathcal{G}=(\mathcal{V},\mathcal{E})$ with density
\[
    q(\mathbf{x}\mid\mathcal{G})
    =\frac{1}{Z}
    \exp\!\left(
        -\sum_{i\in\mathcal{V}}U_i(x_i)
        -\sum_{\{i,j\}\in\mathcal{E}}P_{ij}(x_i,x_j)
    \right),
\]
on a product of intervals, where $0<Z<\infty$ and each undirected edge is
counted once. The functions $U_i$ and $P_{ij}$ are unary and pairwise energy
potentials, respectively.

The field is \emph{attractive} if every $P_{ij}$ is submodular, and
\emph{repulsive} if every $P_{ij}$ is supermodular. Equivalently, the edge
compatibility factors $\exp(-P_{ij})$ are respectively log-supermodular and
log-submodular. We include zero interactions as a boundary case.
\end{definition}

Attraction lets neighboring patches reinforce a feature: holding the other
variables fixed, a larger feature value at one patch makes larger values at
its neighbor relatively more likely. Repulsion reverses this preference.
For nonnegative codes, it discourages strong neighboring activations of the
same feature.

For twice continuously differentiable pairwise energies, attraction and
repulsion are equivalently characterized by
\[
    \frac{\partial^2 P_{ij}}{\partial x_i\,\partial x_j}\leq0
    \quad\text{and}\quad
    \frac{\partial^2 P_{ij}}{\partial x_i\,\partial x_j}\geq0,
\]
respectively \citep[Section~2.2]{bach2019submodular}. Scalar unary terms are
separable and therefore modular: adding them to an energy changes neither classification.
The lattice definition itself requires no differentiability, so this observation
also applies to nonsmooth unary penalties such as $\lambda|x_i|$.

For the negative Dirichlet energy considered here,
\Cref{cor:app-stabilized-repulsion} supplies a unary magnitude term that makes
the combined quadratic nonnegative. Adding a positive
$\ell_1$ penalty controls directions with zero quadratic cost and yields a
normalizable density.

The last probabilistic object we need is \MAP inference. This is the bridge from
a graphical model to an optimization objective. Once the prior and
likelihood are written in energy form, \MAP inference becomes minimization of a
data-fit term plus a prior energy.

\begin{definition}[\MAP estimator]
\label{def:app-map-estimator}
Given observed data $\mathbf{y}$ and latent variables $\mathbf{x}$, the
maximum a posteriori estimator is any configuration
\[
    \mathbf{x}_{\mathrm{MAP}}
    \in
    \argmax_{\mathbf{x}}
    p(\mathbf{x}\mid \mathbf{y}).
\]
Equivalently, if
\[
    p(\mathbf{x}\mid \mathbf{y})
    \propto
    \exp\!\left(-E(\mathbf{x};\mathbf{y})\right),
\]
then
\[
    \mathbf{x}_{\mathrm{MAP}}
    \in
    \argmin_{\mathbf{x}}
    E(\mathbf{x};\mathbf{y}).
\]
\end{definition}

On a product domain, with respect to a product reference measure,
a prior whose energy contains only unary terms factorizes across variables.
Pairwise potentials can introduce local dependencies, which appear as
graph-dependent terms in the \MAP objective. For continuous patch codes, the
quadratic energies above can express either spatial agreement or spatial
contrast, with unary terms promoting sparsity and controlling magnitude.

An undirected graphical model therefore specifies which variables interact,
potential functions specify which configurations are preferred, and \MAP
inference turns the posterior into an energy minimization problem.

\vfill\pagebreak
\section{Theory of \SpatialSAEs}
\label{sec:theory-of-sae}

Having established the required graph and probabilistic preliminaries, we now
give the full theoretical account of \SpatialSAE. In \Cref{app:lrh-to-sae}, we
connect the \LRH to sparse coding and its amortization by an \SAE. In
\Cref{app:vanilla-map}, we derive the \MAP interpretation of the vanilla \SAE
objective and expose its factorized prior over patch codes. We then replace
that prior with the spatially coupled \MFLRH prior in
\Cref{app:mflrh-prior}. In
\Cref{app:continuous-mf-lrh-to-spatial-sae}, we derive the resulting
\SpatialSAE objective, relate it to amortized inference, and show that it
depends on spatial layout rather than an arbitrary ordering of the patches.

\subsection{From \LRH to Sparse Coding and Sparse Autoencoders}
\label{app:lrh-to-sae}

This subsection fixes the algebraic starting point of the appendix.  We first
state the Linear Representation Hypothesis in the notation used here.  We then
explain why the hypothesis turns concept extraction into sparse dictionary
learning, and finally define the vanilla sparse autoencoder as the amortized
version of that sparse-coding problem.  At this stage no spatial assumption is
being made; the goal is only to make clear what classical \LRH and vanilla
\SAEs do and do not specify.

\intuition{From representations to inference}
{The \LRH says that an activation is a sparse linear combination of dictionary
atoms.  Once this is assumed, interpretability becomes an inverse problem: infer
the sparse coefficients and the dictionary.  A sparse autoencoder replaces the
per-example sparse-coding solve by a learned encoder $f_{\saeparams}$.}

We begin by restating the \LRH as stated in \citet[Def.~2.1]{saemp}, in the notation of \citet[Def.~1]{fel2025rabbit_hull}:

\begin{definition}[Linear Representation Hypothesis]
\label{def:app-lrh}
A representation $\mathbf{h}\in\R^d$ satisfies the Linear Representation
Hypothesis, abbreviated \LRH, if there exists a sparsity constant $k$, an
overcomplete dictionary
$\mathbf{D}=(\mathbf{d}_1,\ldots,\mathbf{d}_c)\in\R^{c\times d}$, and a
coefficient vector $\mathbf{z}\in\R^c$ such that
$\mathbf{h}\approx \mathbf{z}\mathbf{D}$, under the following conditions:
\[
\left\{
\begin{array}{lll}
\text{(i) Overcompleteness:}
& c\gg d, \\
\text{(ii) Quasi-orthogonality:}
& \mu(\mathbf{D}) := \max_{a\ne b}\abs{\mathbf{d}_a\mathbf{d}_b^\top}\le \varepsilon,
& \norm{\mathbf{d}_a}_2=1\ \forall a, \\
\text{(iii) Sparsity:}
& \abs{\supp(\mathbf{z})}\le k\ll c.
\end{array}
\right.
\]
\end{definition}

The definition is local: it describes one activation vector $\mathbf{h}\in\R^{d}$.  In practice, the
object of interest is a collection of activations $\mathbf{H}\in\R^{n\times d}$.  If
$\mathbf{H}$ collects them row by row, applying
\Cref{def:app-lrh} to each row gives a sparse code matrix
$\mathbf{Z}\in\R^{n\times c}$ and the approximation
$\mathbf{H}\approx \mathbf{Z}\mathbf{D}$.  Thus, \textit{under \LRH, concept extraction
becomes the problem of recovering both the dictionary and the sparse code matrix}.

This inverse problem is a dictionary-learning problem. As argued by \citet{fel2025rabbit_hull}, different concept
extraction methods correspond to different constraints on the code
$\mathbf{Z}$ and dictionary $\mathbf{D}$: clustering methods constrain codes to
select prototypical examples as atoms, orthogonal factorization constrains the dictionary to be
orthogonal, nonnegative factorization imposes nonnegativity, and sparse autoencoders
constrain codes to be produced by an encoder with a sparsity-inducing mechanism
\citep{hindupur2025projecting_assumptions}. \textit{These are not
neutral algorithmic choices: each constraint encodes a different prior over what
kind of decomposition should count as a good explanation.}

Under \LRH, the relevant assumptions are \textbf{overcompleteness} and \textbf{sparsity}. Therefore
the dictionary-learning problem aligned with \LRH is sparse coding: learn an
overcomplete dictionary and assign each activation a sparse coefficient vector.

\begin{definition}[Sparse coding induced by \LRH]
\label{def:app-sparse-coding}
Given activations $\mathbf{H}\in\R^{n\times d}$, a latent width $c$, and a
sparsity regularizer $\mathcal{R}$, the sparse-coding problem induced by \LRH is
\[
    (\mathbf{Z}^\star,\mathbf{D}^\star)
    \in
    \argmin_{\mathbf{Z},\mathbf{D}}
    \frac{1}{n}\sum_{i=1}^n
    \left[
    \norm{\mathbf{h}_i-\mathbf{z}_i\mathbf{D}}_2^2
    +
    \lambda\mathcal{R}(\mathbf{z}_i)
    \right],
\]
where $\mathbf{z}_i$ is the sparse code assigned to activation $\mathbf{h}_i$,
$\mathbf{D}$ is the learned dictionary, and $\lambda\ge 0$ controls the sparsity
strength.
\end{definition}

Sparse coding is faithful to the algebraic content of \LRH: it searches for an
overcomplete dictionary and sparse coefficients that reconstruct the observed
activations. However, it is computationally expensive at scale. For a fixed
dictionary $\mathbf{D}$, the code $\mathbf{z}_i$ of each activation $\mathbf{h}_i$ is itself the solution of a
sparse inference problem:
\[
    \mathbf{z}^{\star}(\mathbf{h}_i;\mathbf{D})
    \in
    \argmin_{\mathbf{z}\in\R^c}
    \left[
    \norm{\mathbf{h}_i-\mathbf{z}\mathbf{D}}_2^2
    +
    \lambda\mathcal{R}(\mathbf{z})
    \right].
\]
Thus sparse coding requires solving, or approximately solving, a new optimization
problem for every activation. This is the computational motivation for
amortized inference: instead of solving the sparse inference problem separately
for each $\mathbf{h}_i$, one learns a shared function $f_{\saeparams}$ that predicts the code directly from the activation, $\mathbf z_i = f_{\saeparams}(\mathbf h_i)$.

\begin{definition}[Amortized sparse inference]
\label{def:app-amortized-inference}
Let
\[
    \mathcal{Q}(\mathbf{z};\mathbf{h},\mathbf{D})
    :=
    \norm{\mathbf{h}-\mathbf{z}\mathbf{D}}_2^2
    +
    \lambda\mathcal{R}(\mathbf{z})
\]
be the sparse inference objective for the code $\mathbf{z}$ of an activation
$\mathbf{h}$ under dictionary $\mathbf{D}$. The exact sparse-coding solution is
\[
    \mathbf{z}^{\star}(\mathbf{h};\mathbf{D})
    \in
    \argmin_{\mathbf{z}\in\R^c}
    \mathcal{Q}(\mathbf{z};\mathbf{h},\mathbf{D}).
\]
Sparse inference is \emph{amortized} when this per-activation optimization is
replaced by a shared inference function $f_{\saeparams}$,
\[
    \mathbf{z}_i=f_{\saeparams}(\mathbf{h}_i)
    \qquad \forall i.
\]
The parameters $\saeparams$ of $f_{\saeparams}$ are learned across the dataset, so the cost
of inference is shared across activations.
\end{definition}

This definition adapts the standard idea of amortized inference to sparse
coding. In amortized variational inference, one learns a common inference
function that maps each observation to the approximate latent object associated
with that observation \citep{margossian2023amortized}. In the sparse-coding
setting, the latent object is not a posterior distribution but a sparse
coefficient vector. The benefit is speed: once $f_{\saeparams}$ is learned, codes can
be produced by a forward pass rather than by solving the sparse-coding problem
above. The cost is approximation: the encoder may not return the exact sparse
coding solution. Following \citet{oneill2024compute_optimal}, we can measure this
cost by the amortization gap
\[
    \Delta_{\mathrm{amort}}(\saeparams;\mathbf{h},\mathbf{D})
    :=
    \mathcal{Q}(f_{\saeparams}(\mathbf{h});\mathbf{h},\mathbf{D})
    -
    \min_{\mathbf{z}\in\R^c}
    \mathcal{Q}(\mathbf{z};\mathbf{h},\mathbf{D})
    \ge 0.
\]
For a dataset, the corresponding average gap is $\Delta_{\mathrm{amort}}(\saeparams;\mathbf{H},\mathbf{D}) = \frac{1}{n}\sum_{i=1}^n \Delta_{\mathrm{amort}}(\saeparams;\mathbf{h}_i,\mathbf{D})$.
This gap is zero only when the encoder family is expressive enough, and trained
well enough, to return an optimal sparse code for each activation. In standard
\SAEs, the encoder is usually a restricted linear--nonlinear map, so the
amortization gap can be nonzero even when better sparse codes exist
\citep{oneill2024compute_optimal}.

We can now formalize the central point: a sparse autoencoder is an amortized
solution to the sparse inference problem induced by \LRH.  Instead of solving
\[
    \argmin_{\mathbf{z}\in\R^c}
    \mathcal{Q}(\mathbf{z};\mathbf{h}_i,\mathbf{D})
\]
separately for every activation $\mathbf{h}_i$, an \SAE learns an encoder
$f_{\saeparams}$ that returns the code directly.  The bilevel formulation of
\citet{hindupur2025projecting_assumptions} makes this precise: the outer problem
is the sparse dictionary-learning objective, while the inner constraint
represents the architecture-dependent encoder map that determines which codes
can be returned by the \SAE.

\begin{definition}[Sparse autoencoder as amortized sparse inference]
\label{def:app-sae}
A sparse autoencoder, abbreviated \SAE, for activations
$\mathbf{H}\in\R^{n\times d}$ consists of an encoder
$f_{\saeparams}:\R^d\to\R^c$ and a decoder dictionary
$\mathbf{D}\in\R^{c\times d}$. It solves the constrained sparse-coding problem
\[
    \argmin_{\mathbf{D},\mathbf{Z},\saeparams}
    \frac{1}{n}\sum_{i=1}^n
    \left[
    \norm{\mathbf{h}_i-\mathbf{z}_i\mathbf{D}}_2^2
    +
    \lambda\mathcal{R}(\mathbf{z}_i)
    \right],
    \qquad
    \text{s.t.}\quad
    \mathbf{z}_i
    =
    f_{\saeparams}(\mathbf{h}_i)
    \in
    \argmin_{\boldsymbol{\pi}\in\mathcal{S}_{\saeparams}}
    F_{\saeparams}(\boldsymbol{\pi};\mathbf{h}_i)
    \quad \forall i,
\]
under the following conditions:
\[
\left\{
\begin{array}{lll}
\text{(i) Overcomplete latent width:}
& c\gg d, \\[0.25em]
\text{(ii) Amortized inference:}
& \mathbf{z}_i=f_{\saeparams}(\mathbf{h}_i),
& \forall i\in\{1,\ldots,n\}, \\[0.25em]
\text{(iii) Encoder-induced constraint:}
& \mathbf{z}_i\in
\argmin_{\boldsymbol{\pi}\in\mathcal{S}_{\saeparams}}
F_{\saeparams}(\boldsymbol{\pi};\mathbf{h}_i),
& \forall i\in\{1,\ldots,n\}, \\[0.25em]
\text{(iv) Sparse codes:}
& \mathcal{R}(\mathbf{z}_i)\ \text{penalizes dense codes},
& \text{for example } \mathcal{R}(\mathbf{z}_i)=\norm{\mathbf{z}_i}_1, \\[0.25em]
\text{(v) Linear decoding:}
& \widehat{\mathbf{h}}_i=\mathbf{z}_i\mathbf{D}.
\end{array}
\right.
\]
Here $F_{\saeparams}$ is a variational formulation of the encoder, and
$\mathcal{S}_{\saeparams}$ is the architecture-induced feasible set for the encoder
output. For a standard one-layer \SAE, this constraint can be instantiated by
\[
    f_{\saeparams}(\mathbf{h})=g(\mathbf{h}\mathbf{W}+\mathbf{b}),
\]
with the choice of nonlinearity $g$ determining the encoder family (e.g., ReLU or TopK \SAEs).
\end{definition}

The definition makes precise how an \SAE relates to sparse coding. The outer
objective is the same reconstruction-versus-sparsity tradeoff that appears in
sparse dictionary learning. The difference is the constraint
$\mathbf{z}_i=f_{\saeparams}(\mathbf{h}_i)$. This constraint makes inference fast,
because the code is obtained by a learned encoder rather than by solving
$\argmin_{\mathbf{z}}\mathcal{Q}(\mathbf{z};\mathbf{h}_i,\mathbf{D})$ for every
activation. But it also changes the solution class. The encoder can be a
computationally restricted sparse-inference procedure, so its output can differ
from the code selected by exact or more expressive sparse inference. This is the
amortization cost of \SAEs: speed is obtained by restricting inference to the
range of $f_{\saeparams}$ \citep{oneill2024compute_optimal}.

This subsection introduced the standard setting behind \SAEs and the \LRH. \LRH motivates sparse,
overcomplete dictionary learning; sparse coding is the dictionary-learning
solution aligned with that hypothesis; and an \SAE is an amortized approximation
to sparse coding. The next subsection asks what probabilistic prior is implicitly
imposed by the vanilla \SAE objective. \textit{We want to understand which additional assumptions it makes beyond the \LRH, and whether they are realistic}.\footnote{This is the key passage of the first part of the theory: \SAEs arise as an approximation to the natural solution of concept discovery under \LRH (namely sparse coding), as sparse coding matches the assumptions of \LRH on the data-generating process. However, \SAEs add hidden assumptions on top of those of the \LRH, which are unrealistic for vision and make \SAEs no longer the natural solution. These assumptions become apparent only when a Bayesian approach is adopted.} The next subsection builds on \citet{lubana2025priors_time}.

\subsection{The \MAP Interpretation of Vanilla \SAEs}
\label{app:vanilla-map}
This subsection gives the probabilistic version of the previous subsection's
claim.  There, sparse coding was introduced as the dictionary-learning objective
aligned with \LRH: under \Cref{def:app-lrh}, activations are assumed to be sparse
linear combinations of atoms in an overcomplete dictionary, and sparse coding is
the optimization problem that recovers such codes.  Here we show that this same
objective arises as the \MAP estimator, in the sense of
\Cref{def:app-map-estimator}, for a noisy version of \LRH.\footnote{Noise is required for a Bayesian interpretation of the objective: it turns the deterministic approximation into a likelihood.} \textit{The \MAP perspective is the precise Bayesian sense in which sparse coding, and consequently \SAEs as amortized
approximations to sparse coding, are natural tools for \LRH.} The Bayesian reading is also useful because it exposes an assumption that is
hidden in the purely algebraic view. \textit{\LRH asks for sparsity and
overcompleteness, but the vanilla \SAE objective adds another choice: its
sparsity prior is factorized over indexed activations, imposing an independence assumption between feature activations of the same input}.  Thus the \MAP view both
justifies why sparse coding matches \LRH and reveals why vanilla \SAEs may still
be mismatched when the indexed activations have structure, such as time in
language or space in vision.

\intuition{\MAP alignment and a hidden independence prior}
{Sparse coding is the \MAP formalization of the optimization problem suggested by
\LRH.  A noisy linear reconstruction model gives the data-fit term, and a
sparsity-promoting prior gives the sparse-code term.  But once the sparsity cost
is written as a sum over indexed activations, the corresponding prior becomes a
product prior.  For patch tokens, this means the vanilla \SAE prior does not know
which patches are neighbors.}

We start from the noisy version of \LRH.  The deterministic statement
$\mathbf{H}\approx \mathbf{Z}\mathbf{D}$ is replaced by a generative model in
which activations are noisy linear reconstructions from sparse latent codes:
\[
    \mathbf{h}_i
    =
    \mathbf{z}_i\mathbf{D}
    +
    \boldsymbol{\psi}_i,
    \qquad
    \boldsymbol{\psi}_i\sim\mathcal{N}(0,\sigma^2 \mathbf{I}_d),
    \qquad
    i=1,\ldots,n.
\]
Equivalently, at the matrix level,
\[
    \mathbf{H}
    =
    \mathbf{Z}\mathbf{D}
    +
    \boldsymbol{\Psi}.
\]
The next proposition shows that sparse coding is the \MAP estimator of this
noisy \LRH model when the prior over codes is sparsity-promoting.

\begin{proposition}[Sparse coding as \MAP inference for noisy \LRH]
\label{prop:app-sparse-coding-map}
Assume the noisy \LRH model
\[
    \mathbf{h}_i
    =
    \mathbf{z}_i\mathbf{D}
    +
    \boldsymbol{\psi}_i,
    \qquad
    \boldsymbol{\psi}_i\sim\mathcal{N}(0,\sigma^2 \mathbf{I}_d),
\]
with conditionally independent noise across indexed activations.  Treat
$\mathbf{D}$ as fixed, and assume a sparsity-promoting prior over the code
matrix, namely a prior whose density decreases as the sparsity penalty
$\mathcal{R}(\mathbf{Z})$ increases.  For example, we may take the Gibbs prior
\[
    p(\mathbf{Z})
    \propto
    \exp\!\left(-\lambda\mathcal{R}(\mathbf{Z})\right),
    \qquad
    \lambda\ge 0.
\]
Then the \MAP estimator of $\mathbf{Z}$ is
\[
    \mathbf{Z}_{\mathrm{MAP}}
    \in
    \argmin_{\mathbf{Z}}
    \frac{1}{2\sigma^2}
    \norm{\mathbf{H}-\mathbf{Z}\mathbf{D}}_F^2
    +
    \lambda\mathcal{R}(\mathbf{Z}).
\]
After absorbing $\sigma^2$ into the regularization weight, this is the
sparse-coding objective in \Cref{def:app-sparse-coding}.
\end{proposition}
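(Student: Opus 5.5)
The plan is to write down the posterior via Bayes' rule and show that maximizing it is the same as minimizing its negative logarithm, in the sense of \Cref{def:app-map-estimator}. For fixed $\mathbf{D}$ and observed $\mathbf{H}$,
\[
    p(\mathbf{Z}\mid\mathbf{H})
    =\frac{p(\mathbf{H}\mid\mathbf{Z})\,p(\mathbf{Z})}{p(\mathbf{H})}
    \propto p(\mathbf{H}\mid\mathbf{Z})\,p(\mathbf{Z}).
\]
The evidence $p(\mathbf{H})$ does not depend on $\mathbf{Z}$, so it can be dropped from the $\argmax$. Since $-\log$ is strictly decreasing, the maximizers of the posterior coincide with the minimizers of $-\log p(\mathbf{H}\mid\mathbf{Z})-\log p(\mathbf{Z})$. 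This requires the posterior to be positive wherever it is maximized, which holds because the Gaussian likelihood is strictly positive and the Gibbs prior is strictly positive.

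The first step is the likelihood. Conditional independence of the noise across indexed activations gives the factorization $p(\mathbf{H}\mid\mathbf{Z})=\prod_{i=1}^n\mathcal{N}(\mathbf{h}_i;\mathbf{z}_i\mathbf{D},\sigma^2\mathbf{I}_d)$. Taking the negative logarithm of each Gaussian density gives
\[
    -\log p(\mathbf{H}\mid\mathbf{Z})
    =\frac{1}{2\sigma^2}\sum_{i=1}^n\norm{\mathbf{h}_i-\mathbf{z}_i\mathbf{D}}_2^2+\frac{nd}{2}\log(2\pi\sigma^2).
\]
The sum equals $\frac{1}{2\sigma^2}\norm{\mathbf{H}-\mathbf{Z}\mathbf{D}}_F^2$, because the squared Frobenius norm is the sum of the squared row norms. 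The log-normalizer $\frac{nd}{2}\log(2\pi\sigma^2)$ is constant in $\mathbf{Z}$.

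The second step is the prior. The Gibbs form gives $-\log p(\mathbf{Z})=\lambda\mathcal{R}(\mathbf{Z})+\log C$, where $C$ is the normalizing constant, again independent of $\mathbf{Z}$. Adding the two expressions and discarding the constants yields the stated objective, $\frac{1}{2\sigma^2}\norm{\mathbf{H}-\mathbf{Z}\mathbf{D}}_F^2+\lambda\mathcal{R}(\mathbf{Z})$.

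For the general sparsity-promoting prior, whose density is only assumed to decrease in $\mathcal{R}(\mathbf{Z})$, I would write $p(\mathbf{Z})\propto\exp(-\phi(\mathcal{R}(\mathbf{Z})))$ with $\phi$ increasing. The same argument then gives an objective with $\phi(\mathcal{R}(\mathbf{Z}))$ in place of $\lambda\mathcal{R}(\mathbf{Z})$. The Gibbs prior is the linear case $\phi(t)=\lambda t$, and this is the case that produces exactly the displayed objective.

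Finally, to match \Cref{def:app-sparse-coding}, I would multiply the objective by the positive constant $2\sigma^2$, which does not change its minimizers. This gives $\norm{\mathbf{H}-\mathbf{Z}\mathbf{D}}_F^2+2\sigma^2\lambda\,\mathcal{R}(\mathbf{Z})$, so the noise variance is absorbed into the regularization weight $\lambda'=2\sigma^2\lambda$. When $\mathcal{R}$ is row-additive, dividing by $n$ also leaves the minimizers unchanged and recovers the per-row sum $\frac{1}{n}\sum_i$ of the definition.

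I expect no step to be a real obstacle. The only point needing care is that the Gibbs prior must be normalizable, i.e.\ $0<\int\exp(-\lambda\mathcal{R}(\mathbf{Z}))\,d\mathbf{Z}<\infty$. This holds for $\mathcal{R}=\norm{\cdot}_1$ with $\lambda>0$. In the degenerate case $\lambda=0$ the prior is improper, and I would treat it as a flat improper prior for which the $\argmax$ still makes sense, remarking that the identification holds up to this caveat.
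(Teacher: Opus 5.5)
Your proposal is correct and follows the paper's proof essentially step for step. You factor the Gaussian likelihood over rows, collect the squared row norms into $\norm{\mathbf{H}-\mathbf{Z}\mathbf{D}}_F^2$, apply Bayes' rule with the Gibbs prior, and minimize the negative log posterior after dropping $\mathbf{Z}$-independent constants; your extra remarks (positivity for the log transform, normalizability and the improper $\lambda=0$ case, the general increasing $\phi$, and the need for row-additivity of $\mathcal{R}$ to match the per-row form of \Cref{def:app-sparse-coding}) are accurate refinements that the paper leaves implicit.
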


\begin{proof}
Because the noise variables are conditionally independent across indexed
activations, the likelihood factorizes over rows:
\[
    p(\mathbf{H}\mid \mathbf{Z},\mathbf{D})
    =
    \prod_{i=1}^n
    p(\mathbf{h}_i\mid \mathbf{z}_i,\mathbf{D}).
\]
For a single activation, the Gaussian noise assumption gives
\[
    p(\mathbf{h}_i\mid \mathbf{z}_i,\mathbf{D})
    =
    \frac{1}{(2\pi\sigma^2)^{d/2}}
    \exp\!\left(
    -\frac{1}{2\sigma^2}
    \norm{\mathbf{h}_i-\mathbf{z}_i\mathbf{D}}_2^2
    \right).
\]
Multiplying these terms over $i=1,\ldots,n$ gives
\[
    p(\mathbf{H}\mid \mathbf{Z},\mathbf{D})
    =
    \frac{1}{(2\pi\sigma^2)^{nd/2}}
    \exp\!\left(
    -\frac{1}{2\sigma^2}
    \sum_{i=1}^n
    \norm{\mathbf{h}_i-\mathbf{z}_i\mathbf{D}}_2^2
    \right).
\]
Since
\[
    \sum_{i=1}^n
    \norm{\mathbf{h}_i-\mathbf{z}_i\mathbf{D}}_2^2
    =
    \norm{\mathbf{H}-\mathbf{Z}\mathbf{D}}_F^2,
\]
the likelihood can be written as
\[
    p(\mathbf{H}\mid \mathbf{Z},\mathbf{D})
    \propto
    \exp\!\left(
    -\frac{1}{2\sigma^2}
    \norm{\mathbf{H}-\mathbf{Z}\mathbf{D}}_F^2
    \right).
\]
By Bayes' rule,
\[
    p(\mathbf{Z}\mid \mathbf{H},\mathbf{D})
    \propto
    p(\mathbf{H}\mid \mathbf{Z},\mathbf{D})p(\mathbf{Z}).
\]
Substituting the likelihood and the Gibbs sparsity prior gives
\[
    p(\mathbf{Z}\mid \mathbf{H},\mathbf{D})
    \propto
    \exp\!\left(
    -\frac{1}{2\sigma^2}
    \norm{\mathbf{H}-\mathbf{Z}\mathbf{D}}_F^2
    -
    \lambda\mathcal{R}(\mathbf{Z})
    \right).
\]
Therefore maximizing the posterior is equivalent to minimizing the negative log
posterior:
\[
    -\log p(\mathbf{Z}\mid \mathbf{H},\mathbf{D})
    =
    \frac{1}{2\sigma^2}
    \norm{\mathbf{H}-\mathbf{Z}\mathbf{D}}_F^2
    +
    \lambda\mathcal{R}(\mathbf{Z})
    +
    \mathrm{const}.
\]
Dropping the constant independent of $\mathbf{Z}$ gives the stated \MAP
objective.
\end{proof}

\paragraph{Proof intuition.}
The proof only translates noisy \LRH into a Bayesian inverse problem.  Gaussian
noise says that reconstructions with smaller squared error are more likely.  A
sparsity-promoting prior says that code matrices with lower sparsity penalty are
more likely.  The posterior combines these two preferences, so its negative
logarithm is exactly a reconstruction term plus a sparsity term.  This is the
probabilistic sense in which sparse coding is the natural optimization tool for
\LRH.

This is also the point at which the Bayesian reading becomes useful for critique.
From the algebraic perspective, the intended assumptions are sparsity and
overcompleteness.  From the Bayesian perspective, we can inspect the full prior
that the objective imposes on $\mathbf{Z}$.  The next proposition shows that the
standard tokenwise sparsity term does more than impose sparsity: it also imposes
independence across indexed activations.

We now specialize to the case where the indexed activations are \textbf{patch tokens from
one image}.  In this case, $\mathbf{H}\in\R^{p\times d}$ and
$\mathbf{Z}\in\R^{p\times c}$, where $p$ is the number of patches.

\begin{proposition}[Spatial independence prior of tokenwise \SAEs]
\label{prop:app-spatial-independence}
Let $\mathbf{H}\in\R^{p\times d}$ be the patch-token activation matrix of one
image, and let $\mathbf{Z}\in\R^{p\times c}$ be the corresponding code matrix.
Suppose the vanilla \SAE objective has the row-additive form
\[
    \mathcal{L}_{\SAE}(\mathbf{Z},\mathbf{D} \, ; \, \mathbf{H})
    =
    \sum_{i=1}^p
    \ell(\mathbf{h}_i,\mathbf{z}_i;\mathbf{D})
    +
    \lambda
    \sum_{i=1}^p
    \mathcal{R}(\mathbf{z}_i),
\]
where $\ell$ is the reconstruction loss for patch $i$.  If this objective is
interpreted as a \MAP objective, then the induced prior over patch codes
factorizes as
\[
    p(\mathbf{Z})
    =
    p(\mathbf{z}_1,\ldots,\mathbf{z}_p)
    =
    \prod_{i=1}^p p(\mathbf{z}_i).
\]
If the same regularizer is used at every patch, the marginal prior has the same
form at every patch.
\end{proposition}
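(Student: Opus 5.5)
The plan is to read the objective as a negative log posterior, in the sense of \Cref{def:app-map-estimator}, and then separate the likelihood part from the prior part. Following \Cref{prop:app-sparse-coding-map}, I interpret the reconstruction terms $\ell(\mathbf{h}_i,\mathbf{z}_i;\mathbf{D})$ as the negative log-likelihood $-\log p(\mathbf{h}_i\mid\mathbf{z}_i,\mathbf{D})$, up to an additive constant. Under conditionally independent noise across patches, the likelihood factorizes over rows. The remaining, $\mathbf{H}$-independent part of the energy, $\lambda\sum_i\mathcal{R}(\mathbf{z}_i)$, is then identified as the prior energy. The induced prior is the Gibbs density $p(\mathbf{Z})\propto\exp(-\lambda\sum_{i=1}^p\mathcal{R}(\mathbf{z}_i))$ with respect to Lebesgue measure on $\R^{p\times c}$, which is a product reference measure over patches.

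The key step is purely algebraic. The exponential of a sum is a product:
\[
    \exp\Bigl(-\lambda\sum_{i=1}^p\mathcal{R}(\mathbf{z}_i)\Bigr)=\prod_{i=1}^p\exp\bigl(-\lambda\mathcal{R}(\mathbf{z}_i)\bigr).
\]
It remains to handle the normalizing constant. Set $Z_i:=\int_{\R^c}\exp(-\lambda\mathcal{R}(\mathbf{z}))\,d\mathbf{z}$, and assume it is finite and positive; this is the same requirement $0<Z<\infty$ imposed in \Cref{def:app-pairwise-mrf-energy}, and it holds for instance for $\mathcal{R}=\norm{\cdot}_1$ with $\lambda>0$. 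By Tonelli's theorem, the joint normalizer equals $\prod_i Z_i$. Hence
\[
    p(\mathbf{Z})=\prod_{i=1}^p p(\mathbf{z}_i),\qquad p(\mathbf{z}_i)=Z_i^{-1}\exp\bigl(-\lambda\mathcal{R}(\mathbf{z}_i)\bigr).
\]
Integrating out all other rows confirms that each factor is indeed the marginal of $\mathbf{z}_i$. This also gives independence, since the joint density equals the product of its marginals. In the language of \Cref{def:app-pairwise-mrf-energy}, this is an \MRF with unary potentials only and no pairwise terms.

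For the final claim, if the same $\mathcal{R}$ (and $\lambda$) is used at every patch, then all $Z_i$ coincide and every marginal has the identical form $Z^{-1}\exp(-\lambda\mathcal{R}(\cdot))$. The prior is then i.i.d.\ across patches.

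The main obstacle is not computational but interpretive. A \MAP objective determines the posterior energy only up to an additive constant, so the split between "likelihood" and "prior" must be fixed by convention. I would make this explicit: the prior is defined as the part of the energy that does not depend on $\mathbf{H}$, matching the construction in \Cref{prop:app-sparse-coding-map}. With that convention, the normalizability condition is the only genuine hypothesis, and I would state it as an assumption in the proof.
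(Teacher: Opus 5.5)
Your proof is correct and follows the paper's argument: you identify the $\mathbf{H}$-independent term $\lambda\sum_{i}\mathcal{R}(\mathbf{z}_i)$ as the negative log prior, exponentiate it, and factor the exponential of a sum into a product of one-patch priors. The extra care you take with the normalizer (assuming $0<\int\exp(-\lambda\mathcal{R})<\infty$, then using Tonelli to show the factors are the true marginals) makes the paper's ``up to the normalizing constant'' step rigorous without changing the route.
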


The proof is the spatial analogue of the temporal factorization argument in
\citet[App.~I.1]{lubana2025priors_time}; we adapt their proof strategy from the
time index of language-model activations to the patch index of vision-model
activations.

\begin{proof}
In a \MAP objective, the part of the loss that does not depend on the observed
activation is the negative log prior.  In the row-additive \SAE objective, this
prior contribution is
\[
    \lambda
    \sum_{i=1}^p
    \mathcal{R}(\mathbf{z}_i).
\]
Thus the corresponding negative log prior has the form
\[
    -\log p(\mathbf{Z})
    =
    \lambda
    \sum_{i=1}^p
    \mathcal{R}(\mathbf{z}_i)
    +
    \mathrm{const}.
\]
Exponentiating both sides gives
\[
    p(\mathbf{Z})
    \propto
    \exp\!\left(
    -\lambda
    \sum_{i=1}^p
    \mathcal{R}(\mathbf{z}_i)
    \right).
\]
Since the exponential of a sum is the product of exponentials,
\[
    p(\mathbf{Z})
    \propto
    \prod_{i=1}^p
    \exp\!\left(
    -\lambda\mathcal{R}(\mathbf{z}_i)
    \right).
\]
Define the one-patch marginal prior
\[
    p(\mathbf{z}_i)
    \propto
    \exp\!\left(
    -\lambda\mathcal{R}(\mathbf{z}_i)
    \right).
\]
Then
\[
    p(\mathbf{Z})
    =
    \prod_{i=1}^p p(\mathbf{z}_i),
\]
up to the normalizing constant.  Therefore the patch codes are independent under
the prior.  If the same $\mathcal{R}$ is used for every patch, these marginals
have the same functional form.
\end{proof}

\paragraph{Proof intuition.}
The proof uses the same energy-to-probability conversion as the \MAP derivation.
A sum of one-patch sparsity costs becomes a product of one-patch priors.  Thus
the prior knows how to make each individual patch code sparse, but it has no term
that can couple neighboring patches.

\begin{remark}[Credit and scope of the argument]
This is the spatial analogue of the temporal-prior argument of
\citet{lubana2025priors_time}.  Their result concerns the time index in
language-model activations: standard \SAEs impose priors that are independent
across time.  Here the same Bayesian calculation is applied to the patch index
of a vision model.  The conclusion is therefore spatial: a tokenwise vision
\SAE imposes an independent prior over patch codes.  For the SAE-wise prior
characterization, see Appendix~I.2 of \citet{lubana2025priors_time}, which analyzes
ReLU, TopK, BatchTopK, and JumpReLU \SAEs.
\end{remark}

The previous proposition used a row-additive penalty for clarity.  The same
structural point applies more broadly to common \SAE variants.  ReLU \SAEs impose
a sparsity prior through an $\ell_1$ penalty.  TopK and JumpReLU \SAEs impose
sparsity through constraints or pseudo-hard thresholding.  BatchTopK couples
examples through a batch-level sparsity rule.  These variants differ in how they
shape sparsity inside each code vector, but none of these tokenwise mechanisms
introduces a spatial edge interaction between neighboring patches.

The same limitation can also be seen from a symmetry perspective.  If the
objective only contains rowwise reconstruction and rowwise sparsity terms, then
it cannot distinguish the original patch layout from a layout in which all patch
rows have been permuted together.

This is \Cref{prop:perm-invariance} in the main text, which we restate and prove
here.  Let $\mathbf{P}\in\R^{p\times p}$ be a permutation matrix, and write
$\mathcal{L}_{\SAE}(\mathbf{Z},\mathbf{D};\mathbf{H})$ for the tokenwise
objective.  If the reconstruction and sparsity terms are row-additive, then
\[
    \mathcal{L}_{\SAE}(\mathbf{P}\mathbf{Z},\mathbf{D};\mathbf{P}\mathbf{H})
    =
    \mathcal{L}_{\SAE}(\mathbf{Z},\mathbf{D};\mathbf{H}),
\]
so a tokenwise \SAE objective does not distinguish a coherent patch layout from
an arbitrarily permuted layout, provided the activations and codes are permuted
together.

\begin{proof}[Proof of \Cref{prop:perm-invariance}]
The Frobenius reconstruction term is invariant under left multiplication by a
permutation matrix:
\[
    \norm{\mathbf{P}\mathbf{H}-\mathbf{P}\mathbf{Z}\mathbf{D}}_F^2
    =
    \norm{\mathbf{P}(\mathbf{H}-\mathbf{Z}\mathbf{D})}_F^2
    =
    \norm{\mathbf{H}-\mathbf{Z}\mathbf{D}}_F^2.
\]
The row-additive sparsity term is also unchanged because $\mathbf{P}$ only
reorders rows:
\[
    \sum_i
    \mathcal{R}((\mathbf{P}\mathbf{Z})_i)
    =
    \sum_i
    \mathcal{R}(\mathbf{z}_i).
\]
Combining the two equalities proves the claim.
\end{proof}

\paragraph{Proof intuition.}
A permutation changes the names and order of the patch rows, but not the multiset
of patch losses.  Since the vanilla objective contains no graph term, it has no
way to know which patches were neighbors before the permutation.

This subsection explains what was hidden in the previous one.  Sparse coding is
the \MAP formalization of the noisy \LRH model, and vanilla \SAEs approximate
that sparse-coding estimator through amortized inference.  However, the Bayesian
reading shows that the vanilla objective also adds a factorized prior over
indexed codes.  For vision, this means an independent prior over patch codes.
The next subsection replaces that spatially factorized prior with a Markov-field
prior over neighboring patch codes.

\subsection{The \MFLRH Prior}
\label{app:mflrh-prior}

The previous subsection identified the factorized prior supplied by the
vanilla tokenwise objective. The \MFLRH of \Cref{def:mf-lrh} replaces that
choice with local interactions on the patch graph, while retaining the linear
representation $\mathbf{H}\approx(\mathbf{S}+\mathbf{R})\mathbf{D}$.
The smooth field models spatially persistent content, and the innovation
field models local departures. Both fields are continuous, with attractive
interactions for $\mathbf{S}$ and stabilized repulsive interactions for
$\mathbf{R}$.

For the patch graph of \Cref{def:app-patch-graph}, write
$\kappa_{\mathcal{G}}=\lambda_{\max}(\mathbf{L})$ and
$\widetilde{\mathbf{L}}=\kappa_{\mathcal{G}}\mathbf{I}_p-\mathbf{L}$.
By the trace identities in \Cref{lem:app-laplacian-trace}, the prior in
\Cref{eq:mflrh-prior} is
$p(\mathbf{S},\mathbf{R}\mid\mathcal{G})\propto
\exp[-\mathcal{J}_{\mathbf{L}}(\mathbf{S},\mathbf{R})]$, where
\begin{equation}
\label{eq:app-mf-prior-energy}
    \mathcal{J}_{\mathbf{L}}(\mathbf{S},\mathbf{R})
    =\lambda_S\norm{\mathbf{S}}_1
       +\lambda_R\norm{\mathbf{R}}_1
       +\frac{\mu}{2}\left[
           \Tr(\mathbf{S}^\top\mathbf{L}\mathbf{S})
           +\Tr(\mathbf{R}^\top\widetilde{\mathbf{L}}\mathbf{R})
       \right],
    \quad \lambda_S,\lambda_R>0,\quad \mu\geq0.
\end{equation}
The first quadratic penalizes variation between neighbors. The second favors
contrast at a fixed field magnitude, with the shift by
$\kappa_{\mathcal{G}}$ keeping its energy nonnegative. The $\ell_1$ terms
relax the hard sparsity condition of \LRH: the continuous prior can yield
sparse posterior modes, but its samples are not exactly sparse.
\Cref{prop:mflrh-interaction-signs} below verifies normalizability and the
interaction signs on both the real code space and the nonnegative orthant.

\begin{proposition}[Attraction and repulsion in \MFLRH]
\label{prop:mflrh-interaction-signs}
Consider the continuous prior in \Cref{eq:mflrh-prior}, with
$\lambda_S,\lambda_R>0$, $\mu\geq0$, symmetric nonnegative graph weights,
and $\kappa_{\mathcal{G}}=\lambda_{\max}(\mathbf{L})$, where each undirected
edge is counted once. The fields $\mathbf{S},\mathbf{R}\in\R^{p\times c}$
range over the full real domain, or both are restricted to the nonnegative
orthant, and the $\ell_1$ penalties are entrywise. The prior is normalizable,
and its smooth and innovation fields are respectively attractive and repulsive
pairwise \MRFs, coordinatewise in the dictionary-code basis.
\end{proposition}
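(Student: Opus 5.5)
The proof has two independent parts: normalizability, and the interaction signs. Both start from the energy form $\mathcal{J}_{\mathbf{L}}(\mathbf{S},\mathbf{R})$ in \Cref{eq:app-mf-prior-energy}. For normalizability, the plan is to show $\int \exp(-\mathcal{J}_{\mathbf{L}})<\infty$ on either domain, with the integral of a positive integrand automatically positive. By \Cref{lem:app-laplacian-trace} we have $\Tr(\mathbf{S}^\top\mathbf{L}\mathbf{S})\ge 0$, and by \Cref{cor:app-stabilized-repulsion} with $\beta=\kappa_{\mathcal G}$ we have $\Tr(\mathbf{R}^\top\widetilde{\mathbf{L}}\mathbf{R})\ge0$. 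Since $\mu\ge0$, the quadratic part is nonnegative, which gives
\[
\exp(-\mathcal{J}_{\mathbf{L}})\le \exp(-\lambda_S\norm{\mathbf{S}}_1-\lambda_R\norm{\mathbf{R}}_1).
\]
The right-hand side factorizes into one-dimensional Laplace, or exponential on the orthant, densities. Each of these is integrable because $\lambda_S,\lambda_R>0$. Finiteness of the partition function follows, and $Z>0$ because the integrand is strictly positive on a set of positive measure.

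For the interaction signs, I will first rewrite the energy coordinatewise in the dictionary-code basis. Writing $s_{ia}$ for entry $(i,a)$ and using \Cref{lem:app-laplacian-trace} columnwise, I expand $\Tr(\mathbf{S}^\top\mathbf{L}\mathbf{S})=\sum_a\sum_{\{i,j\}\in\mathcal E}w_{ij}(s_{ia}-s_{ja})^2$. Similarly,
\[
\Tr(\mathbf{R}^\top\widetilde{\mathbf{L}}\mathbf{R})=\kappa_{\mathcal G}\sum_{i,a}r_{ia}^2-\sum_a\sum_{\{i,j\}\in\mathcal E}w_{ij}(r_{ia}-r_{ja})^2.
\]
I then identify the unary potentials $U$ and the pairwise potentials $P$ of \Cref{def:app-attractive-repulsive-mrf}. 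Since $\mathcal{J}_{\mathbf{L}}$ is separable in $\mathbf{S}$ and $\mathbf{R}$, the joint density factorizes, and each field is itself a pairwise \MRF on $\mathcal G$. Its variables are the scalars $x_{ia}$, and coupling occurs only between $(i,a)$ and $(j,a)$ with $\{i,j\}\in\mathcal E$.

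\begin{itemize}
\item For $\mathbf{S}$, the unary potentials are $U(s_{ia})=\lambda_S|s_{ia}|$ and the pairwise potentials are $P(s_{ia},s_{ja})=\tfrac{\mu}{2}w_{ij}(s_{ia}-s_{ja})^2$.
\item For $\mathbf{R}$, the unary potentials are $\lambda_R|r_{ia}|+\tfrac{\mu\kappa_{\mathcal G}}{2}r_{ia}^2$, and the pairwise potentials are $-\tfrac{\mu}{2}w_{ij}(r_{ia}-r_{ja})^2$.
\end{itemize}

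Each pairwise term is $C^2$, with mixed partial $-\mu w_{ij}\le0$ for $\mathbf{S}$ and $+\mu w_{ij}\ge0$ for $\mathbf{R}$. By the second-derivative characterization cited from \citet{bach2019submodular}, the $\mathbf{S}$ potentials are submodular and the $\mathbf{R}$ potentials are supermodular. The case $\mu=0$ or $w_{ij}=0$ falls under the zero-interaction boundary case.

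To avoid relying on differentiability, I can also verify the lattice inequality directly for $f(x,y)=\pm(x-y)^2$. The cross term $-2xy$ satisfies $xy+x'y'\le \max(x,x')\max(y,y')+\min(x,x')\min(y,y')$, while the squares are modular. On the orthant, this check restricts verbatim to the product of intervals $[0,\infty)$. The unary parts, including $\lambda|x|$ and $x^2$, are separable and therefore modular, so they do not affect the classification.

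The main obstacle is the bookkeeping that makes the factorization into a legitimate pairwise \MRF in the sense of \Cref{def:app-attractive-repulsive-mrf}. The quadratic magnitude term must be moved into the unary potentials of $\mathbf{R}$, so that repulsion refers only to the edge terms, while normalizability uses the full energy. There is a further subtlety: the repulsive pairwise energies alone are unbounded below. The argument must therefore use \Cref{cor:app-stabilized-repulsion} on the whole energy, not edge by edge. I would state this explicitly, so that normalizability is attributed to the combination of the $\kappa_{\mathcal G}$ shift and the $\ell_1$ terms. The $\ell_1$ terms are what handle the zero-quadratic-cost directions, namely constant modes for $\mathbf{S}$ and top eigenvectors of $\mathbf{L}$ for $\mathbf{R}$.
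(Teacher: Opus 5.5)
Your proposal is correct and follows essentially the same route as the paper. It uses the same coordinatewise split into unary and pairwise potentials, with the $\kappa_{\mathcal G}$ magnitude term placed in the innovation unary potentials, the same mixed-derivative signs $-\mu w_{ij}$ and $+\mu w_{ij}$, and the same domination of $\exp(-\mathcal{J}_{\mathbf{L}})$ by the $\ell_1$ Laplace/exponential envelope via \Cref{lem:app-laplacian-trace} and \Cref{cor:app-stabilized-repulsion}. Your direct rearrangement-inequality check of the lattice condition is a small addition the paper omits, and it makes the classification independent of the second-derivative characterization.
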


\intuition{Shared content and local departures}
{In \MFLRH, these preferences encourage the two fields to specialize.
The smooth field $\mathbf{S}$ explains content shared across patches,
such as objects, materials, and regions, so attraction encourages its
features to persist spatially. The innovation field $\mathbf{R}$ explains
local departures from that shared content. For example, neighboring
patches of a wall suggest a common background, while patches intersecting
a thin pole require an additional local explanation. An innovation feature
that activates at the pole and falls away in the surrounding wall expresses
precisely this spatial contrast.

Together with sparsity, stabilized repulsion encourages the innovation
stream to concentrate on such departures. The spatial penalties thus favor
persistent structure in $\mathbf{S}$ and local variation in $\mathbf{R}$,
guiding how information is allocated between the streams. This is a
modeling preference for complementary explanations; reconstruction supplies
the evidence for which innovations are needed.}

\begin{proof}
The prior energy separates between streams and across feature coordinates.
For each feature coordinate $a$, the scalar edge energies are
\[
    P^S_{ij}(s_{ia},s_{ja})
    =\frac{\mu w_{ij}}{2}(s_{ia}-s_{ja})^2,
    \qquad
    P^R_{ij}(r_{ia},r_{ja})
    =-\frac{\mu w_{ij}}{2}(r_{ia}-r_{ja})^2.
\]
Their mixed derivatives satisfy
\[
    \frac{\partial^2 P^S_{ij}}{\partial s_{ia}\,\partial s_{ja}}
    =-\mu w_{ij}\leq0,
    \qquad
    \frac{\partial^2 P^R_{ij}}{\partial r_{ia}\,\partial r_{ja}}
    =\mu w_{ij}\geq0.
\]
Thus the smooth edge energies are submodular and the innovation edge energies
are supermodular. The remaining contributions are unary:
\[
    U^S_i(s_{ia})=\lambda_S|s_{ia}|,
    \qquad
    U^R_i(r_{ia})=\lambda_R|r_{ia}|
        +\frac{\mu\kappa_{\mathcal{G}}}{2}r_{ia}^2.
\]
These scalar unary terms are modular and do not change either classification.
Each feature coordinate therefore has attractive smooth interactions and
repulsive innovation interactions on $\mathcal{G}$. Grouping coordinates by
patch preserves the pairwise Markov structure. Zero interactions, including
the case $\mu=0$, satisfy both inequalities.

It remains to verify normalizability. By \Cref{lem:app-laplacian-trace}, the
combined quadratic contribution to the prior energy is
\[
    Q(\mathbf{S},\mathbf{R})
    =\frac{\mu}{2}\left[
        \Tr(\mathbf{S}^\top\mathbf{L}\mathbf{S})
        +\Tr\bigl(\mathbf{R}^\top
            (\kappa_{\mathcal{G}}\mathbf{I}_p-\mathbf{L})\mathbf{R}\bigr)
    \right].
\]
The Laplacian term is nonnegative by its squared-difference identity, and the
shifted term is nonnegative by \Cref{cor:app-stabilized-repulsion}. Since
$\mu\geq0$, we have $Q(\mathbf{S},\mathbf{R})\geq0$. Writing the full energy as
$\mathcal{J}_{\mathbf{L}}(\mathbf{S},\mathbf{R})=
\lambda_S\norm{\mathbf{S}}_1+\lambda_R\norm{\mathbf{R}}_1
+Q(\mathbf{S},\mathbf{R})$, the unnormalized density satisfies
\[
    0<\exp\!\bigl(-\mathcal{J}_{\mathbf{L}}(\mathbf{S},\mathbf{R})\bigr)
    \leq
    \exp\!\left(-\lambda_S\norm{\mathbf{S}}_1
                -\lambda_R\norm{\mathbf{R}}_1\right).
\]
The upper bound is integrable over the full real domain and hence over the
nonnegative orthant. Strict positivity gives a positive normalizing constant,
so $0<Z<\infty$ on either domain.
\end{proof}

\subsection{From \MFLRH to \SpatialSAE}
\label{app:continuous-mf-lrh-to-spatial-sae}

We now keep the Gaussian observation model from \Cref{app:vanilla-map} and
derive the inference objective under the \MFLRH prior. We then obtain the
practical \SpatialSAE objective by amortizing this estimator and establish
that its graph dependence is invariant to arbitrary relabelings of the
patches.

\begin{proof}[Proof of \Cref{thm:spatial-map}]
Fix the dictionary $\mathbf{D}$ and the externally specified graph
$\mathcal{G}$. Under
$\mathbf{H}=(\mathbf{S}+\mathbf{R})\mathbf{D}+\boldsymbol{\Psi}$ with
independent $\boldsymbol{\psi}_i\sim\mathcal{N}(0,\sigma^2\mathbf{I}_d)$,
Bayes' rule gives the negative log posterior, up to a constant independent
of the fields, as
\begin{equation}
\label{eq:app-mf-map-energy}
    E_{\mathbf{L}}(\mathbf{S},\mathbf{R};\mathbf{H},\mathbf{D})
    :=\frac{1}{2\sigma^2}
    \norm{\mathbf{H}-(\mathbf{S}+\mathbf{R})\mathbf{D}}_F^2
    +\mathcal{J}_{\mathbf{L}}(\mathbf{S},\mathbf{R}).
\end{equation}
The independent Gaussian likelihood supplies the reconstruction term, and
\Cref{eq:app-mf-prior-energy} supplies the prior term. The \MAP estimator
therefore minimizes $E_{\mathbf{L}}$. Multiplying this energy by
$\sigma^2>0$ preserves its minimizers. Absorbing that factor into
$\lambda_S$, $\lambda_R$, and $\mu$ gives exactly \Cref{eq:spatial-map}.
\end{proof}

\paragraph{Amortizing the spatial estimator.}
As in \Cref{def:app-amortized-inference}, an encoder replaces a separate
optimization for each image by a shared prediction,
$(\mathbf{S}_{\saeparams},\mathbf{R}_{\saeparams})
=F_{\saeparams}(\mathbf{H};\mathbf{L})$.
Substituting these fields into the rescaled posterior energy and averaging
over images gives the \SpatialSAE objective in
\Cref{eq:spatialsae-objective}. The dictionary is learned with unit-norm
atoms, $\norm{\mathbf{d}_a}_2=1$. The encoder restricts the available codes,
so its predictions approximate the spatial \MAP estimator with a possible
amortization gap, just as in vanilla sparse coding.

The dictionary partition in \Cref{sec:ssae-encoder} restricts the two
fields to complementary coordinate blocks, with zeros elsewhere. This gives
$\abs{\supp(\mathbf{z}_i)}
=\abs{\supp(\mathbf{s}_i)}+\abs{\supp(\mathbf{r}_i)}$;
such additivity need not hold in general.
The exact \MAP derivation above uses positive $\ell_1$ coefficients.
TopK and BatchTopK instead enforce sparsity through the encoder;
\Cref{app:real-impl} details these variants and the width normalization
of the spatial penalties.

\paragraph{Graph layout and patch indexing.}
The preceding derivation makes the graph dependence of the objective
explicit. We now prove the symmetry claim from \Cref{sec:spatialsae-map}:
this dependence is on the spatial relations encoded by the graph rather than
on the arbitrary indices assigned to its vertices.

\begin{proposition}[Spatial layout and patch indexing]
\label{prop:app-layout-not-order}
Let $E_{\mathbf{L}}$ be the objective in \Cref{eq:app-mf-map-energy},
with $c\geq1$ and the loop-free patch graph of
\Cref{def:app-patch-graph}. The fields range over the real code space or
the nonnegative orthant. For any permutation matrix
$\mathbf{P}\in\R^{p\times p}$, simultaneously relabeling the activations,
fields, and graph preserves the energy:
\[
    E_{\mathbf{P}\mathbf{L}\mathbf{P}^\top}
    (\mathbf{P}\mathbf{S},\mathbf{P}\mathbf{R};
     \mathbf{P}\mathbf{H},\mathbf{D})
    =E_{\mathbf{L}}(\mathbf{S},\mathbf{R};\mathbf{H},\mathbf{D}).
\]
If $\mu>0$ and the graph is held fixed, the identity
\[
    E_{\mathbf{L}}(\mathbf{P}\mathbf{S},\mathbf{P}\mathbf{R};
                   \mathbf{P}\mathbf{H},\mathbf{D})
    =E_{\mathbf{L}}(\mathbf{S},\mathbf{R};\mathbf{H},\mathbf{D})
\]
holds for every activation matrix and pair of fields if and only if
$\mathbf{P}^\top\mathbf{L}\mathbf{P}=\mathbf{L}$. Equivalently,
$\mathbf{P}$ preserves weighted adjacency:
$\mathbf{P}^\top\mathbf{W}\mathbf{P}=\mathbf{W}$.
\end{proposition}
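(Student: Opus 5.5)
We prove the first identity term by term, and then derive the fixed-graph characterization from it.

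\textbf{Simultaneous relabeling.} Since $\mathbf{P}$ is orthogonal, the reconstruction term is preserved exactly as in the proof of \Cref{prop:perm-invariance}:
\[
\norm{\mathbf{P}\mathbf{H}-\mathbf{P}(\mathbf{S}+\mathbf{R})\mathbf{D}}_F^2
=\norm{\mathbf{H}-(\mathbf{S}+\mathbf{R})\mathbf{D}}_F^2 .
\]
The entrywise $\ell_1$ norms are invariant under row permutations. Relabeling preserves the spectrum of the Laplacian, so $\kappa_{\mathcal G}$ is unchanged and the shifted operator transforms as
\[
\kappa\mathbf{I}_p-\mathbf{P}\mathbf{L}\mathbf{P}^\top=\mathbf{P}\widetilde{\mathbf{L}}\mathbf{P}^\top .
\]
Cyclicity of the trace then gives
\[
\Tr\bigl((\mathbf{P}\mathbf{X})^\top\mathbf{P}\mathbf{A}\mathbf{P}^\top\mathbf{P}\mathbf{X}\bigr)=\Tr(\mathbf{X}^\top\mathbf{A}\mathbf{X})
\]
for $\mathbf{A}\in\{\mathbf{L},\widetilde{\mathbf{L}}\}$. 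The nonnegative orthant is closed under $\mathbf{P}$, so both domains are handled.

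\textbf{Fixed graph, sufficiency.} Suppose $\mathbf{P}^\top\mathbf{L}\mathbf{P}=\mathbf{L}$. Then $\mathbf{P}\mathbf{L}\mathbf{P}^\top=\mathbf{L}$, and the first identity gives the claim directly.

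\textbf{Fixed graph, necessity.} Fix $\mathbf{H}$. The reconstruction and $\ell_1$ terms already cancel, so invariance reduces to
\[
\Tr\bigl(\mathbf{S}^\top(\mathbf{P}^\top\mathbf{L}\mathbf{P}-\mathbf{L})\mathbf{S}\bigr)
+\Tr\bigl(\mathbf{R}^\top(\mathbf{P}^\top\widetilde{\mathbf{L}}\mathbf{P}-\widetilde{\mathbf{L}})\mathbf{R}\bigr)=0
\]
for all admissible $\mathbf{S},\mathbf{R}$, after dividing by $\mu>0$. Take $\mathbf{R}=\mathbf{0}$ and let $\mathbf{S}$ have a single nonzero column $\mathbf{v}$, which is possible since $c\ge1$. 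This yields $\mathbf{v}^\top\mathbf{M}\mathbf{v}=0$, where $\mathbf{M}=\mathbf{P}^\top\mathbf{L}\mathbf{P}-\mathbf{L}$ is symmetric.

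\textbf{Main obstacle.} On the full real domain, polarization immediately gives $\mathbf{M}=0$. On the nonnegative orthant only $\mathbf{v}\ge0$ is available, so a separate argument is needed.
\begin{itemize}
\item Taking $\mathbf{v}=\mathbf{e}_i$ gives $M_{ii}=0$.
\item Taking $\mathbf{v}=\mathbf{e}_i+\mathbf{e}_j$ gives $M_{ii}+M_{jj}+2M_{ij}=0$, hence $M_{ij}=0$.
\end{itemize}
Both test vectors are nonnegative, so $\mathbf{M}=0$ on either domain.

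\textbf{Equivalence with weighted adjacency.} It remains to show that $\mathbf{P}^\top\mathbf{L}\mathbf{P}=\mathbf{L}$ holds if and only if $\mathbf{P}^\top\mathbf{W}\mathbf{P}=\mathbf{W}$. Conjugating $\mathbf{L}=\boldsymbol\Delta-\mathbf{W}$ by $\mathbf{P}$ gives $\mathbf{P}^\top\boldsymbol\Delta\mathbf{P}-\mathbf{P}^\top\mathbf{W}\mathbf{P}$. The matrix $\mathbf{P}^\top\boldsymbol\Delta\mathbf{P}$ is diagonal. The matrix $\mathbf{P}^\top\mathbf{W}\mathbf{P}$ has zero diagonal, because $\mathbf{W}$ is loop-free and conjugation maps diagonal entries to diagonal entries. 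Comparing off-diagonal parts therefore shows $\mathbf{P}^\top\mathbf{L}\mathbf{P}=\mathbf{L}$ if and only if $\mathbf{P}^\top\mathbf{W}\mathbf{P}=\mathbf{W}$. Indeed, when the weights agree the degrees agree as well, since $\boldsymbol\Delta=\diag(\mathbf{W}\1)$ and $\mathbf{P}^\top\1=\1$.
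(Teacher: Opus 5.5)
Your proof is correct and follows essentially the same route as the paper. You use row-permutation invariance of the Frobenius and $\ell_1$ terms together with $\kappa_{\mathcal G}\mathbf{I}_p-\mathbf{P}\mathbf{L}\mathbf{P}^\top=\mathbf{P}\widetilde{\mathbf{L}}\mathbf{P}^\top$ for the relabeling identity. For necessity on both domains you set $\mathbf{R}=\mathbf{0}$ and give $\mathbf{S}$ a single nonzero column tested on $\mathbf{e}_i$ and $\mathbf{e}_i+\mathbf{e}_j$. Finally, the loop-free comparison of diagonal and off-diagonal entries yields the equivalence with $\mathbf{P}^\top\mathbf{W}\mathbf{P}=\mathbf{W}$.

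Two small notes. The trace identity you credit to cyclicity actually follows directly from $\mathbf{P}^\top\mathbf{P}=\mathbf{I}_p$. Your explicit degree argument via $\mathbf{P}^\top\mathbf{1}=\mathbf{1}$ fills in a step the paper leaves implicit.
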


\begin{proof}
Permutation of rows preserves the Frobenius reconstruction loss and the
entrywise $\ell_1$ penalties. Relabeling the graph also preserves its
spectrum, so for $\mathbf{L}'=\mathbf{P}\mathbf{L}\mathbf{P}^\top$,
\[
    \widetilde{\mathbf{L}}'
    =\kappa_{\mathcal{G}}\mathbf{I}_p-\mathbf{L}'
    =\mathbf{P}\widetilde{\mathbf{L}}\mathbf{P}^\top.
\]
For either $\mathbf{A}=\mathbf{L}$ or
$\mathbf{A}=\widetilde{\mathbf{L}}$, orthogonality of $\mathbf{P}$ gives
\[
    \Tr\!\left((\mathbf{P}\mathbf{X})^\top
        (\mathbf{P}\mathbf{A}\mathbf{P}^\top)
        (\mathbf{P}\mathbf{X})\right)
    =\Tr(\mathbf{X}^\top\mathbf{A}\mathbf{X}).
\]
This proves invariance under simultaneous relabeling. With the graph
fixed, $\mathbf{P}^\top\mathbf{L}\mathbf{P}=\mathbf{L}$ also implies
$\mathbf{P}^\top\widetilde{\mathbf{L}}\mathbf{P}
=\widetilde{\mathbf{L}}$, so both spatial penalties are unchanged.

Conversely, suppose the fixed-graph identity holds for every pair of
fields. Set $\mathbf{R}=0$ and let a single column of $\mathbf{S}$ be
$\mathbf{x}$, with all other columns zero. The reconstruction and
sparsity terms are invariant under the simultaneous row permutation.
Since $\mu>0$, equality of the spatial penalties requires
\[
    \mathbf{x}^\top
    (\mathbf{P}^\top\mathbf{L}\mathbf{P}-\mathbf{L})
    \mathbf{x}=0
\]
for every admissible $\mathbf{x}$. Testing the coordinate vectors
$\mathbf{e}_i$ and their sums $\mathbf{e}_i+\mathbf{e}_j$ forces every
entry of this symmetric matrix to vanish. These tests are valid on both
domains. Finally, for a loop-free graph, the off-diagonal entries of
$\mathbf{L}$ are $-w_{ij}$ and its diagonal entries are the weighted
degrees. Thus preservation of $\mathbf{L}$ is equivalent to preservation
of weighted adjacency.
\end{proof}

At $\mu=0$, all row permutations preserve the objective even with the
graph fixed. For $\mu>0$, the proposition characterizes symmetries of the
whole objective; particular fields may still have equal energy under
additional permutations. Together with the \MAP derivation, this explains
how \SpatialSAE uses spatial relations without depending on patch indexing.

\vfill\pagebreak
\section{Synthetic Experiments}
\label{app:synthetic-experiments}

\begin{figure}[t]
    \centering
    \includegraphics[width=\linewidth]{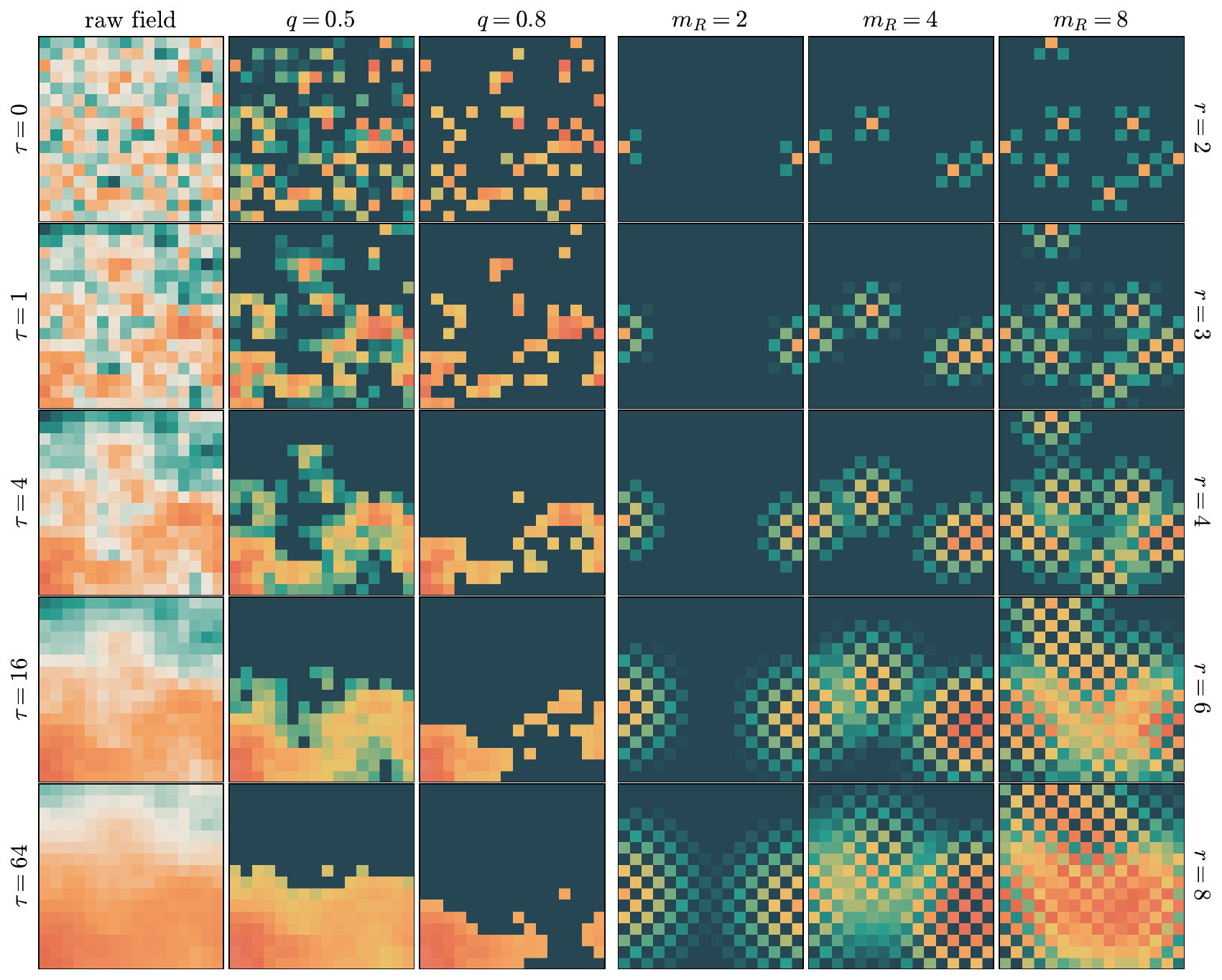}
    \caption{\textbf{Controlling smooth- and innovation-stream structure.}
    Left: rows increase the coupling $\tau$, making the raw field
    progressively smoother, while the two thresholded columns show how a
    larger support quantile $q$ reduces its active area. Right: rows increase
    the packet radius $r$, expanding each innovation packet, while columns
    increase the number $m_R$ of active innovation concepts, producing more
    packets and greater overlap. Random draws and packet centers are held fixed
    across comparable panels.}
    \label{fig:app-dgp-stream-structure}
\end{figure}
Real vision-model activations do not reveal the dictionary or latent codes that
generated them, so they cannot directly tell us whether an \SAE recovers the
underlying concepts. We therefore construct a controlled benchmark with known
concept directions and firing patterns. We describe the benchmark in
\Cref{app:synthetic-dgp} and report the component, geometry, and capacity
results in \Cref{app:synth-results}.

\subsection{Experimental Setup}
\label{app:synthetic-dgp}

The synthetic activations contain smooth concepts and sparse local innovations,
matching the qualitative structure observed in real vision representations
(as discussed in \Cref{par:patch}), without being sampled from the exact \MFLRH prior. This
tests whether the spatial prior remains useful when the data exhibit the
relevant structure without matching its assumed distribution exactly.

\paragraph{Concept dictionary and data dimensions.}
\label{app:synth-dictionary}

We add spatial structure to existing synthetic \SAE benchmarks
\citep{chanin2026synthsaebench} as follows: for each dataset, we sample $c$
concept directions independently from
$\mathcal N(0,\mathbf I_d)$, normalize them to unit length, and fix the
resulting dictionary $\mathbf D\in\R^{c\times d}$ across images in the
linear observation model of \Cref{thm:spatial-map}. Each image has $p=256$
patches on a $16\times16$ grid, giving
$\mathbf H\in\R^{p\times d}$ and smooth and innovation codes
$\mathbf S,\mathbf R\in\R_{\geq0}^{p\times c}$.
We use the unweighted four-neighbor grid graph and denote its combinatorial
Laplacian by $\mathbf L$.
Each experiment therefore has activation dimension $d$, true dictionary width
$c$, and learned \SAE width $w$. We vary the superposition ratio
$\rho=c/d$, equivalently the overcompleteness of the true dictionary, and
narrowness $\nu=w/c$ independently.
Since we fix $c=256$ and vary $d$,
increasing $\rho$ packs the same number of true concepts into a lower-dimensional
activation space, while decreasing $\nu$ restricts the number of learned atoms
available to recover them.

\begin{figure}[t]
    \centering
    \includegraphics[width=\linewidth]{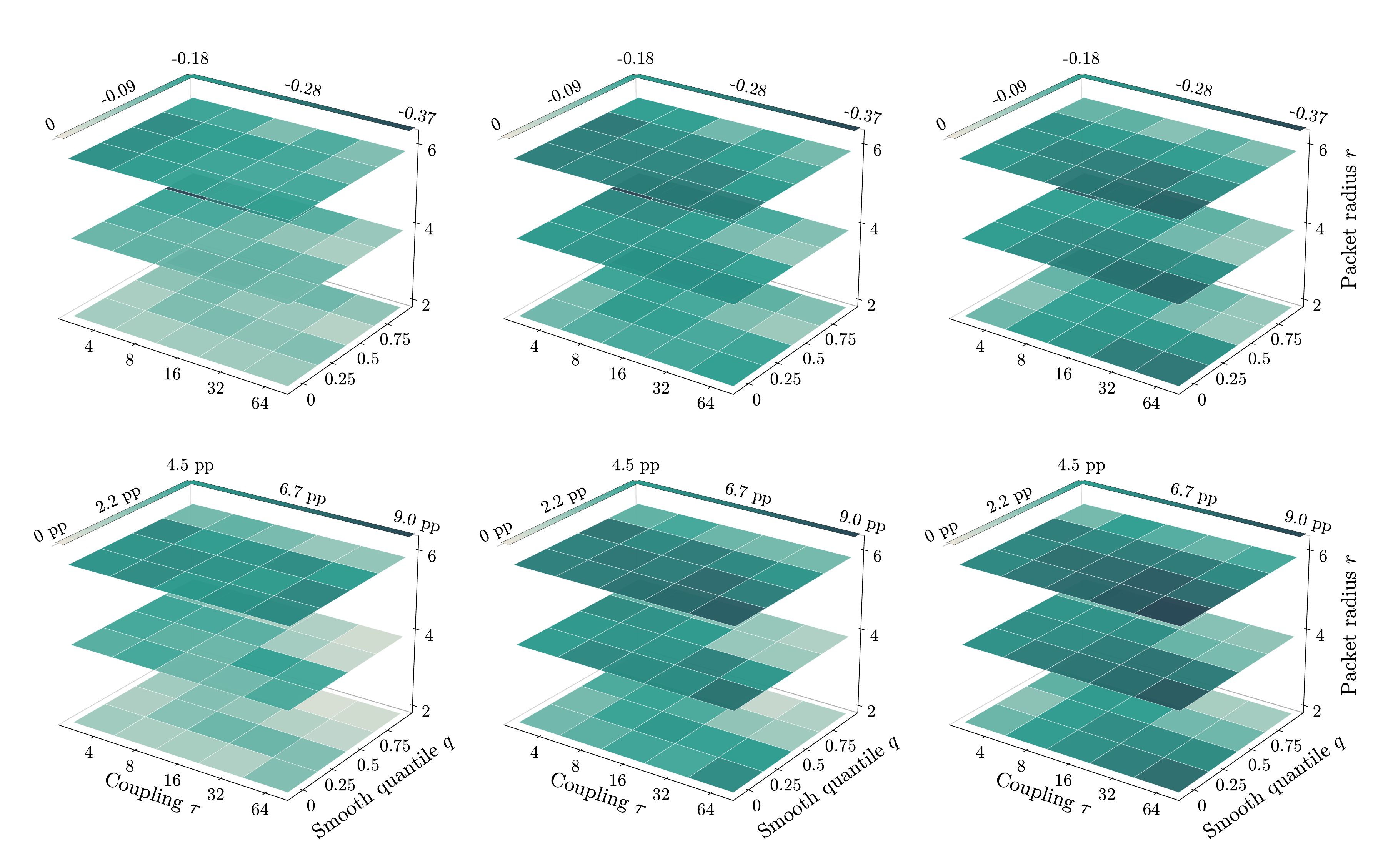}
    \caption{\textbf{Geometry sweep relative to \textbf{B}.} Differences
    between $\textbf{DT}$ and $\textbf{B}$ across the $60$ geometry cells.
    Columns show $\mathrm{L}$, $\mathrm{M}$, and $\mathrm{H}$; the upper row
    shows the polysemanticity difference and the lower row the F1 difference
    in percentage points. Each horizontal slice fixes $r$, while the in-plane
    axes vary $\tau$ and $q$. Teal denotes an improvement, while the pale
    reference color denotes no change. The predominantly teal surfaces show
    that $\textbf{DT}$ improves substantially over $\textbf{B}$ throughout
    the grid.}
    \label{fig:app-synth-geometry-vs-b}
\end{figure}

\begin{figure}[t]
    \centering
    \includegraphics[width=\linewidth]{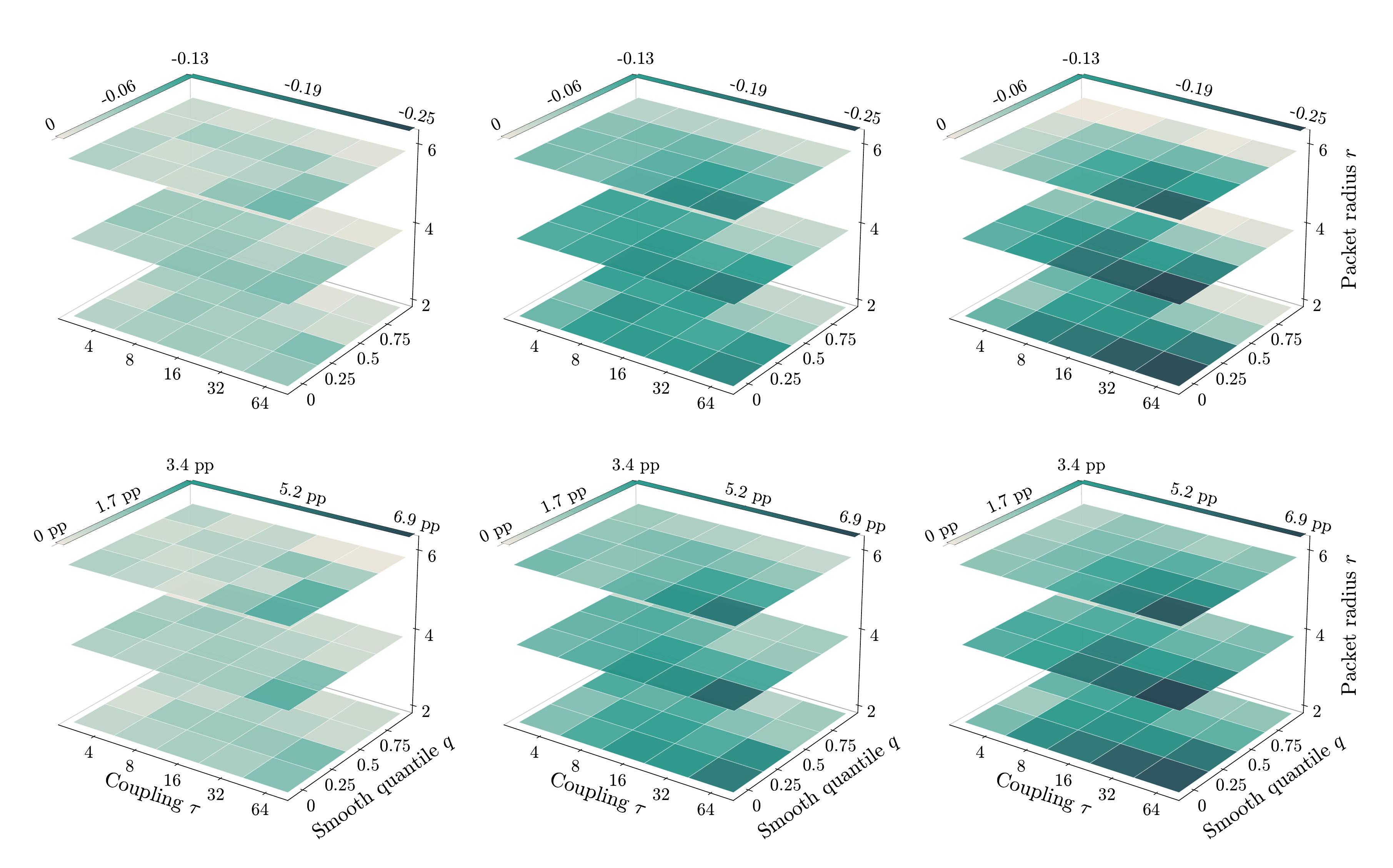}
    \caption{\textbf{Geometry sweep relative to \textbf{D}.} Differences
    between $\textbf{DT}$ and $\textbf{D}$ across the $60$ geometry cells.
    Columns show $\mathrm{L}$, $\mathrm{M}$, and $\mathrm{H}$; the upper row
    shows the polysemanticity difference and the lower row the F1 difference
    in percentage points. Each horizontal slice fixes $r$, while the in-plane
    axes vary $\tau$ and $q$. Teal denotes an improvement, while the pale
    reference color denotes no change. The variation across $\mathrm{L}$,
    $\mathrm{M}$, and $\mathrm{H}$ shows that the benefit over $\textbf{D}$
    depends on the trace strength.}
    \label{fig:app-synth-geometry-vs-d}
\end{figure}

\begin{figure}[t]
    \centering
    \includegraphics[width=\linewidth]{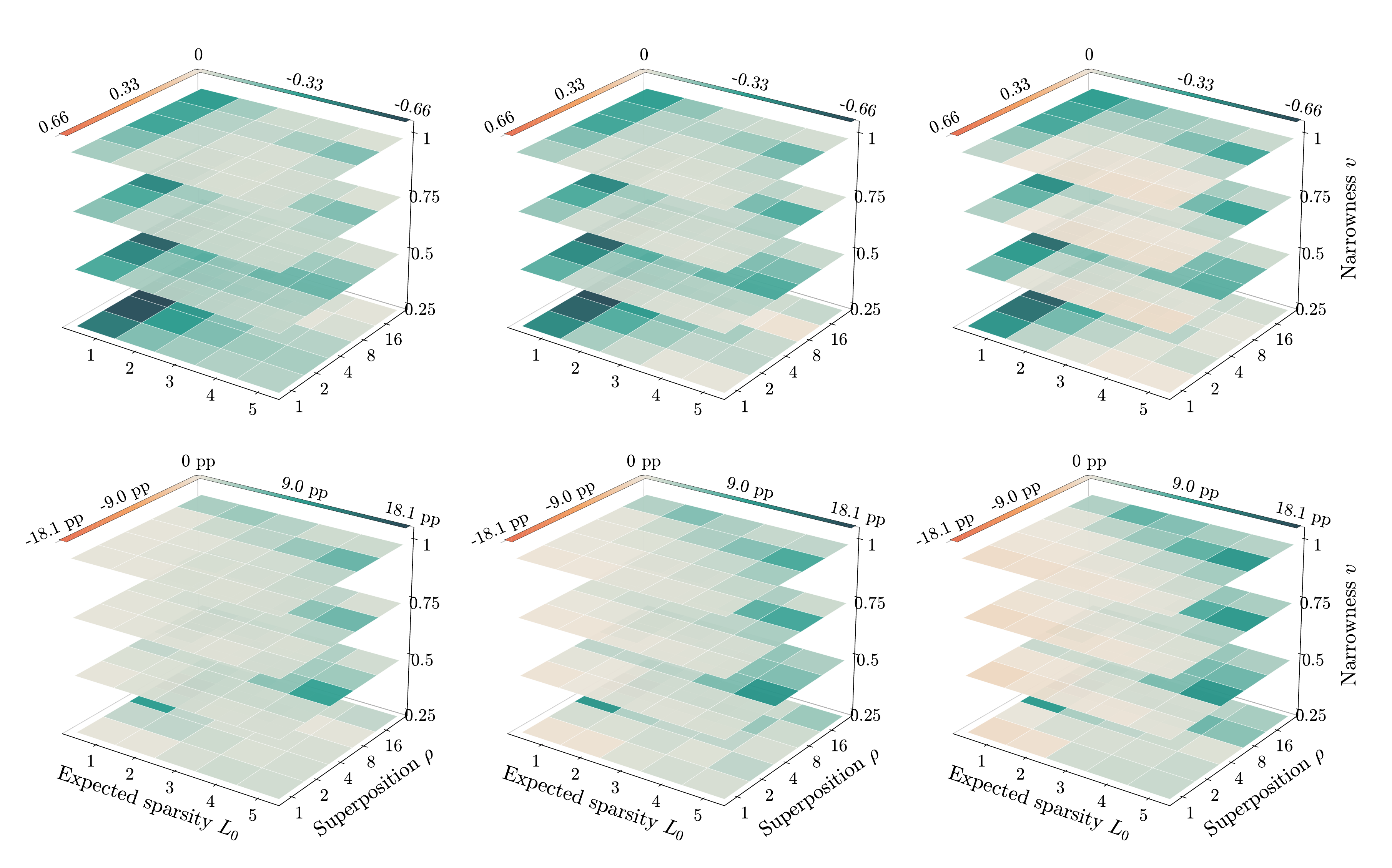}
    \caption{\textbf{Capacity sweep relative to \textbf{B}.} Differences
    between $\textbf{DT}$ and $\textbf{B}$ across the $100$ capacity cells.
    Columns show $\mathrm{L}$, $\mathrm{M}$, and $\mathrm{H}$; the upper row
    shows the polysemanticity difference and the lower row the F1 difference
    in percentage points. Each horizontal slice fixes $\nu$, while the
    in-plane axes vary $L_0$ and $\rho$. Teal denotes an improvement and
    orange a deterioration.}
    \label{fig:app-synth-capacity-vs-b}
\end{figure}

\begin{figure}[t]
    \centering
    \includegraphics[width=\linewidth]{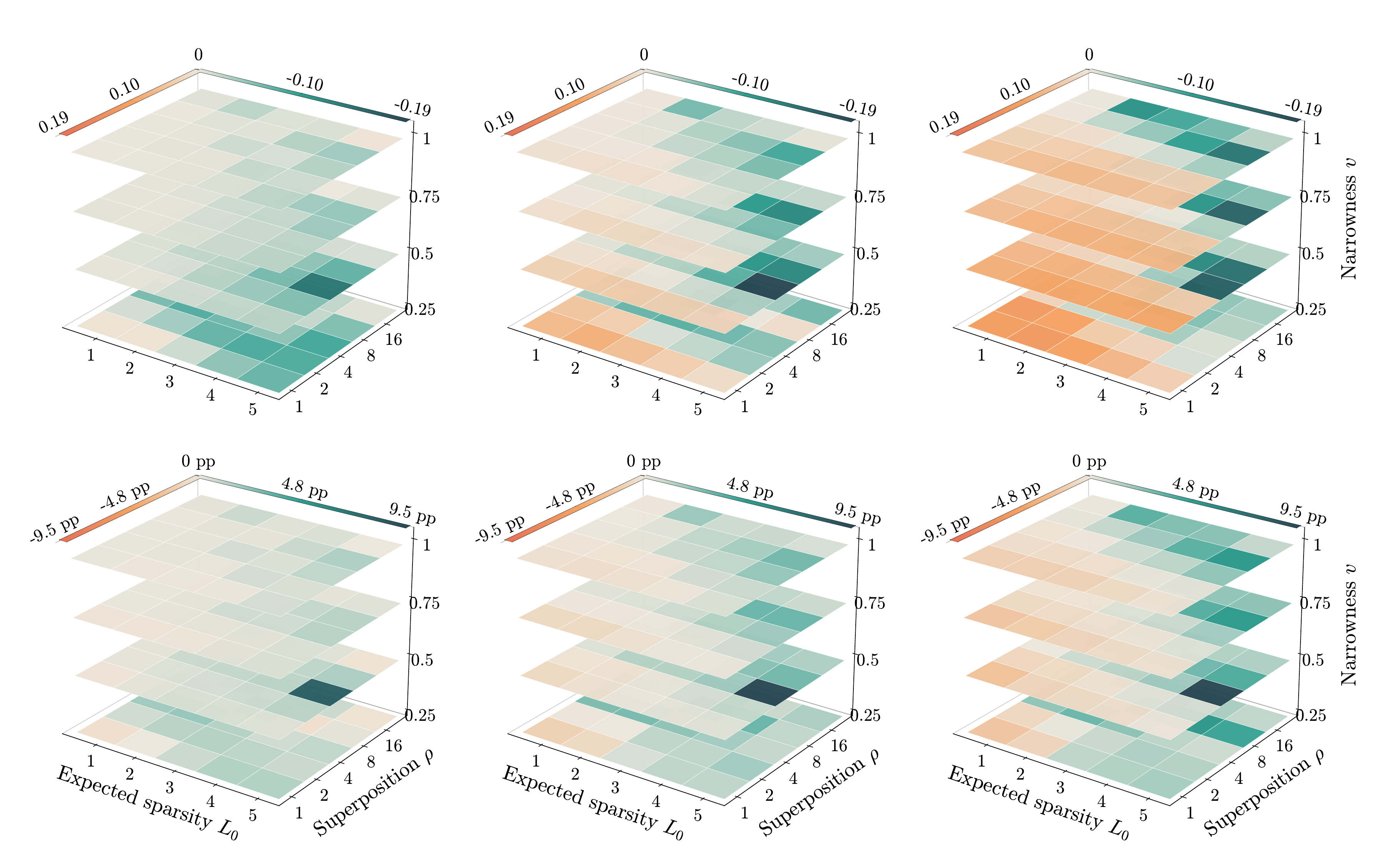}
    \caption{\textbf{Capacity sweep relative to \textbf{D}.} Differences
    between $\textbf{DT}$ and $\textbf{D}$ across the $100$ capacity cells.
    Columns show $\mathrm{L}$, $\mathrm{M}$, and $\mathrm{H}$; the upper row
    shows the polysemanticity difference and the lower row the F1 difference
    in percentage points. Each horizontal slice fixes $\nu$, while the
    in-plane axes vary $L_0$ and $\rho$. Teal denotes an improvement and
    orange a deterioration.}
    \label{fig:app-synth-capacity-vs-d}
\end{figure}

\paragraph{Smooth and innovation streams.}
\label{app:dgp-process}

For each image, we sample an active set $\mathcal A_S\subseteq[c]$ uniformly
without replacement, with $|\mathcal A_S|=m_S$. For each
$a\in\mathcal A_S$, we sample
$\boldsymbol\psi^{(S)}_a\sim\mathcal N(0,\mathbf I_p)$, diffuse it with
$\mathbf T_\tau=(\mathbf I_p+\tau\mathbf L)^{-1}$, rectify it, and retain
the values above its empirical $q$-quantile. Writing
\[
    \mathbf g_a=(\mathbf T_\tau\boldsymbol\psi^{(S)}_a)_+,
    \qquad
    \mathbf u_a=\mathbf g_a\odot\Ind\{\mathbf g_a>Q_q(\mathbf g_a)\},
\]
we set $\mathbf S_{:,a}=\rho_S\mathbf u_a/\|\mathbf u_a\|_\infty$, where
$\rho_S>0$ is the smooth-stream peak amplitude. In practice, we clamp the
normalization denominator from below at $\epsilon=10^{-8}$ and set columns
outside $\mathcal A_S$ to zero. The left half of
\Cref{fig:app-dgp-stream-structure} shows how $\tau$ controls spatial
coherence and $q$ controls support area. When the
threshold is positive and there are no ties, approximately $1-q$ of patches
remain active. At $\tau=0$, the sampler has no preference for adjacent patches.

For the innovation stream, we independently sample an active set
$\mathcal A_R\subseteq[c]$ uniformly without replacement, with
$|\mathcal A_R|=m_R$. We assign each $a\in\mathcal A_R$ a grid center $c_a$,
sampled independently and uniformly from the $p$ patches, with replacement.
Let $\delta(i,c_a)$ denote the Euclidean distance from patch $i$ to that
center. Writing $(x_i,y_i)$ for the grid coordinates of patch $i$, define
\[
    b_i^{(a)}
    =
    \Ind\!\left\{
        (x_i-x_{c_a})+(y_i-y_{c_a})\equiv 0 \pmod 2
    \right\},
\]
so that $\mathbf b^{(a)}\in\{0,1\}^p$ indicates the side of the grid's
bipartition containing $c_a$. A compact triangular envelope
\[
    e_{ia}=\left(1-\frac{\delta(i,c_a)}{r}\right)_+
\]
is restricted by this mask, giving
$R_{ia}=\rho_R b_i^{(a)}e_{ia}$ for active concepts and zero otherwise, where
$\rho_R>0$ is the innovation-stream peak amplitude.
Hence $r$ controls the spatial extent of each packet and the bipartite mask
gives the packet its locally alternating structure.
\Cref{fig:synth-smooth} illustrates the construction of the smooth and
innovation streams side by side, while the right half of
\Cref{fig:app-dgp-stream-structure} visualizes the effects of $r$ and $m_R$.
Because the two streams are sampled independently, their active concept sets
and spatial supports may overlap. Both streams are mapped through the same
dictionary $\mathbf D$. Finally, we form
\[
    \mathbf H=(\mathbf S+\mathbf R)\mathbf D+\boldsymbol\Psi,
    \qquad
    \boldsymbol\Psi_i\overset{\mathrm{iid}}{\sim}
    \mathcal N(0,\sigma^2\mathbf I_d).
\]

\begin{table}[b]
    \centering
    \footnotesize
    \setlength{\tabcolsep}{2.2pt}
    \renewcommand{\arraystretch}{1.2}
    \begin{tabular*}{\linewidth}{@{\extracolsep{\fill}}ccccccccccc@{}}
        \toprule
        & \multirow{2}{*}{\raisebox{-0.5ex}{$\textbf{D}>\textbf{B}$}}
        & \multicolumn{3}{c}{$\textbf{DT}>\textbf{B}$}
        & \multicolumn{3}{c}{$\textbf{DT}>\textbf{D}$}
        & \multicolumn{3}{c}{$\textbf{DT}>\textbf{D}>\textbf{B}$} \\
        \cmidrule(lr){3-5}\cmidrule(lr){6-8}\cmidrule(lr){9-11}
        & & $\mathrm{L}$ & $\mathrm{M}$ & $\mathrm{H}$
        & $\mathrm{L}$ & $\mathrm{M}$ & $\mathrm{H}$
        & $\mathrm{L}$ & $\mathrm{M}$ & $\mathrm{H}$ \\
        \midrule
        \textbf{MCC}
        & $99\%$ & $100\%$ & $99\%$ & $89\%$ & $96\%$ & $83\%$ & $64\%$ & $96\%$ & $82\%$ & $64\%$ \\
        \textbf{Poly.}
        & $99\%$ & $100\%$ & $99\%$ & $89\%$ & $96\%$ & $80\%$ & $64\%$ & $94\%$ & $79\%$ & $63\%$ \\
        \textbf{F1}
        & $98\%$ & $100\%$ & $96\%$ & $87\%$ & $91\%$ & $85\%$ & $79\%$ & $89\%$ & $83\%$ & $77\%$ \\
        $\mathbf{R}^2$
        & $13\%$ & $10\%$ & $1\%$ & $0\%$ & $15\%$ & $1\%$ & $0\%$ & $4\%$ & $0\%$ & $0\%$ \\
        \bottomrule
    \end{tabular*}
    {\footnotesize\par\vspace{0.25em}\noindent
    $\mathrm{L}$, $\mathrm{M}$, and $\mathrm{H}$ denote
    $\mu_S=3\times10^{-4}$, $10^{-3}$, and $3\times10^{-3}$,
    respectively.\par}
    \vspace{-0.6em}
    \caption{\textbf{Ablating \SpatialSAE components.} Each entry is the
    strict win rate of the indicated comparison across the combined Geometry
    and Capacity Sweeps, rounded to the nearest
    integer. The no-trace $\textbf{D}>\textbf{B}$ column isolates
    decomposition. The remaining groups introduce spectral regularization:
    $\textbf{DT}>\textbf{D}$ isolates its incremental effect,
    $\textbf{DT}>\textbf{B}$ measures the combined effect, and the final group
    requires both improvements.}
    \label{tab:app-synth-attribution-wins}
\end{table}

\paragraph{Access levels and sweeps.}
\label{app:dgp-sweeps}
\label{app:synth-protocol}
We compare three models at matched learned width and sparsity. The baseline
$\textbf{B}$ is a standard single-stream \SAE trained on the combined
activations. The decomposed model $\textbf{D}$ is given the
separate smooth and innovation streams but uses no spectral regularization.
Finally, $\textbf{DT}$ has the same inputs and architecture as $\textbf{D}$
and adds the trace penalties of \Cref{eq:spatialsae-objective}. Consequently,
$\textbf{B}\to\textbf{D}$ isolates the effect of access to the decomposition,
whereas $\textbf{D}\to\textbf{DT}$ isolates the effect of the trace prior. In
every $\textbf{DT}$ experiment, we set $\mu_R=\mu_S/3$.

We use two configuration grids. The first varies the geometry of both latent
streams over
\[
    \tau\times q\times r
    =\{4,8,16,32,64\}\times\{0,0.25,0.5,0.75\}\times\{2,4,6\}.
\]
We exclude $q=1$ because thresholding above the empirical maximum would set
the entire smooth stream to zero. Throughout this geometry grid, we fix the
activation dimension at $d=32$, the superposition ratio at $\rho=8$, the
narrowness at $\nu=0.5$, the active-concept counts at $m_S=9$ and $m_R=4$,
and the observation noise at $\sigma=0.2$. For each cell, we estimate the true
mean per-patch $L_0$ and use its nearest integer as the sparsity target.
The second varies the sparsity target, activation dimension, and learned
dictionary narrowness over the $100$-cell capacity grid
\[
    L_0\times d\times\nu
    =\{1,2,3,4,5\}\times\{16,32,64,128,256\}
    \times\{0.25,0.5,0.75,1\}.
\]
Throughout this capacity grid, we fix the true dictionary width at $c=256$,
the spatial geometry at $(\tau,q,r)=(16,0.5,4)$, the active-concept counts at
$m_S=9$ and $m_R=4$, and the observation noise at $\sigma=0.2$. The swept
values of $d$ and $\nu$ therefore give
\[
    \rho=\frac{c}{d}\in\{16,8,4,2,1\},
    \qquad
    w=\nu c\in\{64,128,192,256\}.
\]
\looseness=-1The geometry and capacity grids share the default configuration
$(\tau,q,r)=(16,0.5,4)$ and $(L_0,d,\nu)=(4,32,0.5)$. Their union therefore
contains $60+100-1=159$ distinct configurations.

We run three matched sweeps:
\vspace{-0.5em}
\begin{enumerate}[leftmargin=*]
    \item \textbf{Sparsifier Generalization Sweep.} On all $60$ geometry
    cells, we compare $\textbf{DT}$ with $\textbf{B}$ using ReLU, TopK,
    BatchTopK, and JumpReLU, with five seeds per cell and $\mu_S=10^{-3}$,
    giving $2{,}400$ trained \SAEs and $300$ matched comparisons per
    sparsifier. The results are presented in \Cref{tab:synth-wins}.
    \item \textbf{Geometry Sweep.} On the same $60$ geometry cells, we use
    only TopK and a single shared seed to compare $\textbf{B}$, $\textbf{D}$,
    and $\textbf{DT}$. For $\textbf{DT}$, we use three trace strengths,
    denoted $\mathrm{L}$, $\mathrm{M}$, and $\mathrm{H}$, corresponding to
    $\mu_S=3\times10^{-4}$, $10^{-3}$, and $3\times10^{-3}$, respectively.
    \item \textbf{Capacity Sweep.} We repeat the preceding single-seed
    comparison on all $100$ capacity cells.
\end{enumerate}

\vspace{-0.5em}
\paragraph{Training and evaluation.}
Unless stated otherwise, models are trained for $5{,}000$ Adam steps with
batches of $16$ images and decoder normalization after each update. Matched
models share the generator configuration and seed and are evaluated on the same
$2{,}000$ held-out images. We build our implementation on SynthSAEBench
\citep{chanin2026synthsaebench} and reuse its implementations of \textsc{MCC},
support F1, and held-out $R^2$. We port the polysemanticity metric
(\textsc{Poly.}) from \citet{mencattini2026rate}. A strict increase counts as
a win for \textsc{MCC}, F1, and $R^2$, and a strict decrease for
polysemanticity; ties are not wins. The full-ladder event
$\textbf{DT}>\textbf{D}>\textbf{B}$ requires both strict improvements in the
same configuration.

\subsection{Effects of Decomposition and Spectral Regularization}
\label{app:synth-results}

\paragraph{Component ablation.}
\Cref{tab:app-synth-attribution-wins} summarizes the Geometry and Capacity
Sweeps as strict win rates over their $159$ distinct configurations. The
recovery metrics exhibit a clear ladder: giving the model access to the
decomposition is already highly beneficial, and adding spectral regularization
on top yields further gains. Specifically, $\textbf{D}$ beats $\textbf{B}$ in
$99\%$, $99\%$, and $98\%$ of configurations on \textsc{MCC},
polysemanticity, and F1, respectively. At the low trace strength,
$\textbf{DT}$ then beats $\textbf{D}$ in $96\%$, $96\%$, and $91\%$ of
configurations, so the full $\textbf{DT}>\textbf{D}>\textbf{B}$ ladder holds
in $96\%$, $94\%$, and $89\%$. The ladder weakens as the trace strength
increases, while $R^2$ rarely follows it, emphasizing that the trace
coefficient must be tuned for recovery rather than reconstruction.

\paragraph{Geometry Sweep.}
Across changes in smooth coupling, support, and packet radius, the trace prior
provides a robust benefit
(\Cref{fig:app-synth-geometry-vs-b,fig:app-synth-geometry-vs-d}). Against
$\textbf{B}$, $\textbf{DT}$ improves both polysemanticity and F1 throughout
the geometry grid at every trace strength, producing the predominantly teal
surfaces in \Cref{fig:app-synth-geometry-vs-b}. This advantage persists after
accounting for the decomposition: against $\textbf{D}$, F1 improves throughout,
while polysemanticity improves across the full grid at $\mathrm{L}$ and
$\mathrm{M}$. At $\mathrm{H}$, the polysemanticity benefit becomes less
uniform, but the remaining cells show no change rather than a deterioration.
Thus, once the data have the intended spatial structure, spectral
regularization provides a substantial and robust gain beyond decomposition
alone.

\paragraph{Capacity Sweep.}
Across changes in sparsity, superposition, and narrowness, the benefits of the
trace prior depend more strongly on its strength
(\Cref{fig:app-synth-capacity-vs-b,fig:app-synth-capacity-vs-d}). At
$\mathrm{L}$, $\textbf{DT}$ improves over $\textbf{B}$ on both plotted metrics
in $100\%$ of configurations. By $\mathrm{H}$, however, it performs worse on
polysemanticity in $18\%$ of configurations and on F1 in $20\%$. This
sensitivity becomes more pronounced after accounting for the decomposition:
against $\textbf{D}$, the high trace strength worsens polysemanticity in
$49\%$ of configurations and F1 in $33\%$, making the mean polysemanticity
change slightly unfavorable. Unlike in the Geometry Sweep, a mistuned trace
coefficient can therefore erase the advantage over the baseline and become
actively detrimental once the decomposition is already available.

\clearpage

\vfill\pagebreak
\section{Experiments with Natural Images}
\label{sec:experiment-details}

\begin{figure}[t]
    \centering
    \includegraphics[width=\linewidth]{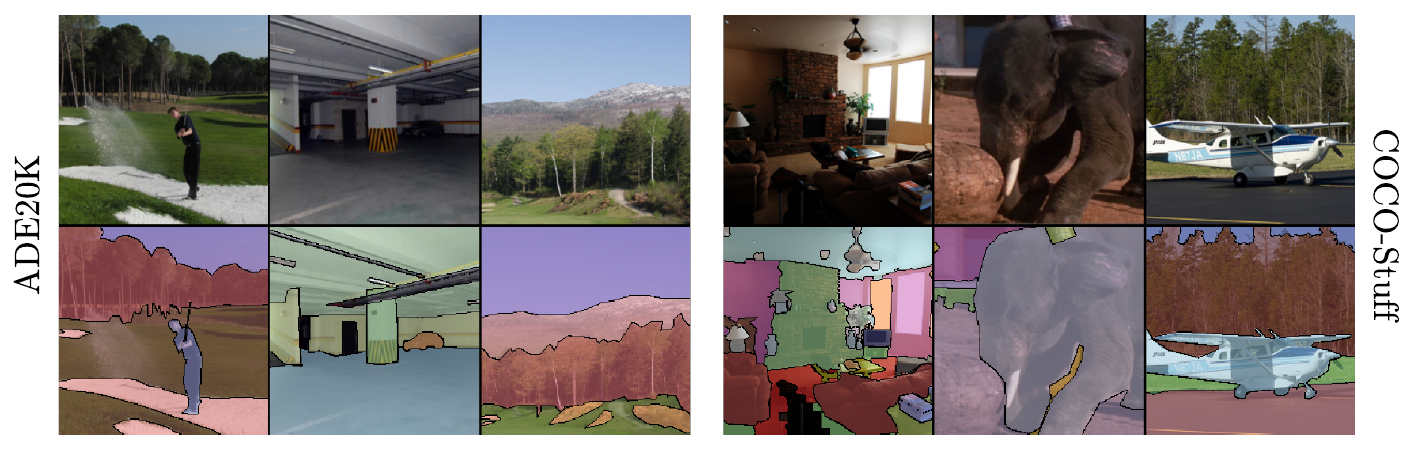}
    \caption{\textbf{Spatial annotations for concept evaluation.}
    Images from \ADETwentyK (left) and \COCOStuff (right), paired with
    their semantic segmentation masks. The annotations identify the
    spatial extent of objects and scene regions, allowing us to evaluate
    whether learned features activate on the concepts they represent.}
    \label{fig:dataset-samples}
\end{figure}

The natural-image experiments in \Cref{sec:wild-eval} test whether spatial
structure improves the concepts recovered from vision-model activations. This
appendix provides the complete experimental account. We first describe the
datasets, frozen backbone, patch graph, and compute environment. In
\Cref{app:real-impl}, we detail optimization, model architecture, sparsification,
and spatial regularization. \Cref{app:metrics} defines the evaluation metrics
and spectral diagnostics, while \Cref{app:real-autointerp} describes the
automated-interpretability protocol. Finally, \Cref{app:real-ablations} reports
the ablations, model-selection procedure, transfer to \COCOStuff, and
additional qualitative results.

\paragraph{Datasets and backbone.}
Judging whether an atom captures a concept requires ground truth that is
localized in space: an image-level label cannot tell whether an atom fires
on the object it is named after or on its surroundings. We therefore
evaluate on densely annotated scene-parsing datasets, where every pixel
carries a semantic label, and use \ADETwentyK \citep{zhou2019semantic} and
\COCOStuff \citep{cocostuff}. \Cref{fig:dataset-samples} shows example images and their
segmentation annotations. \ADETwentyK contains $25{,}574$ training
images spanning $365$ scene categories and thousands of annotated object
categories; its derived SceneParse150 benchmark uses $20{,}210$ training
images and $150$ semantic classes. \COCOStuff augments the $118{,}287$
COCO train2017 images with dense pixel-level annotations over $171$
semantic classes, plus an unlabeled/void class.
Both datasets mix thing classes (person, car) with
stuff classes (sky, wall, road), and a typical image contains several
concepts at different scales. Concepts therefore occupy both large uniform
regions and small or thin structures, which is the mix of smooth and
high-frequency content that the two-stream decomposition targets. The
pixel-level masks let us assign concepts to individual patches, so the same
annotations support patch-level F1, region coherence over annotated
segments, and the concept spectra used as spectral references. The two
datasets also differ in scale, taxonomy, and domain, so \COCOStuff tests
whether configurations selected on \ADETwentyK transfer without re-tuning.
We extract activations from a frozen \DINO ViT-B/14 with four register
tokens \citep{oquab2023dinov2,darcet2024registers}.\footnote{\texttt{timm} model: \texttt{vit\_base\_patch14\_reg4\_dinov2} (pretrained).}
\ADETwentyK is one of the semantic segmentation benchmarks used by
\citet{oquab2023dinov2} to evaluate frozen patch features with a linear
classifier. A $224\times224$ input yields a
$16\times16$ patch grid, so each image provides $p=256$ patch activations
with dimension $d=768$. All spatial models use a fixed, unweighted
graph connecting distinct patches whose Manhattan distance is at most
two, with the same graph used for every image.
As in \citet{fel2025rabbit_hull}, we take the activations of the final layer
after normalization; unlike them, we keep only the $256$ patch tokens,
discarding the \CLS and register tokens, and store the patch activations in
half precision (\texttt{float16}).

\begin{table}[t]
    \centering
    \scriptsize
    \definecolor{budgetaccent}{HTML}{456B73}
    \setlength{\tabcolsep}{2.15pt}
    \renewcommand{\arraystretch}{1.08}
    \newcommand{\budgetstat}[2]{$#1{\scriptstyle\,\pm\,#2}$}
    \begin{tabularx}{\linewidth}{@{}cc*{8}{>{\centering\arraybackslash}X}@{}}
        \toprule
        \multirow{2}{*}{\raisebox{-0.65ex}{$\boldsymbol{\mathbb{E}}\mathbf{[L_0]}$}}
        & \multirow{2}{*}{\raisebox{-0.65ex}{\textbf{Model}}}
        & \multicolumn{2}{c}{\textbf{TopK}}
        & \multicolumn{2}{c}{\textbf{Vanilla}}
        & \multicolumn{2}{c}{\textbf{JumpReLU}}
        & \multicolumn{2}{c}{\textbf{BatchTopK}} \\
        \cmidrule(lr){3-4}\cmidrule(lr){5-6}
        \cmidrule(lr){7-8}\cmidrule(lr){9-10}
        & & $R^2$ & Norm.\ F1 & $R^2$ & Norm.\ F1
            & $R^2$ & Norm.\ F1 & $R^2$ & Norm.\ F1 \\
        \midrule
        & Baseline & \budgetstat{61.1}{0.2} & \budgetstat{58.0}{1.4}
                   & \budgetstat{40.8}{0.8} & \budgetstat{51.5}{1.0}
                   & \budgetstat{38.4}{0.7} & \budgetstat{52.9}{0.7}
                   & \budgetstat{57.9}{0.9} & \budgetstat{54.2}{0.5} \\
        \rowcolor{budgetaccent!8}
        \cellcolor{white}\multirow{-2}{*}{$8$}
        & \SpatialSAE & \budgetstat{45.1}{0.1} & \budgetstat{67.0}{1.2}
                   & \budgetstat{27.7}{0.5} & \budgetstat{55.7}{2.0}
                   & \budgetstat{30.4}{0.3} & \budgetstat{57.2}{1.2}
                   & \budgetstat{43.9}{0.3} & \budgetstat{60.2}{1.6} \\
        \addlinespace[2pt]
        & Baseline & \budgetstat{70.4}{0.2} & \budgetstat{57.5}{1.1}
                   & \budgetstat{53.7}{0.7} & \budgetstat{50.5}{1.0}
                   & \budgetstat{52.2}{0.8} & \budgetstat{52.1}{0.5}
                   & \budgetstat{69.2}{0.9} & \budgetstat{55.9}{0.7} \\
        \rowcolor{budgetaccent!8}
        \cellcolor{white}\multirow{-2}{*}{$16$}
        & \SpatialSAE & \budgetstat{52.4}{0.3} & \budgetstat{65.4}{1.9}
                   & \budgetstat{39.1}{0.7} & \budgetstat{55.5}{2.0}
                   & \budgetstat{41.4}{0.3} & \budgetstat{56.7}{1.6}
                   & \budgetstat{52.4}{0.4} & \budgetstat{60.3}{2.0} \\
        \addlinespace[2pt]
        & Baseline & \budgetstat{78.1}{0.2} & \budgetstat{56.1}{0.7}
                   & \budgetstat{66.1}{0.6} & \budgetstat{49.1}{0.7}
                   & \budgetstat{67.5}{0.5} & \budgetstat{49.8}{0.3}
                   & \budgetstat{78.0}{0.2} & \budgetstat{54.9}{0.7} \\
        \rowcolor{budgetaccent!8}
        \cellcolor{white}\multirow{-2}{*}{$32$}
        & \SpatialSAE & \budgetstat{57.5}{0.1} & \budgetstat{63.0}{0.5}
                   & \budgetstat{51.1}{0.2} & \budgetstat{52.3}{2.0}
                   & \budgetstat{51.6}{0.8} & \budgetstat{54.3}{2.0}
                   & \budgetstat{58.8}{0.4} & \budgetstat{56.6}{0.9} \\
        \addlinespace[2pt]
        & Baseline & \budgetstat{83.3}{0.1} & \budgetstat{52.6}{0.3}
                   & \budgetstat{74.4}{0.6} & \budgetstat{44.4}{0.4}
                   & \budgetstat{76.6}{0.5} & \budgetstat{43.7}{0.5}
                   & \budgetstat{83.5}{0.2} & \budgetstat{52.3}{0.8} \\
        \rowcolor{budgetaccent!8}
        \cellcolor{white}\multirow{-2}{*}{$64$}
        & \SpatialSAE & \budgetstat{61.1}{0.3} & \budgetstat{61.0}{2.3}
                   & \budgetstat{60.5}{0.7} & \budgetstat{48.0}{1.0}
                   & \budgetstat{58.6}{0.2} & \budgetstat{51.2}{1.4}
                   & \budgetstat{63.8}{0.3} & \budgetstat{55.9}{1.1} \\
        \addlinespace[2pt]
        & Baseline & \budgetstat{87.3}{0.1} & \budgetstat{48.3}{0.5}
                   & \budgetstat{80.3}{0.4} & \budgetstat{37.0}{0.6}
                   & \budgetstat{82.7}{0.4} & \budgetstat{36.7}{0.5}
                   & \budgetstat{87.4}{0.3} & \budgetstat{48.3}{1.0} \\
        \rowcolor{budgetaccent!8}
        \cellcolor{white}\multirow{-2}{*}{$128$}
        & \SpatialSAE & \budgetstat{64.9}{0.1} & \budgetstat{58.9}{0.7}
                   & \budgetstat{66.1}{0.5} & \budgetstat{44.8}{3.8}
                   & \budgetstat{62.6}{0.5} & \budgetstat{52.3}{1.9}
                   & \budgetstat{68.4}{0.3} & \budgetstat{55.0}{0.9} \\
        \addlinespace[2pt]
        & Baseline & \budgetstat{91.2}{0.1} & \budgetstat{48.0}{0.4}
                   & \budgetstat{85.4}{0.2} & \budgetstat{27.8}{0.4}
                   & \budgetstat{88.3}{0.3} & \budgetstat{29.4}{0.4}
                   & \budgetstat{91.4}{0.4} & \budgetstat{48.3}{1.0} \\
        \rowcolor{budgetaccent!8}
        \cellcolor{white}\multirow{-2}{*}{$256$}
        & \SpatialSAE & \budgetstat{69.1}{0.1} & \budgetstat{57.3}{1.6}
                   & \budgetstat{71.0}{0.2} & \budgetstat{45.4}{2.5}
                   & \budgetstat{67.4}{0.7} & \budgetstat{52.2}{1.7}
                   & \budgetstat{73.9}{0.3} & \budgetstat{54.8}{1.2} \\
        \bottomrule
    \end{tabularx}
    \caption{\textbf{Sparsity-budget ablation on \ADETwentyK.}
    We measure reconstruction with $R^2$ and concept recovery with Normalized
    F1, relative to the corresponding linear probe. Entries are mean $\pm$
    standard deviation over five seeds and are reported as percentages. We use
    an expected sparsity budget of $16$: compared with a budget of $8$, it
    improves baseline reconstruction for every sparsifier while retaining at
    least $98\%$ of each baseline's best Normalized F1, and is itself optimal
    for BatchTopK. \SpatialSAE has higher Normalized F1 throughout the sweep.}
    \label{tab:app-budget-sweep}
\end{table}

\paragraph{Software and compute.}
We extend the \texttt{overcomplete} library (version 0.3.0) to implement
\SpatialSAE.\footnote{\url{https://github.com/KempnerInstitute/overcomplete}}
The environment uses \texttt{Python} 3.12.12, \texttt{PyTorch} 2.12.1 with \texttt{CUDA} 12.6,
\texttt{torchvision} 0.27.1, \texttt{datasets} 3.6.0, and \texttt{timm} 1.0.27.
Each run uses a single \texttt{NVIDIA A100-SXM4 GPU} with $80$ GB of memory. 
On \ADETwentyK, ten training epochs take approximately
four minutes for static and convolutional spatial models, ten minutes
for context attention, and two to three minutes for baselines.
On \COCOStuff, context attention takes $28$--$35$ minutes and linear
baselines take six to eight minutes. These times cover training epochs.

\subsection{Implementation Details}
\label{app:real-impl}
\label{app:real-models}

\paragraph{Optimization.}
We train all models for ten epochs with Adam, learning rate $10^{-3}$,
$\beta_1=0.9$, $\beta_2=0.999$, and no weight decay. The learning rate
increases linearly from zero over the first $100$ steps, then decays to
$10^{-5}$ with a cosine schedule. We clip the global gradient norm at
$1$. Baselines receive batches of $4096$ shuffled patch activations;
\SpatialSAE receives $16$ complete patch grids, also totaling $4096$
patches. Baselines use float16 autocasting, while spatial models use bfloat16 to
avoid overflow in the Laplacian term; steps with a non-finite loss are
skipped. Data loading uses four workers, but the order in which batches
are consumed is fixed by the seed and independent of worker scheduling:
shards are prefetched concurrently and yielded in a precomputed shuffled
order, and images within a shard are shuffled with a seeded generator.
We do not enforce deterministic GPU kernels.
We use the final checkpoint from each run, without early stopping or
selection across epochs. Configuration sweeps use seed $0$, and the
selected models are retrained with seeds $0$--$4$.

\paragraph{Encoders and dictionaries.}
Encoders and dictionaries are the \texttt{MLPEncoder} and
\texttt{DictionaryLayer} classes of \texttt{overcomplete}, which we
describe here for completeness. We instantiate the encoder with zero
hidden blocks (the \texttt{linear} preset), so it applies a single linear
projection followed by ReLU, after which the chosen sparsifier is applied.
Unless stated otherwise, all models use $c=3072$ dictionary atoms, which
are renormalized to unit $\ell_2$ norm on every forward pass.

For \SpatialSAE, the dictionary partition in \Cref{sec:spatialsae}
assigns $c_S$ atoms to the smooth stream and $c_R$ to the innovation stream,
with corresponding decoder blocks $\mathbf{D}_S$ and $\mathbf{D}_R$.
Here we write $\mathbf{S}\in\R^{p\times c_S}$ and
$\mathbf{R}\in\R^{p\times c_R}$ for the fields' active code blocks. These are
sparsified with stream budgets $k_S$ and $k_R$, enforced by the chosen
sparsifier exactly as $k$ is for the baseline.
Each stream has its own encoder,
$\mathbf{E}_S\in\R^{d\times c_S}$ and
$\mathbf{E}_R\in\R^{d\times c_R}$, applied to its respective input.
The resulting codes decode through the corresponding blocks
$\mathbf{D}_S$ and $\mathbf{D}_R$. To match the baselines,
the two streams share the total dictionary and budget,
$c_S+c_R=c$ and $k_S+k_R=k$, and atoms are allocated in proportion to
budget, $c_S/c=k_S/k$, with largest-remainder rounding. A $14{:}2$ budget
split therefore assigns $2688$ atoms to the smooth stream and $384$ to
the innovation stream. We sweep the seven splits $2{:}14$, $4{:}12$,
$6{:}10$, $8{:}8$, $10{:}6$, $12{:}4$, and $14{:}2$.

\paragraph{Controlling effective sparsity and dead atoms.}
Unless stated otherwise, we target an expected sparsity of $16$ active
features per patch; \Cref{tab:app-budget-sweep} reports the full sparsity-budget
sweep. Comparisons at a fixed budget require models to use a similar number
of active features per patch. TopK enforces the budget directly by
retaining the $k$ largest coefficients. BatchTopK applies a shared
threshold, set to the $(nk)$-th largest pre-activation in a batch of
$n$ patches; at evaluation, it uses an exponential moving average of
the training thresholds with momentum $0.9$.

For ReLU and JumpReLU, the number of active features depends on the
$\ell_1$ penalty, so the same penalty coefficient need not yield the
same sparsity across models. We therefore use a simplified variant of the target-$L_0$
controller of \citet{chanin2026synthsaebench}, without its gain
scheduling, to adjust the coefficient
$\lambda_0\eta$, with $\lambda_0=10^{-2}$. Let $\bar L_0$ be an
exponential moving average of the number of active features per patch,
with smoothing coefficient $0.99$. Starting at step $100$, we update
the multiplier after each optimization step as
\begin{equation*}
    \eta \leftarrow \eta\left[
        1+k_i\tanh\!\left(g\frac{\bar L_0-k}{k}\right)
    \right],
    \qquad k_i=3\times10^{-3},\quad g=10,
\end{equation*}
and clip $\eta$ to $[10^{-2},10^2]$. The penalty increases when
$\bar L_0>k$ and decreases when $\bar L_0<k$. For \SpatialSAE, each
stream has its own controller with target $k_S$ or $k_R$.
JumpReLU additionally learns a threshold for each atom, initialized
at $e^{-3}$, through a straight-through estimator with a Silverman
kernel of bandwidth $0.1$. In every final run targeting
$\mathbb{E}[L_0]=16$, the mean number
of active features per patch on the evaluation pool differs from the
target by at most one.

We also use the reanimation regularizer implemented in the
\texttt{overcomplete} library \citep{fel2025archetypal}
to encourage atoms that never fire in a batch to become active.
We subtract their summed pre-activations from the loss, normalized by
the batch size and dictionary width. The penalty has weight
$10^{-3}$ for each spatial stream and $5\times10^{-3}$ for baselines;
each stream uses its own width for normalization. Fewer than
$0.4\%$ of atoms are inactive on the evaluation pool in every final run
targeting $\mathbb{E}[L_0]=16$.

\begin{figure}[t]
\centering
\begin{minipage}[t]{0.49\linewidth}
\begin{algorithm}[H]
    \captionsetup{font=normalfont,labelfont=bf,aboveskip=1pt,belowskip=1pt}
    \caption{Static forward pass.}
    \label{alg:ssae-static}
    \begin{algorithmic}[1]
        \Require Patch activations $\mathbf{H} \in \mathbb{R}^{p \times d}$.
        \For{each patch $i$}
            \State $\mathbf{S}_i\gets\Call{Sparsify}{\mathbf{H}_i\mathbf{E}_S}$
            \State $\mathbf{R}_i\gets
                \Call{Sparsify}{(\mathbf{H}_i - \mathbf{S}_i\mathbf{D}_S)\mathbf{E}_R}$
            \State $\widehat{\mathbf{H}}_i\gets
                \mathbf{S}_i\mathbf{D}_S+\mathbf{R}_i\mathbf{D}_R$
        \EndFor
        \State \Return $\mathbf{S}$, $\mathbf{R}$, and $\widehat{\mathbf{H}}$
    \end{algorithmic}
\end{algorithm}
\end{minipage}\hfill
\begin{minipage}[t]{0.49\linewidth}
\begin{algorithm}[H]
    \captionsetup{font=normalfont,labelfont=bf,aboveskip=1pt,belowskip=1pt}
    \caption{Convolutional forward pass.}
    \label{alg:ssae-conv}
    \begin{algorithmic}[1]
        \Require Patch activations $\mathbf{H} \in \mathbb{R}^{p \times d}$.
        \For{each patch $i$}
            \State $\widetilde{\mathbf{H}}_i\gets
                \sum_{j\in\mathcal{N}(i)}\mathbf{W}_{j-i}\odot\mathbf{H}_{j}$
            \State $\mathbf{S}_i\gets
                \Call{Sparsify}{\widetilde{\mathbf{H}}_i\mathbf{E}_S}$
            \State $\mathbf{R}_i\gets
                \Call{Sparsify}{(\mathbf{H}_i - \mathbf{S}_i\mathbf{D}_S)\mathbf{E}_R}$
            \State $\widehat{\mathbf{H}}_i\gets
                \mathbf{S}_i\mathbf{D}_S+\mathbf{R}_i\mathbf{D}_R$
        \EndFor
        \State \Return $\mathbf{S}$, $\mathbf{R}$, and $\widehat{\mathbf{H}}$
    \end{algorithmic}
\end{algorithm}
\end{minipage}
{\captionsetup{hypcap=false}
\caption*{\textbf{Controls for smooth-field estimation.}
The static control (left) removes neighbor-based estimation, while the
convolutional control (right) aggregates neighboring patches without the
content-dependent weighting used by context attention. Both retain the same
two-stream innovation and decoding structure, allowing the ablation to
isolate how the smooth field is estimated.}}
\end{figure}
\paragraph{Estimating the smooth field.}
The encoder in \Cref{sec:ssae-encoder} estimates the smooth content
at each patch from its neighbors, without observing the target patch's
activation. It projects the neighbors' mean activation to form a query
and their individual activations to form keys and values. Attention
then combines this information into the smooth code. We call this
estimator \emph{context attention} in the ablations below.
We arrange the neighbors in $12$ fixed positions, add learned
relative-position embeddings to the keys and values, and use a learned
vector for positions outside the grid. The attention layer has four
heads and inner width $c_S/4$, with LayerNorm applied to its inputs.
Its output is added to the query, then rectified and sparsified.

We compare this estimator with two alternatives to examine the role
of neighboring activations and how they are combined. The \emph{static}
estimator obtains the smooth code directly by projecting and
sparsifying the target patch's own activation.
The \emph{convolutional} estimator applies a depthwise convolution
over $12$ neighbors with a learned kernel
$\mathbf{W}\in\R^{12\times d}$, before projecting and sparsifying
the result. The kernel weights each coordinate separately,
so the convolution combines spatial information without mixing coordinates.
We use zero padding and initialize all kernel entries to $1/12$.
The row $\mathbf{W}_{\delta}$ corresponds to relative offset $\delta$.
\Cref{alg:ssae-static,alg:ssae-conv} give both forward passes,
using the encoders and dictionary blocks defined above and the fixed
graph's edge set $\mathcal{E}$.

All three estimators use the same innovation step: the innovation
encoder receives $\mathbf{H}-\sg(\mathbf{S}\mathbf{D}_S)$, with
gradients stopped through the smooth reconstruction.

\paragraph{Spectral regularization and objective.}
We implement a width-normalized variant of the objective in
\Cref{def:spatial-sae}, using separate dictionary blocks for the smooth and
innovation streams. For activations $\mathbf{H}\in\R^{p\times d}$,
let $(\mathbf{S}_{\saeparams},\mathbf{R}_{\saeparams})
=F_{\saeparams}(\mathbf{H}\,;\,\mathbf{L})$ and
$\widehat{\mathbf{H}}_{\saeparams}
=\mathbf{S}_{\saeparams}\mathbf{D}_S+\mathbf{R}_{\saeparams}\mathbf{D}_R$.
We suppress the encoder subscript when defining the loss components.
The reconstruction and sparsity losses are
\begin{equation*}
    \begin{aligned}
        \mathcal{L}_{\mathrm{rec}}(\mathbf{H},\widehat{\mathbf{H}})
        &=\frac{1}{pd}\norm{\mathbf{H}-\widehat{\mathbf{H}}}_F^2,\\
        \mathcal{L}_{\ell_1}(\mathbf{S},\mathbf{R};\lambda_S,\lambda_R)
        &=\frac{\lambda_S}{p}\norm{\mathbf{S}}_1
         +\frac{\lambda_R}{p}\norm{\mathbf{R}}_1.
    \end{aligned}
\end{equation*}
Thus reconstruction is averaged over activation entries, and each
sparsity penalty is averaged over patches. For ReLU and JumpReLU,
the controller described above adjusts $\lambda_S$ and $\lambda_R$
independently. TopK and BatchTopK enforce sparsity through the encoder
and set $\lambda_S=\lambda_R=0$.
The normalizability result in \Cref{prop:mflrh-interaction-signs} applies to
the continuous prior with positive $\ell_1$ coefficients. The TopK and
BatchTopK implementations instead use encoder-enforced sparsity with the
same spatial regularizers.

Let $\mathbf{L}$ be the Laplacian of the fixed patch graph used
throughout the experiments, and define
$\widetilde{\mathbf{L}}=\lambda_{\max}(\mathbf{L})\mathbf{I}_p-\mathbf{L}$.
The spectral loss is
\begin{equation*}
    \mathcal{L}_{\mathrm{spec}}(\mathbf{S},\mathbf{R};\mathbf{L},\mu)
    =\mu\left[
        \frac{\Tr(\mathbf{S}^\top\mathbf{L}\mathbf{S})}{c_S}
        +\frac{\Tr(\mathbf{R}^\top\widetilde{\mathbf{L}}\mathbf{R})}{c_R}
    \right].
\end{equation*}
The two terms penalize high-frequency energy in the smooth stream
and low-frequency energy in the innovation stream, respectively,
with each normalized by its dictionary width. Here $\mu$ weights
these normalized terms; setting $\mu=0$ removes both penalties.
When $c_S\neq c_R$, normalization changes the relative weighting of the two
spectral penalties compared with \Cref{eq:spatialsae-objective}. These positive
rescalings preserve the attraction/repulsion signs and nonnegativity of the
quadratic terms.
We compute the eigendecomposition of $\mathbf{L}$ once and reuse it
for regularization and spectral metrics.

The complete training objective is
\begin{equation}
    \min_{\saeparams,\mathbf{D}}
    \frac{1}{\abs{\mathcal{X}}}
    \sum_{\mathbf{H}\in\mathcal{X}}\Bigl[
        \mathcal{L}_{\mathrm{rec}}(\mathbf{H},\widehat{\mathbf{H}}_{\saeparams})
        +\mathcal{L}_{\ell_1}(\mathbf{S}_{\saeparams},\mathbf{R}_{\saeparams};
            \lambda_S,\lambda_R)
        +\mathcal{L}_{\mathrm{spec}}(\mathbf{S}_{\saeparams},\mathbf{R}_{\saeparams};
            \mathbf{L},\mu)
    \Bigr]+\text{ reanim.}
    \label{eq:app-vision-objective}
\end{equation}

\subsection{Evaluation Metrics and Spectral Diagnostics}
\label{app:metrics}

Because the concepts that generate real vision-model activations are
unobserved, no single score can establish \SAE quality. We therefore evaluate
three complementary properties: reconstruction fidelity, alignment of
individual features with annotated concepts, and spatial coherence of the
complete code. We then use graph-frequency diagnostics to characterize 
the energy distribution of the learned codes relative to the true concept
masks.

\definecolor{rubricaccent}{HTML}{456B73}
\begin{rubric}[t]
\centering
\begin{tcolorbox}[
    enhanced,
    title={Monosemanticity rubric (integer score from 1 to 10)},
    fonttitle=\bfseries,
    coltitle=black,
    colbacktitle=rubricaccent!12,
    colback=white,
    colframe=rubricaccent,
    boxrule=0.7pt,
    arc=1.5mm,
    left=0.9em,
    right=0.9em,
    top=0.65em,
    bottom=0.65em,
    before skip=0.9em,
    after skip=0.9em
]
\small
\emph{How consistently is the active region the same object or concept
across the seven rows?}
\medskip

\renewcommand{\arraystretch}{1.13}
\begin{tabularx}{\linewidth}{@{}>{\bfseries}p{0.08\linewidth}X@{}}
10 & One clear concept; every row fits. \\
8--9 & One clear concept; all rows fit except one near miss of that same
concept. \\
6--7 & One concept covers most rows; the remainder are unrelated or
ambiguous. \\
4--5 & About half the rows share a concept, or two distinct groups have
comparable size. \\
2--3 & Only a loose thematic resemblance, or three or more distinct groups. \\
1 & No discernible commonality. \\
\end{tabularx}
\end{tcolorbox}
\caption{\textbf{Monosemanticity scoring rubric.}
Scores measure whether one coherent concept explains the highlighted regions
across the panel.}
\label{rubric:monosemanticity}
\end{rubric}

\paragraph{Evaluation protocol.}
All metrics are computed on held-out data. For an activation matrix
$\mathbf H\in\R^{n\times d}$, let $\widehat{\mathbf H}$ be its reconstruction
and let $\mathbf Z\in\R^{n\times c}$ be its sparse code. For \SpatialSAE,
$\mathbf Z$ concatenates the smooth and innovation codes, and feature-level
metrics consider features from both streams. We first aggregate each metric
over its evaluation samples and then average the resulting scores across
training seeds.

\paragraph{Reconstruction fidelity.}
We measure the fraction of activation variance preserved by the
reconstruction. For a batch of patch activations, let
$\boldsymbol\mu_{\mathbf H}=n^{-1}\mathbf H^\top\mathbf 1$ be the mean
activation and let
$\overline{\mathbf H}=\mathbf 1\boldsymbol\mu_{\mathbf H}^{\top}$ repeat that
mean at every patch. We define
\begin{equation}
    R^2(\mathbf H,\widehat{\mathbf H})
    =1-\frac{\norm{\mathbf H-\widehat{\mathbf H}}_F^2}
             {\norm{\mathbf H-\overline{\mathbf H}}_F^2}.
    \label{eq:app-real-r2}
\end{equation}
We compute this coefficient separately for each evaluation batch, average it
across batches, and report the result as a percentage.

\paragraph{Concept alignment.}
A feature aligns with a concept when its activation mask agrees with the
concept's segmentation mask. For an arbitrary image, let
$\mathbf Y\in\{0,1\}^{p\times r}$ contain the patch-aligned masks of the $r$
semantic classes and let $\mathbf Z\in\R^{p\times c}$ be its sparse code. We
binarize all feature maps at their native firing threshold,
$\mathbf A=\Ind\{\mathbf Z>0\}$. For class $\ell$ and feature $j$, we collect
the columns $\mathbf Y_{:\ell}$ and $\mathbf A_{:j}$ across the images that
contain class $\ell$, concatenate them across patches, and compute
\begin{equation}
    \mathrm{F1}_{\ell j}=\frac{2\mathrm{TP}_{\ell j}}
    {2\mathrm{TP}_{\ell j}+\mathrm{FP}_{\ell j}+\mathrm{FN}_{\ell j}}.
    \label{eq:app-concept-f1}
\end{equation}
Each class is represented by its best-aligned feature,
$j_\ell^\star=\argmax_j\mathrm{F1}_{\ell j}$. Concept F1 macro-averages these
class-best scores:
\begin{equation}
    \operatorname{Concept\ F1}
    =\frac{100}{\abs{\mathcal C}}
      \sum_{\ell\in\mathcal C}\mathrm{F1}_{\ell j_\ell^\star}.
    \label{eq:app-concept-f1-aggregate}
\end{equation}
Here $\mathcal C$ is the set of annotated classes represented in the
evaluation data. We refer to this aggregate as concept F1, or simply F1 in
figures and tables.

\begin{rubric}[t]
\centering
\begin{tcolorbox}[
    enhanced,
    title={Map-quality rubric (integer score from 1 to 10)},
    fonttitle=\bfseries,
    coltitle=black,
    colbacktitle=rubricaccent!12,
    colback=white,
    colframe=rubricaccent,
    boxrule=0.7pt,
    arc=1.5mm,
    left=0.9em,
    right=0.9em,
    top=0.65em,
    bottom=0.65em,
    before skip=0.9em,
    after skip=0.9em
]
\small
\emph{How well does the activation map model the object or concept named in
the description, across the seven rows?} Judge \textbf{coverage} of the
whole object, \textbf{tightness} at its extent, and \textbf{consistency}
across rows jointly.
\medskip

\renewcommand{\arraystretch}{1.13}
\begin{tabularx}{\linewidth}{@{}>{\bfseries}p{0.08\linewidth}X@{}}
10 & In essentially every row, the active region covers the object completely
and tightly. \\
8--9 & The object is covered well in most rows, with only small spill or
gaps. \\
6--7 & The map is on the object in most rows but is clearly partial or loose. \\
4--5 & The map is on the object about half the time and is otherwise diffuse,
scattered, or on other things. \\
2--3 & The object is visible, but the map mostly misses it or covers the whole
image. \\
1 & No usable localization, such as a whole-image response, border artifact,
or noise; alternatively, no consistent object exists to model. \\
\end{tabularx}
\medskip

\footnotesize\emph{The target is the concept inferred by the rater, not the
ground-truth segmentation class. When no consistent concept exists, map
quality is judged against whatever the description nevertheless identifies.}
\end{tcolorbox}
\caption{\textbf{Map-quality scoring rubric.}
Scores measure the coverage, tightness, and consistency of the highlighted
regions relative to the concept inferred by the rater.}
\label{rubric:map-quality}
\end{rubric}
\paragraph{Normalized concept F1.}
Concept F1 depends partly on whether the frozen backbone makes a class
linearly accessible at all. We therefore train a linear probe on the same
backbone activations and compute its class-wise F1 against the same masks.
For the classes that are reliably decoded by the probe,
$\mathcal C_{\mathrm{dec}}=
\{\ell:\mathrm{F1}^{\mathrm{probe}}_\ell>0.6\}$, we report the ratio between the
class-best feature and probe scores:
\begin{equation}
    \operatorname{Normalized\ F1}
    =\frac{100}{\abs{\mathcal C_{\mathrm{dec}}}}
      \sum_{\ell\in\mathcal C_{\mathrm{dec}}}
      \frac{\mathrm{F1}_{\ell j_\ell^\star}}
           {\mathrm{F1}^{\mathrm{probe}}_\ell}.
    \label{eq:app-normalized-f1}
\end{equation}
Normalized F1, sometimes abbreviated \emph{Norm.\ F1} in figures and tables, thus
measures how much of a class's linearly decodable information is captured by
a single \SAE feature.

\paragraph{Region coherence.}
Concept F1 assesses individual features; region coherence asks whether the
complete sparse code respects the spatial extent of annotated concepts. We
align each segmentation mask to the patch grid, discard unlabeled and
boundary patches, and group the remaining patches by semantic region. For an
arbitrary image, let $\mathcal P$ be the retained patch indices and let
$\mathbf Z_{\mathcal P,:}$ be their code matrix. For \SpatialSAE, this matrix
concatenates the two streams. We center its columns across patches,
\begin{equation}
    \widetilde{\mathbf Z}
    =\left(\mathbf I_{\abs{\mathcal P}}
      -\frac{1}{\abs{\mathcal P}}\mathbf 1\mathbf 1^\top\right)
      \mathbf Z_{\mathcal P,:}.
\end{equation}
We compare every pair of nonzero rows of $\widetilde{\mathbf Z}$ by cosine
similarity and group pairs by their grid distance. Within each image, we
average similarities at distance $\delta$ separately for pairs from the same
and different semantic regions. We then average both quantities across
images, denoting the results by $s_{\mathrm{within}}(\delta)$ and
$s_{\mathrm{between}}(\delta)$. Region coherence is their average
matched-distance difference,
\begin{equation}
    \operatorname{Coherence}
    =\frac{100}{\abs{\Delta}}\sum_{\delta\in\Delta}
      \left[s_{\mathrm{within}}(\delta)
      -s_{\mathrm{between}}(\delta)\right],
    \label{eq:app-region-coherence}
\end{equation}
where $\Delta$ contains the grid distances represented by both pair types.
We average within-region similarities equally across regions and
between-region similarities equally across region pairs before averaging
over images. Matching distance removes the confound that patches from the
same region are generally closer; a larger score indicates that the code more
faithfully preserves semantic regions.

\paragraph{Concept-relative code spectra
(\Cref{fig:wild-main}b and \Cref{fig:app-coco-results}b).}
The preceding metrics measure \SAE performance; we next examine how code
energy is distributed across spatial scales. Our hypothesis predicts a
specific spectral signature of concept fragmentation: when a spatially
coherent concept is split across patch-local features, the code allocates
more energy to high graph frequencies than the corresponding concept masks.
Comparing the two spectra therefore tests whether the learned code reflects
the spatial scales present in the annotated concepts.

Let $\mathbf L=\mathbf U\boldsymbol\Lambda\mathbf U^\top$ be the
eigendecomposition of the combinatorial Laplacian of the patch graph, with
graph-Fourier modes ordered by increasing eigenvalue. For an arbitrary image,
let $\mathbf Z\in\R^{p\times c}$ be its code matrix and let
$\mathbf M\in\R^{p\times r}$ contain its patch-aligned concept masks. The
graph-Fourier energy across all features and concepts is
\begin{align}
    \mathbf e_{\mathrm{code}}(\mathbf Z)
      &=\operatorname{diag}\!\left(
        \mathbf U^\top\mathbf Z\mathbf Z^\top\mathbf U\right),
      \label{eq:app-code-concept-spectrum}\\
    \mathbf e_{\mathrm{concept}}(\mathbf M)
      &=\operatorname{diag}\!\left(
        \mathbf U^\top\mathbf M\mathbf M^\top\mathbf U\right).
      \label{eq:app-concept-spectrum}
\end{align}
Here $\operatorname{diag}$ returns the diagonal as a vector, so each entry
records the energy assigned to one graph-Fourier mode. We compute these
vectors for every evaluation image and sum them componentwise across images.
Denote the resulting dataset-level vectors by $\mathbf e_{\mathrm{code}}$ and
$\mathbf e_{\mathrm{concept}}$; the former includes features from both
\SpatialSAE streams. After discarding their constant-mode entries, let
$\mathbf e_{\mathrm{code}}^\circ$ and
$\mathbf e_{\mathrm{concept}}^\circ$ denote the remaining vectors. We
normalize them to unit mass,
\begin{equation*}
    \boldsymbol\pi_{\mathrm{code}}
    =\frac{\mathbf e_{\mathrm{code}}^\circ}
           {\mathbf 1^\top\mathbf e_{\mathrm{code}}^\circ},
    \qquad
    \boldsymbol\pi_{\mathrm{concept}}
    =\frac{\mathbf e_{\mathrm{concept}}^\circ}
           {\mathbf 1^\top\mathbf e_{\mathrm{concept}}^\circ}.
\end{equation*}
Let $\mathcal S$ denote the common smoothing operator applied to these
discrete distributions. The plotted concept-relative spectrum is the vector
\begin{equation}
    \mathbf r
    =\log_2\!\left(
      \mathcal S(\boldsymbol\pi_{\mathrm{code}})
      \oslash
      \mathcal S(\boldsymbol\pi_{\mathrm{concept}})\right),
    \label{eq:app-relative-spectral-density}
\end{equation}
where both $\log_2$ and $\oslash$ act elementwise. The entry associated with
mode $k$ is plotted at normalized rank $q_k=k/p$. Zero indicates that codes
and concept masks allocate the same fraction of nonconstant energy to a
spatial frequency; positive values indicate excess code energy, and negative
values indicate under-representation. We average $\mathbf r$ across training
seeds and use its componentwise standard deviation for the uncertainty bands.

\paragraph{Per-feature frequency and dictionary visualization
(\Cref{fig:wild-main}d and \Cref{fig:app-coco-results}d).}
The concept-relative spectrum summarizes the code as a whole, but does not
show how spatial frequency is distributed across individual features or
decoder directions. We therefore assign each feature an empirical graph
frequency. For an arbitrary image with code matrix
$\mathbf Z\in\R^{p\times c}$, we first remove each feature's spatial mean,
\begin{equation}
    \widetilde{\mathbf Z}
    =\left(\mathbf I-\frac{1}{p}\mathbf 1\mathbf 1^\top\right)
      \mathbf Z.
\end{equation}
Its per-feature Dirichlet and nonconstant energies are the vectors
\begin{equation}
    \mathbf d(\widetilde{\mathbf Z})
    =\operatorname{diag}(\widetilde{\mathbf Z}^{\top}
      \mathbf L\widetilde{\mathbf Z}),
    \qquad
    \mathbf a(\widetilde{\mathbf Z})
    =\operatorname{diag}(\widetilde{\mathbf Z}^{\top}
      \widetilde{\mathbf Z}).
\end{equation}
We compute both vectors for every evaluation image and sum them componentwise,
obtaining the aggregated vectors $\mathbf d_{\mathrm{agg}}$ and
$\mathbf a_{\mathrm{agg}}$. For features with nonzero accumulated energy, and
with $\lambda_{\max}$ denoting the largest eigenvalue of the patch-graph
Laplacian, the frequency vector is
\begin{equation}
    \mathbf f
    =\frac{1}{\lambda_{\max}}
      \mathbf d_{\mathrm{agg}}\oslash\mathbf a_{\mathrm{agg}},
    \label{eq:app-atom-frequency}
\end{equation}
where $\oslash$ denotes elementwise division. Each entry is the normalized,
activation-energy-weighted mean graph frequency of one feature. Low values
indicate spatially smooth activation maps, whereas high values indicate
rapidly varying maps.

\subsection{Automated Interpretability}
\label{app:real-autointerp}

Concept F1 measures alignment with a predefined semantic class, but does not
establish whether a feature represents one coherent concept across its strongest
examples or whether its activation maps localize that concept cleanly. We
therefore complement concept F1 with an automated interpretability evaluation on
both \ADETwentyK and \COCOStuff. The two evaluations use the same feature
selection, panel construction, anonymization, rater, and scoring rubrics; they
differ only in their class cohorts and held-out image pools.

\begin{table}[t]
\centering
\small
\definecolor{ablationaccent}{HTML}{456B73}
\setlength{\tabcolsep}{5pt}
\renewcommand{\arraystretch}{1.3}
\begin{tabularx}{\linewidth}{c*{4}{>{\centering\arraybackslash}X}}
\toprule
& \multicolumn{3}{c}{\textbf{Two Streams} $-$ \textbf{Single Stream}}
& \textbf{Norm.\ F1} \\
\cmidrule(lr){2-4}\cmidrule(l){5-5}
$\mu$ & $\Delta R^2$ & $\Delta$ Norm.\ F1
& $\Delta$ Region coh. & Win rate $\uparrow$ \\
\midrule
\rowcolor{ablationaccent!10}
$10^{-3}$ & $-0.80\pm0.63$ & $\mathbf{+6.12}\pm1.86$
& $-2.90\pm4.07$ & $\mathbf{100\%}$ \\
$10^{-2}$ & $-7.25\pm3.19$ & $+0.23\pm3.33$
& $-4.49\pm3.79$ & $57\%$ \\
$10^{-1}$ & $-5.12\pm2.59$ & $-5.60\pm4.30$
& $+5.51\pm12.21$ & $14\%$ \\
\bottomrule
\end{tabularx}
\caption{\textbf{Effect of code decomposition at matched regularization
strength.} Static TopK on \ADETwentyK, $\mathbb{E}[L_0]=16$.
The single-stream control applies the attractive Laplacian penalty to the
complete code; the two-stream model additionally represents an innovation
field with the repulsive penalty. Differences are two-stream minus
single-stream at the same $\mu$, reported as mean $\pm$ sample standard
deviation across seven budget allocations and two initializations each.
$R^2$ and Normalized F1 changes are in percentage points; coherence changes
are multiplied by $100$. The last column reports the percentage with higher
Normalized F1, rounded to the nearest integer. Shading marks the selected
regularization strength.}
\label{tab:abl-static}
\end{table}
\paragraph{Panel construction and scoring.}
For both datasets, we evaluate the seed-$0$ baseline and \SpatialSAE runs for
all four sparsifiers. For each semantic class and model family, we select that
model's own class-best feature $j_\ell^\star$ from \Cref{eq:app-concept-f1-aggregate}, taken from its full code (for \SpatialSAE, the concatenated smooth and innovation streams). We
then rank the held-out images by the feature's activation summed over patches
and retain the seven highest-ranked images, ordered from strongest to weakest.

Each row of a panel places the $224\times224$ model input beside the same image
with its $16\times16$ activation map overlaid by nearest-neighbor upsampling.
Red opacity is proportional to activation after normalization by the maximum
activation in that image, and a dark outline marks the active region. Panels
contain no class names, captions, or scores and use randomized class
identifiers.

Claude Haiku 4.5 \citep{anthropic2025haiku45}, at low reasoning effort, views
each panel and returns a short description, an integer \emph{monosemanticity}
score, and an integer \emph{map-quality} score. The rater is instructed to
describe the highlighted regions rather than the whole image and is not told
the semantic class, feature F1, model family, or sparsifier. Monosemanticity
measures whether one inferred concept explains the active regions across the
seven images. Map quality measures the coverage, tightness, and consistency of
those regions relative to the concept inferred by the rater. It is therefore a
self-consistent localization score rather than agreement with the
ground-truth segmentation mask. Both scores use an integer $1$--$10$ scale;
the scoring criteria are given in
\Cref{rubric:monosemanticity,rubric:map-quality}. Full panel-generation and
rater instructions are included in the supplementary material.

\paragraph{Dataset cohorts.}
For \ADETwentyK, we begin with all $150$ semantic classes and exclude the $18$
classes whose best-feature F1 is zero for either model family. This leaves
$132$ classes for each sparsifier and produces
$132\times4\times2=1{,}056$ panels, assigned to $132$ eight-panel rater
instances.

For \COCOStuff, we use all $171$ official semantic classes. Every class has
positive best-feature F1 for both model families under every sparsifier, so no
classes are excluded. This produces $171\times4\times2=1{,}368$ panels,
assigned to $171$ eight-panel rater instances. The panels are drawn from a
held-out activation-cache split of $6{,}218$ images, constructed as a seeded
random $5\%$ subset of the $152$ COCO train2017 shards. This split is not the
official COCO val2017 split. Inputs and activation maps retain the same
$224\times224$ and $16\times16$ geometry used for \ADETwentyK.

\paragraph{Assignment and quality control.}
Within each dataset, panels are assigned to rater instances using a seeded
permutation of the class identifiers. \COCOStuff uses its own seed-$0$
permutation, independent of the \ADETwentyK permutation. Every eight-panel
instance contains distinct classes and therefore never includes both model
families for the same class.

The intended protocol assigns one rating to each panel. Every \ADETwentyK
panel received exactly one rating. For \COCOStuff, the procedure produced
$1{,}370$ valid ratings for $1{,}368$ panels because two panels were
inadvertently rated twice. Each duplicate pair received identical integer
scores, and we average the two ratings before aggregation. No malformed
records were produced. The rater was forbidden from accessing other files or
outputs; although the storage paths contained model and sparsifier tokens,
their meanings were not provided.

\paragraph{Aggregation and uncertainty.}
For each displayed bar, we average over the corresponding class cohort: $132$
classes for \ADETwentyK and $171$ classes for \COCOStuff. We construct $95\%$
percentile intervals from $2{,}000$ class-bootstrap resamples, performed
independently for each model family. These intervals quantify variation across
semantic classes, not variation across training seeds, ratings, or raters.
Scores retain their native $1$--$10$ scale, although the figure axes begin at
zero for display. The resulting \ADETwentyK and \COCOStuff evaluations are
reported in \Cref{fig:wild-main}g--h and
\Cref{fig:app-coco-results}g--h, respectively.

\begin{table}[t]
\centering
\footnotesize
\setlength{\tabcolsep}{2pt}
\renewcommand{\arraystretch}{2.2}
\newcommand{\ablstat}[3]{{\renewcommand{\arraystretch}{0.9}\begin{tabular}[c]{@{}c@{}}$#1$\\[-1pt]{\scriptsize\color{black!65}$[#2,#3]$}\end{tabular}}}
\begin{tabular*}{\linewidth}{@{\extracolsep{\fill}}cccccccc@{}}
\toprule
& \textbf{Transition}
& \multicolumn{2}{c}{\textbf{Reconstruction}}
& \multicolumn{2}{c}{\textbf{Normalized F1}}
& \multicolumn{2}{c}{\textbf{Coherence}} \\
\midrule
\multirow{3}{*}{\rotatebox[origin=c]{90}{\shortstack{\textbf{Regularization}\\\textbf{Strength $\mu$}}}}
& $0\to10^{-3}$
& \ablstat{-5.52}{-5.77}{-5.09} & $0\%$
& \ablstat{+6.53}{+4.77}{+6.94} & $\mathbf{100\%}$
& \ablstat{+2.51}{-2.69}{+4.10} & $86\%$ \\
& $10^{-3}\to10^{-2}$
& \ablstat{-25.91}{-29.85}{-23.46} & $0\%$
& \ablstat{+6.75}{+2.60}{+8.61} & $86\%$
& \ablstat{+0.13}{-1.23}{+2.10} & $64\%$ \\
& $10^{-2}\to10^{-1}$
& \ablstat{-27.94}{-28.20}{-27.23} & $0\%$
& \ablstat{-43.14}{-47.46}{-35.75} & $0\%$
& \ablstat{+16.04}{+7.95}{+24.17} & $\mathbf{100\%}$ \\
\cmidrule(lr){2-8}
\multirow{3}{*}{\rotatebox[origin=c]{90}{\shortstack{\textbf{Smooth Field}\\\textbf{Estimator}}}}
& $\mathrm{S}\to\mathrm{C}$
& \ablstat{-3.52}{-6.90}{-1.78} & $0\%$
& \ablstat{-0.34}{-1.79}{+1.01} & $43\%$
& \ablstat{+2.23}{+1.21}{+3.00} & $86\%$ \\
& $\mathrm{S}\to\mathrm{CA}$
& \ablstat{-2.55}{-5.16}{-1.48} & $0\%$
& \ablstat{+1.95}{+0.09}{+5.11} & $71\%$
& \ablstat{+2.69}{+2.14}{+3.88} & $93\%$ \\
& $\mathrm{C}\to\mathrm{CA}$
& \ablstat{+0.85}{+0.42}{+1.55} & $79\%$
& \ablstat{+2.67}{+1.51}{+5.23} & $86\%$
& \ablstat{+0.79}{+0.16}{+0.94} & $79\%$ \\
\bottomrule
\end{tabular*}
\caption{\textbf{Effects of regularization and smooth-field estimation.}
TopK on \ADETwentyK, $\mathbb{E}[L_0]=16$. Differences are destination minus
source, matched by budget allocation and initialization ($14$ pairs).
For each metric, the left column reports the median change ($\Delta$),
with its interquartile range beneath; the right reports the win rate (WR),
the percentage of positive differences rounded to the nearest integer.
$R^2$ and Normalized F1 changes are in percentage
points; coherence changes are multiplied by $100$. Regularization
transitions use the static encoder; estimator transitions fix $\mu=10^{-3}$.
S: static; C: convolutional; CA: context attention.}
\label{tab:abl-transitions}
\end{table}

\subsection{Ablations, Model Selection, and Additional Results}
\label{app:real-ablations}

\SpatialSAE introduces four design choices whose contributions we isolate in
turn: the two-stream decomposition, the regularization strength, the
smooth-field estimator, and the allocation of the sparsity budget between the
two streams. Unless stated otherwise, these ablations use \ADETwentyK and a
TopK sparsifier with expected sparsity $16$ and dictionary width $c=3072$.
Each two-stream setting covers all seven budget allocations, with two parameter
initializations per allocation; the single-stream control has no allocation
parameter. We report $R^2$ and Normalized F1 as percentages and multiply region
coherence by $100$.

\begin{figure}[t]
    \centering
    \includegraphics[width=\linewidth,height=0.79\textheight,keepaspectratio]{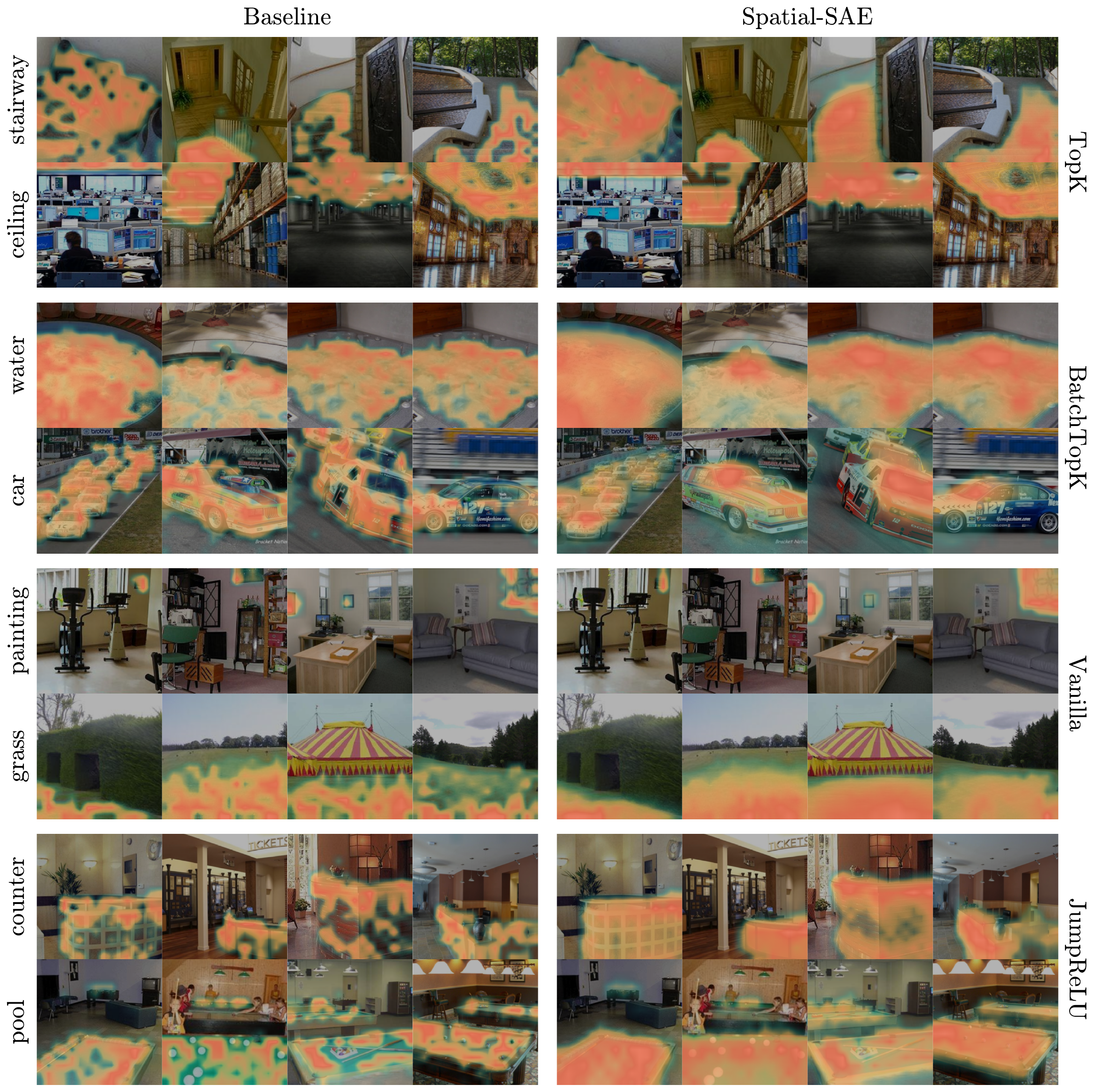}
    \caption{\textbf{Class-best feature maps on \ADETwentyK.}
    For each sparsifier (labels at right), two representative classes (labels
    at left) compare the baseline (left block) with \SpatialSAE (right
    block). Each model contributes its highest-concept-F1 feature for
    the labeled class. The four displayed images are selected for high
    per-image F1 of the baseline feature and then held fixed across models, so
    the selection deliberately favors the baseline. Warmer colors indicate
    stronger activation.}
    \label{fig:app-ade-maps}
\end{figure}
\paragraph{Effect of code decomposition under spectral regularization.}
The gains in \Cref{sec:wild-eval} could in principle come from smoothing the
code rather than from separating it into smooth and innovation fields. To
distinguish these explanations, \Cref{tab:abl-static} compares \SpatialSAE
with a single-stream control that applies the same attractive Laplacian penalty
to the entire code but has neither an innovation stream nor its repulsive
penalty. At the selected strength, $\mu=10^{-3}$, the decomposition increases
Norm.\ F1 by $6.12\pm1.86$ percentage points on average, and this difference is
positive in all $14$ matched comparisons. The corresponding mean differences
are $-0.80\pm0.63$ points in $R^2$ and $-2.90\pm4.07$ points in region
coherence. The latter is expected: the single-stream control applies the
attractive prior to its entire code and thus directly optimizes all of its
features for smoothness, whereas \SpatialSAE also maintains an innovation
stream under the repulsive prior. The purpose of the decomposition is therefore
not to make the entire code maximally smooth, but to improve concept alignment
while retaining a separate channel for high-frequency residual structure.

This advantage depends on the regularization regime; at $\mu=10^{-2}$, the
mean Normalized F1 difference narrows to $+0.23$ points, and at $10^{-1}$ it
favors the single-stream control. Even so, sweeping
$\mu\in\{10^{-3},10^{-2},10^{-1}\}$ reveals a higher attainable peak for the
decomposed model: the best single-stream setting reaches $66.0\%$ Normalized
F1 at $R^2=41.9\%$, whereas the best two-stream setting reaches $71.0\%$ at
$R^2=35.1\%$ (both at $\mu=10^{-2}$). The decomposition can therefore recover
more concept-aligned features, although reaching that peak incurs additional
reconstruction cost.

\paragraph{Effect of regularization strength.}
The upper block of \Cref{tab:abl-transitions} varies $\mu$ for the
two-stream model with the static estimator. Introducing spectral
regularization at $\mu=10^{-3}$ yields median changes of $+6.53$ points in
Normalized F1, $+2.51$ points in region coherence, and $-5.52$ points in
$R^2$. Across the matched pairs, the regularized model has higher Normalized
F1 in $100\%$ of comparisons and higher region coherence in $86\%$, but higher
$R^2$ in none. Increasing the strength from $10^{-3}$ to $10^{-2}$ yields
median changes of $+6.75$ points in Normalized F1, $+0.13$ points in region
coherence, and $-25.91$ points in $R^2$; the stronger setting is better on
these metrics in $86\%$, $64\%$, and $0\%$ of matched comparisons,
respectively. Pushing to $10^{-1}$ degenerates the model: the median changes
are $-43.14$ points in Normalized F1 and $-27.94$ points in $R^2$, with no
positive comparisons on either metric, even though region coherence increases
by a median $16.04$ points and is higher in every comparison.
We therefore use $\mu=10^{-3}$ in subsequent comparisons, choosing the first
large and fully consistent improvement in concept alignment without accepting
the much steeper reconstruction loss at $10^{-2}$.

\begin{figure}[t]
    \centering
    \includegraphics[width=\linewidth,height=0.79\textheight,keepaspectratio]{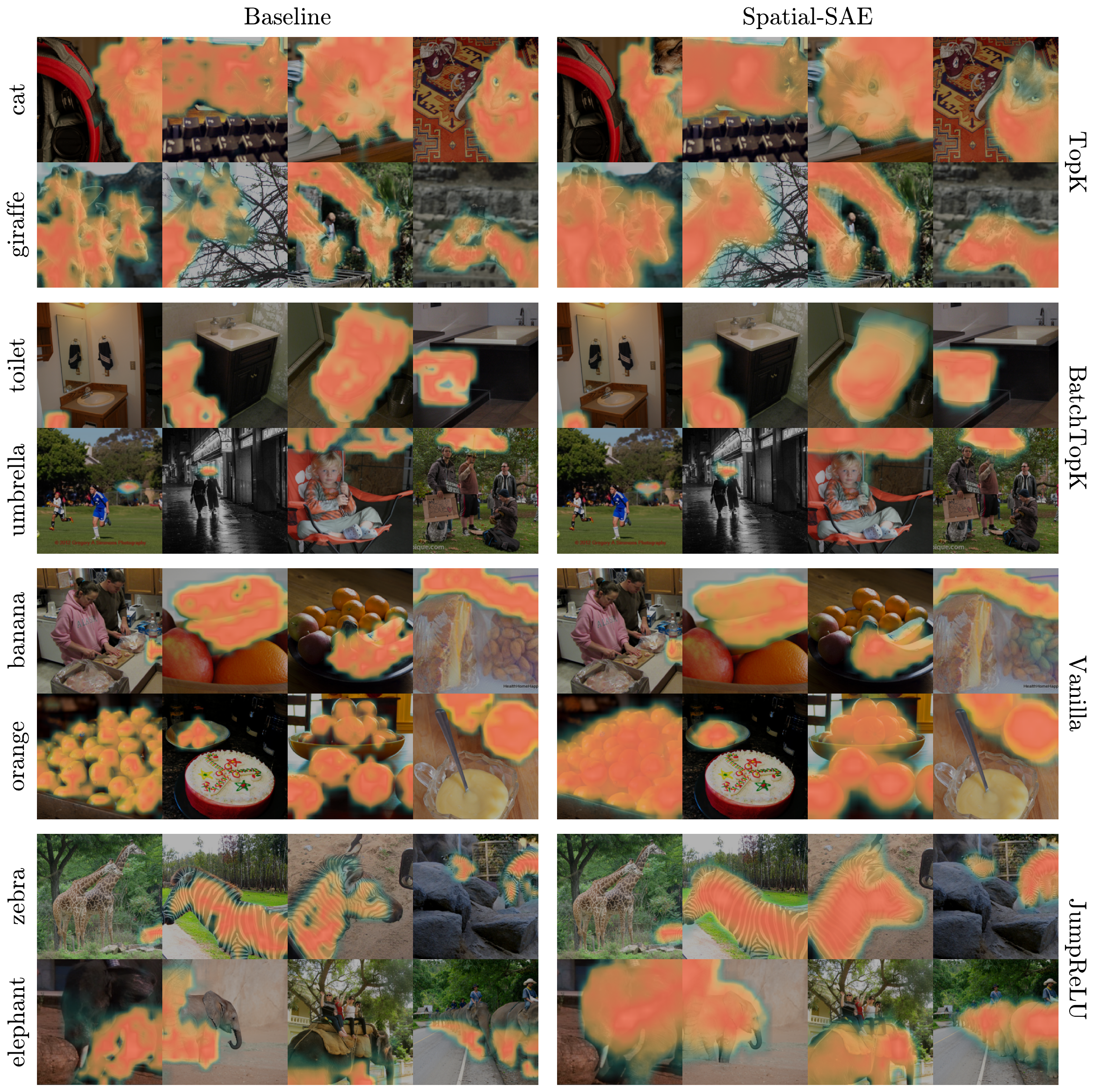}
    \caption{\textbf{Class-best feature maps on \COCOStuff.}
    For each sparsifier (labels at right), two representative classes (labels
    at left) compare the baseline (left block) with \SpatialSAE (right
    block). Each model contributes its highest-concept-F1 feature for
    the labeled class. Images are selected for high per-image F1 of the
    baseline feature and then held fixed across models, so the selection
    deliberately favors the baseline. Warmer colors indicate stronger activation.}
    \label{fig:app-coco-maps}
\end{figure}
\paragraph{Effect of the smooth-field estimator.}
The lower block of \Cref{tab:abl-transitions} compares the static,
convolutional, and context-attention estimators at $\mu=10^{-3}$. Relative to
the static estimator, convolution yields median changes of $-3.52$ points in
$R^2$, $-0.34$ points in Normalized F1, and $+2.23$ points in region
coherence. It outperforms the static estimator in $0\%$, $43\%$, and $86\%$ of
matched comparisons on these three metrics, respectively, so its benefit is
largely confined to coherence. Context attention offers a better tradeoff.
Relative to the static estimator, its median changes are $-2.55$ points in
$R^2$, $+1.95$ points in Normalized F1, and $+2.69$ points in coherence; it
outperforms the static estimator in $0\%$, $71\%$, and $93\%$ of comparisons,
respectively. Relative to convolution, all three median differences are
positive---$+0.85$, $+2.67$, and $+0.79$ points---and context attention is
better in $79\%$, $86\%$, and $79\%$ of comparisons. It therefore provides
the most consistent improvement in concept alignment and coherence, and we use
it in the final models.

\paragraph{Budget selection for each sparsifier.}
After fixing context attention and $\mu=10^{-3}$, we run a separate
seven-allocation sweep for each sparsifier on \ADETwentyK, evaluating every
split from $2{:}14$ through $14{:}2$ at total budget $k=16$. We choose from
each sweep in two stages: first retain every allocation whose Normalized F1 is
within $0.5$ percentage points of that sparsifier's best value, then select the
candidate with the highest $R^2$. The procedure selects
$k_S{:}k_R=14{:}2$ for TopK, $12{:}4$ for BatchTopK, $10{:}6$ for ReLU, and $14{:}2$
for JumpReLU. We retrain the selected configurations with seeds $0$--$4$; in
the \ADETwentyK sparsity-budget sweep, we preserve these selected proportions
while scaling the two stream budgets with the total budget.

\paragraph{Transfer to \COCOStuff.}
We transfer the model design and all training choices selected on \ADETwentyK
to \COCOStuff, including the context-attention estimator, $\mu=10^{-3}$,
dictionary width, total sparsity budget, per-sparsifier $k_S/k_R$ allocation,
and optimization procedure. Without further tuning on \COCOStuff, we retrain
each transferred configuration from scratch over five random seeds. Under
this direct transfer, \SpatialSAE improves Normalized F1 for all four
sparsifiers by $1.4$--$7.1$ points and region coherence by $6.5$--$9.0$ points
(\Cref{fig:app-coco-results}). Map quality and monosemanticity also improve in
every comparison, by $0.34$--$0.74$ and $0.14$--$0.73$, respectively, on their
native $1$--$10$ scales.

\paragraph{Qualitative feature maps.}
The examples in \Cref{fig:app-ade-maps,fig:app-coco-maps} are deliberately
selected in the baselines' favor. Each family contributes its own class-best
feature, but we select the displayed images specifically for high per-image F1
of the \emph{baseline}'s feature and then hold those images fixed for
\SpatialSAE. This is effectively reverse cherry-picking: the baseline is shown
on images where its feature already aligns well with the target, whereas
\SpatialSAE receives no such selection advantage.
Despite this baseline-favoring protocol, \SpatialSAE produces visibly more
complete and connected maps. On \ADETwentyK, baseline activations for extended
regions such as ceiling, water, grass, and pool often contain holes or break
into disconnected patches, while \SpatialSAE follows a larger continuous
portion of the region; the same contrast appears for bounded objects such as
cars and counters. On \COCOStuff, the spatial maps likewise follow more of
objects with varied shapes and scales, including cats, giraffes, fruit,
zebras, and elephants, instead of isolating a few discriminative fragments.
Because image selection is tilted toward the baselines, these qualitative
comparisons are a conservative visual counterpart to the aggregate
region-coherence and automated-interpretability gains.

\begin{figure}[t]
    \centering
    \includegraphics[width=\linewidth]{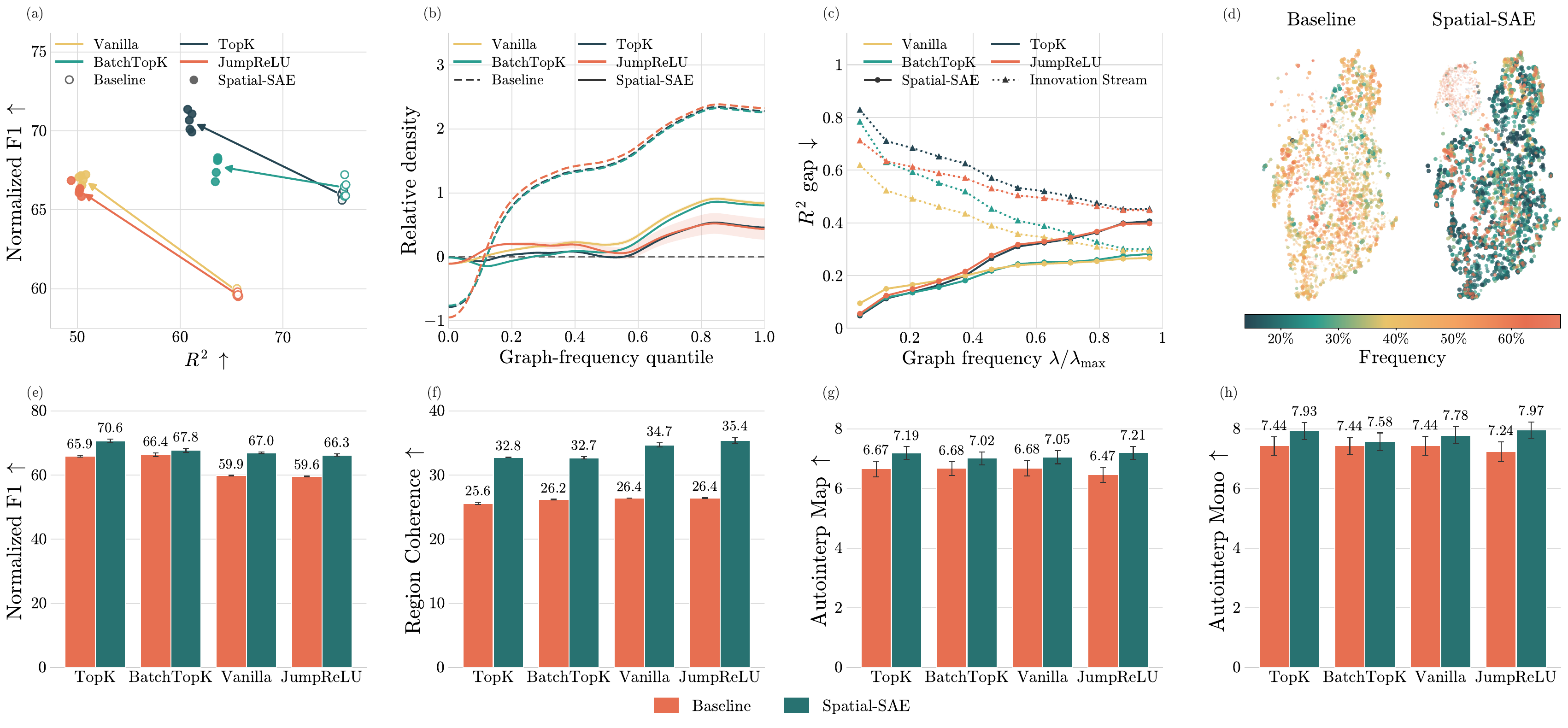}
    \caption{\textbf{Quantitative results on \COCOStuff.}
    (a) Normalized F1 against reconstruction $R^2$; points show training seeds
    and arrows connect model means.
    (b) Base-$2$ log ratio between code and concept-mask spectral density;
    zero denotes agreement, with dashed baseline and solid \SpatialSAE curves.
    (c) The baseline's $R^2$ advantage over the combined \SpatialSAE code
    (solid) and its innovation stream alone (dotted), resolved by graph
    frequency.
    (d) TopK decoder atoms in a joint \textsc{umap}, colored by activation-map
    frequency.
    (e--h) Normalized F1, region coherence, automated map quality, and
    monosemanticity. Error bars show one standard deviation across five training seeds in (e--f) and
    $95\%$ class-bootstrap intervals in (g--h).}
    \label{fig:app-coco-results}
\end{figure}

\paragraph{A shared spectral signature across datasets.}
\Cref{fig:wild-main}b and \Cref{fig:app-coco-results}b reveal a remarkably
stable failure mode. Across all four sparsifiers and on both datasets, the
baseline curves nearly coincide: relative to the concept masks, standard
\SAE codes underallocate spectral density to the lowest graph frequencies and
progressively overallocate it toward higher frequencies. At the upper end of
the spectrum, the baseline log ratios exceed $2$, corresponding to more than
four times the normalized spectral density of the concept masks. \SpatialSAE
moves every sparsifier substantially toward zero across the spectrum, largely
removing the low-frequency deficit and sharply reducing the high-frequency
excess. The correction survives tuning-free transfer to \COCOStuff, showing
that it is neither sparsifier- nor dataset-specific: spatial fragmentation has
a consistent spectral signature in standard \SAEs, and the structured prior
directly counteracts it.

\end{document}